\documentclass{article}

     \PassOptionsToPackage{numbers, compress}{natbib}

\usepackage[main, final]{neurips_2026}
\usepackage{amsmath}
\usepackage{amssymb}
\usepackage{mathtools}
\usepackage{amsthm}
\usepackage{graphicx}
\usepackage{wrapfig}
\usepackage{caption}
\usepackage{stmaryrd}
\theoremstyle{plain}
\newtheorem{theorem}{Theorem}[section]
\newtheorem{proposition}[theorem]{Proposition}

\theoremstyle{definition}

\theoremstyle{remark}
\newtheorem{remark}[theorem]{Remark}
\usepackage{fontawesome5}
\newcommand{\codelink}{\href{https://github.com/Ism-ail11/MCWC}{\faGithub\;MCWC}}

\usepackage[utf8]{inputenc} 
\usepackage[T1]{fontenc}    
\usepackage{hyperref}       
\usepackage{url}            
\usepackage{booktabs}       
\usepackage{amsfonts}       
\usepackage{nicefrac}       
\usepackage{microtype}      
\usepackage{xcolor}         
\usepackage[ruled,vlined,linesnumbered]{algorithm2e}

\usepackage{pgfplots}
\pgfplotsset{compat=1.18}  
\usepgfplotslibrary{
  groupplots,    
  fillbetween    
}

\usepackage{booktabs}        
\usepackage{multirow}        
\usepackage{tabularx}        
\usepackage{threeparttable}  
\usepackage{threeparttablex} 
\usepackage{siunitx}         
\usepackage{pgfplots}
\usepackage{pgfplotstable}
\usetikzlibrary{arrows.meta,positioning,fit,calc,patterns,decorations.pathreplacing}
\pgfplotsset{compat=1.18}
\definecolor{refpink}{RGB}{255,20,147}
\newcommand{\pinkref}[1]{\textcolor{refpink}{\ref{#1}}}
\usepackage{booktabs} \usepackage[table]{xcolor}
 \usepackage{multirow}
\usepackage{siunitx}
 \definecolor{best}{RGB}{198,239,206}   
 \definecolor{second}{RGB}{255,242,204} 
 
\pgfplotsset{
  mygrid/.style={grid=both,minor tick num=1,major grid style={opacity=0.25},minor grid style={opacity=0.1}},
  mylegend/.style={legend columns=2,legend cell align=left,legend pos=south east, fill=white, fill opacity=0.8, draw=none},
  mymark/.style={mark=*},
}
 \usepackage{tikz}
\title{Motion-Compensated Weight Compression}

\usepgfplotslibrary{fillbetween}      
\usetikzlibrary{intersections}        
\usetikzlibrary{arrows.meta,positioning,fit,calc,decorations.pathreplacing}
\usetikzlibrary{arrows.meta,positioning}
\definecolor{icmlBlue}{RGB}{42,92,172}
\definecolor{icmlOrange}{RGB}{219,112,46}
\definecolor{icmlGreen}{RGB}{33,140,70}
\definecolor{icmlPurple}{RGB}{128,94,168}
\definecolor{panelBG}{RGB}{248,249,251}
\definecolor{edgeGray}{RGB}{110,110,110}

\tikzset{
  >={Latex[length=2mm]},
  box/.style={draw, rounded corners=2pt, thick, align=center, inner sep=3pt, minimum height=8mm, fill=white},
  pane/.style={draw=edgeGray!40, rounded corners=3pt, fill=panelBG, inner sep=4pt},
  flow/.style={thick, draw=edgeGray, -{Latex[length=2mm]}},
  flowA/.style={thick, draw=icmlBlue, -{Latex[length=2mm]}},
  flowB/.style={thick, draw=icmlOrange, -{Latex[length=2mm]}},
  flowC/.style={thick, draw=icmlGreen, -{Latex[length=2mm]}},
  note/.style={draw=edgeGray, densely dashed, rounded corners=2pt, inner sep=2pt, font=\scriptsize, fill=white},
  tiny/.style={font=\scriptsize}
}
\definecolor{t3c}{RGB}{0,86,156}
 \newcommand{\best}[1]{\cellcolor{green!15}\textbf{#1}}
 \newcommand{\second}[1]{\cellcolor{yellow!18}\textbf{#1}}
\author{%
  Ismail Lamaakal \\
  Multidisciplinary Faculty of Nador\\
  Mohammed Premier University\\
  Oujda 60000, Morocco\\
  \texttt{ismail.lamaakal@ump.ac.ma} \\
}

\begin{document}

\maketitle

\begin{abstract}
Neural network weights are increasingly a bottleneck for deployment, yet most compression pipelines treat layers independently and overlook cross-layer redundancy induced by function-preserving symmetries. We propose \emph{Motion-Compensated Weight Compression (MCWC)}, a weight-only codec that aligns permutation-symmetric blocks (e.g., hidden units and attention heads) to maximize cross-layer correspondence, turning depth into a predictable sequence. In the aligned coordinate system, MCWC uses a lightweight layer-sequential predictor with periodic keyframes and encodes only quantized prediction residuals using a learned entropy model trained under a rate distortion objective. A simple decoder reconstructs deployable weights by entropy decoding, dequantization, predictor-driven reconstruction, and inverse alignment, enabling fast weight materialization for inference. Across Transformer language modeling and vision classification, MCWC improves the rate accuracy Pareto frontier over strong quantization and learned weight-codec baselines, while maintaining competitive decode time. Ablations confirm that alignment, prediction, entropy modeling, and keyframe scheduling are each necessary for the full gains. Our code is available via \codelink.
\end{abstract}

\section{Introduction} \label{sec:intro}
\begin{wrapfigure}[20]{r}{0.48\linewidth}
\vspace{-3mm}
\captionsetup{font=footnotesize,skip=2pt,width=\linewidth}
\centering
\includegraphics[width=0.48\textwidth]{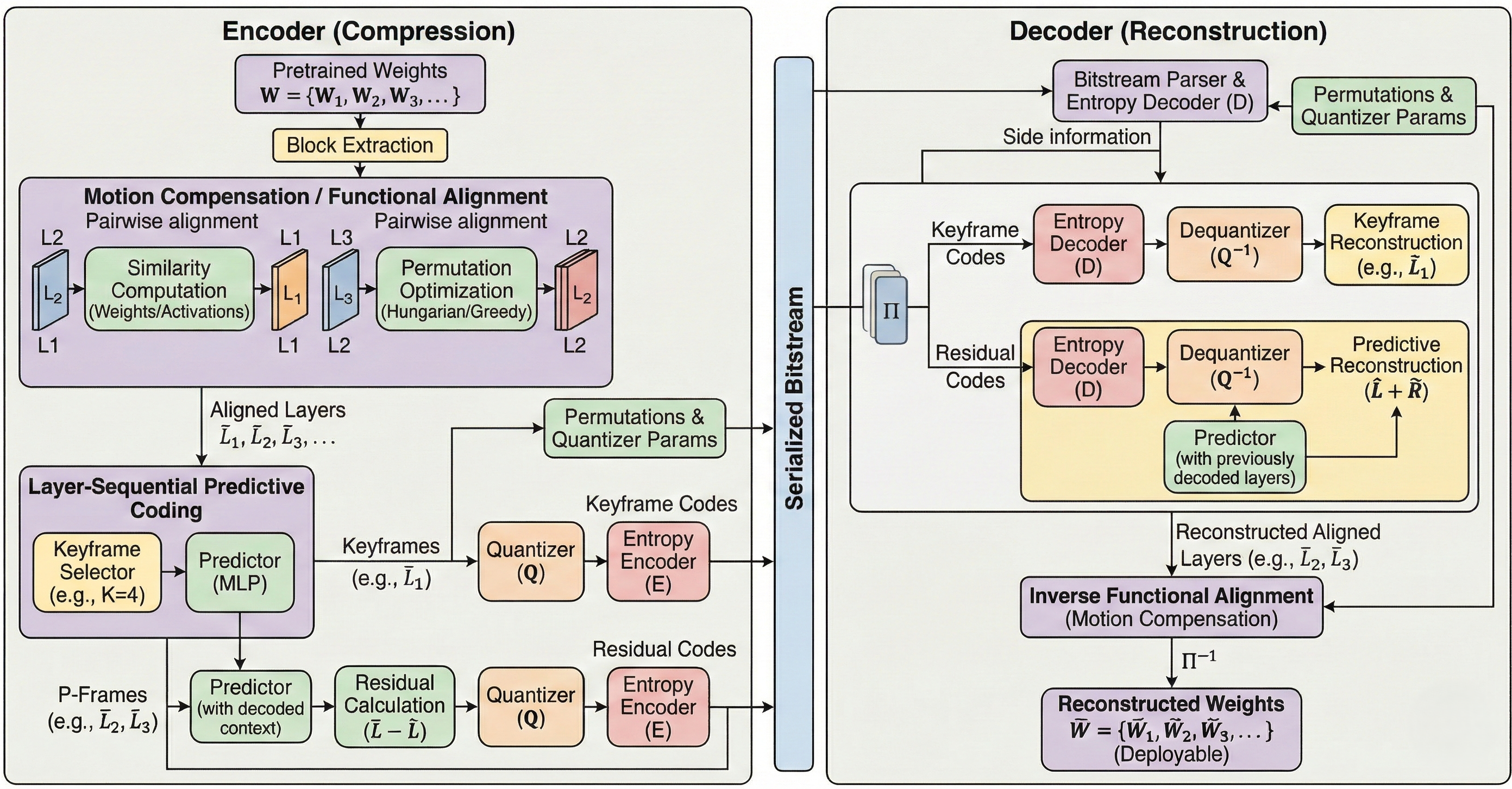}
\caption{\textbf{\textit{Layers-as-video overview of MCWC}}. The encoder transforms depth into a predictable sequence by aligning functionally equivalent blocks via permutation ($\Pi$). Keyframes (e.g., $\bar{L}_1$) are coded absolutely, while subsequent P-frames are encoded as quantized residuals relative to predictions derived from previously \emph{decoded} context (shown as the local decode loop). The decoder reconstructs aligned layers by adding entropy-decoded residuals to predictions, followed by inverse alignment ($\Pi^{-1}$) to restore the original tensor structure for deployment. \textbf{MCWC should not be evaluated as a replacement for low-precision inference kernels; it addresses the orthogonal problem of compact checkpoint storage, transport, and repeated materialization.}}
\label{fig:layers_as_video}
\end{wrapfigure}
Neural networks are increasingly deployed as multi-billion-parameter foundation models \citep{touvron2023llama2}, and the dominant deployment cost is often not compute but the storage, bandwidth, and cold-start overhead required to stage checkpoints into accelerators \citep{fu2024serverlessllm,sheng2023flexgen}. Existing compression pipelines typically treat each layer as an independent object and focus on local quantization or sparsification \citep{xiao2023smoothquant,frantar2022obc,frantar2023gptq,lin2023awq,frantar2023sparsegpt}, which leaves substantial redundancy untapped because modern architectures exhibit strong cross-layer similarity and large function-preserving symmetries \citep{entezari2021permutation,ainsworth2022gitrebasin}. MCWC views depth as a sequential signal and compresses pretrained weights by explicitly exploiting inter-layer predictability after removing symmetry-induced ``motion''.

The objective is weight-only compression for deployment: given pretrained parameters $W$, an encoder produces a bitstream $b$ and a decoder reconstructs deployable weights $\tilde{W}(b)$ such that task performance is preserved while the serialized representation is compact and decoding is fast. The central premise is that many architectures admit re-indexings of hidden units, attention heads, or channel groups that leave the input--output function invariant when adjacent layers are compensated consistently \citep{entezari2021permutation,ainsworth2022gitrebasin}; MCWC uses this freedom to choose a coordinate system in which weights vary slowly across depth. In that aligned coordinate system, a lightweight layer-sequential predictor produces accurate ``P-frame'' predictions for most layers, and only a low-entropy residual stream needs to be quantized and entropy-coded, with periodic ``keyframes'' preventing drift (see Figure~\pinkref{fig:layers_as_video}).

This paper makes four contributions:
\begin{itemize}
    \item  We introduce motion compensation in weight space by selecting blockwise permutations that increase cross-layer correspondence while maintaining the represented function.
    
    \item  We propose a predictive codec with explicit keyframes and residual coding, treating depth as a temporal axis and turning aligned layers into predictable ``frames.''
    
    \item  We train quantization and entropy models end-to-end under a rate--distortion objective so that the discrete residual stream is accurate after decoding and low-entropy under a learned probability model.
    
    \item  We demonstrate improved Pareto frontiers on both language and vision models and provide ablations isolating the effects of alignment, prediction, entropy modeling, and keyframe frequency.
\end{itemize}

\textbf{\textit{MCWC is intended as a checkpoint storage and transport codec rather than an inference-speed method; Appendix~\pinkref{app:deployment_regime} details the deployment regimes, cost amortization model, and break-even analysis.}}

\section{Related Work}
\label{sec:related_work}

Weight compression is traditionally dominated by quantization and pruning.
Post-training quantization and quantization-aware training reduce storage by mapping weights to low-precision representations, often using calibration and per-channel scaling to limit accuracy loss \citep{li2021brecq,yao2022zeroquant,xiao2023smoothquant}.
For large language models, weight-only quantization has become a standard deployment path, including INT8 and 4-bit methods designed for fast inference \citep{dettmers2022llmint8,frantar2023gptq,lin2023awq}.
Pruning pipelines reduce parameter count via unstructured sparsity or structured removal, frequently combined with subsequent quantization and distillation to recover quality \citep{han2016deepcompression,frantar2023sparsegpt,sun2023wanda}.
These approaches are effective but typically operate locally within layers and do not explicitly model cross-layer predictability of weights after accounting for symmetries.

Learned weight codecs and entropy coding have emerged as an alternative viewpoint, where weights are mapped to discrete symbols that are entropy-coded under a learned probability model \citep{choi2018universal,wiedemann2019deepcabac,shi2025entrollm}.
Vector quantization and residual quantization provide codec-like representations, and rate--distortion objectives provide a mechanism for trading task loss against expected codelength \citep{choi2018universal,balle2018variational,minnen2018joint}.
MCWC follows this line but differs in the representation it feeds to the codec: it first removes permutation-induced motion so that depth-wise redundancy becomes visible to a sequential predictor and to an entropy model (see Appx.~\pinkref{app:predictor_entropy} for more details).

Permutation symmetries are a well-known property of multilayer perceptrons and Transformer substructures that admit re-indexings of hidden units, attention heads, or channel groups without changing the represented function when adjacent computations are compensated consistently \citep{entezari2021permutation,ainsworth2022gitrebasin}.
Prior work has leveraged these symmetries for neuron matching and functional alignment, commonly via assignment objectives solved by Hungarian matching or scalable differentiable approximations \citep{ainsworth2022gitrebasin,pena2022sinkhornrebasin}.
MCWC uses this symmetry as a compression primitive by choosing permutations that maximize cross-layer correspondence, analogously to motion estimation in video compression. Permutation overhead is reported as the complete serialized side-information
stream over all aligned tensor families and alignment groups, not as a single
global permutation per layer; Appendix~\pinkref{app:permutation_side_info} gives the
exact accounting.

Predictive coding is the core mechanism in classical video codecs, where inter-frame prediction turns smoothly varying content into low-entropy residuals while keyframes stabilize drift.
Several learned compression systems adopt similar principles for images and videos by learning predictors and entropy models jointly \citep{balle2018variational,minnen2018joint,lu2019dvc}.
MCWC imports the same conceptual structure into weight space by treating depth as a temporal axis, turning aligned layers into predictable ``frames'' and shifting the coding burden to residuals whose entropy is minimized by alignment, prediction, and a learned entropy model (see also Appx.~\pinkref{app:additional_related_work} for more related works).

\vspace{-0.3cm}
\section{Problem Setup}
\label{sec:problem_setup}

\subsection{Problem formulation}
\label{sec:problem_formulation}

Let a pretrained neural network be parameterized by weights $W = \{W_{\ell}\}_{\ell=1}^{L}$, where $W$ denotes the full set of pretrained parameters to be compressed, $L$ is the number of parameterized layers or blocks, and $W_{\ell}$ denotes the weight tensor(s) associated with layer $\ell$.

A compression system consists of an encoder $\mathcal{E}$ and a decoder $\mathcal{D}$, such that $b = \mathcal{E}(W)$ and $\tilde{W}(b) = \mathcal{D}(b)$. Here, $\mathcal{E}(\cdot)$ maps the original weights $W$ to a finite bitstream $b$, while $\mathcal{D}(\cdot)$ reconstructs a compressed set of weights $\tilde{W}(b)$ from the bitstream $b$. The notation $\tilde{W}(b)$ emphasizes that reconstruction is fully determined by the encoded bits.

Let $\mathcal{J}(\cdot)$ denote a task performance functional, such as validation accuracy, negative validation loss, or negative perplexity, with higher values indicating better performance under this convention. The performance drop induced by compression is $\Delta_{\mathcal{J}}(b) = \mathcal{J}(W) - \mathcal{J}\big(\tilde{W}(b)\big)$. Here, $\mathcal{J}(W)$ is the performance of the original pretrained model, $\mathcal{J}(\tilde{W}(b))$ is the performance after decoding the compressed model, and $\Delta_{\mathcal{J}}(b)$ quantifies the degradation associated with the bitstream $b$.

Let $|b|$ denote the number of bits in the bitstream and let $N_{\mathrm{param}}$ denote the number of scalar parameters in $W$; see Appx.~\pinkref{app:codec_spec} for more details. The compression rate in bits per model parameter is $R(b) = \frac{|b|}{N_{\mathrm{param}}}$. Here, $R(b)$ is the average number of bits used to represent one scalar parameter after compression, $|b|$ is the total encoded length in bits, and $N_{\mathrm{param}}$ is the total parameter count of the original model. Total storage in megabytes can be obtained by converting $|b|$ from bits to bytes.

Decoding efficiency is measured by the end-to-end decode time $T_{\mathrm{dec}}(b) = \mathrm{time}\big(\mathcal{D}(b)\big)$, where $T_{\mathrm{dec}}(b)$ denotes the wall-clock time required to reconstruct $\tilde{W}(b)$ from $b$ using the decoder $\mathcal{D}$, and $\mathrm{time}(\cdot)$ denotes the elapsed time of the decoding procedure.

\subsection{Symmetry and motion compensation setup}
\label{sec:symmetry_motion}

Many neural architectures admit function-preserving reparameterizations. The approach exploits permutation symmetries that arise when intermediate units can be re-ordered without changing the computed function, provided adjacent layers are compensated consistently.

Consider a two-layer composition with a pointwise nonlinearity $\phi$, where $h = \phi\!\left(W_{\ell} x\right)$ and $y = W_{\ell+1} h$. Here, $x$ is the input to layer $\ell$, $W_{\ell}$ maps $x$ into a hidden representation, $\phi(\cdot)$ applies elementwise or channelwise to produce the hidden activation $h$, and $W_{\ell+1}$ maps $h$ to the next representation $y$.

Let $\Pi_{\ell}$ be a permutation matrix acting on the hidden dimension of $h$, or a block-permutation operator acting on groups such as attention heads or channel groups. Define the reparameterized weights as $W'_{\ell} = \Pi_{\ell} W_{\ell}$ and $W'_{\ell+1} = W_{\ell+1}\Pi_{\ell}^{-1}$. Here, $\Pi_{\ell}$ permutes the coordinates of the hidden representation. The matrix $\Pi_{\ell}^{-1}$ is the inverse permutation, which exists and equals $\Pi_{\ell}^{\top}$ when $\Pi_{\ell}$ is a permutation matrix. The pair $(W'_{\ell}, W'_{\ell+1})$ is constructed so that the permutation introduced in $W'_{\ell}$ is exactly undone before the next-layer output is formed.

Under architectural conditions specifying where $\phi$ is applied and how the hidden coordinates are consumed, this reparameterization leaves the input--output mapping unchanged, namely $W'_{\ell+1}\,\phi\!\left(W'_{\ell}x\right) = W_{\ell+1}\,\phi\!\left(W_{\ell}x\right)$. The left-hand side is the output computed by the reparameterized weights $(W'_{\ell}, W'_{\ell+1})$, and the right-hand side is the output computed by the original weights $(W_{\ell}, W_{\ell+1})$. The equality states that the permutation does not change the represented function; it only changes the internal coordinate system used to express that function.

This symmetry enables motion compensation across depth by choosing permutations that increase inter-layer predictability; see Appx.~\pinkref{app:symmetry_correctness}. After selecting $\Pi_{\ell}$, define the aligned, motion-compensated weights as $\bar{W}_{\ell} = \Pi_{\ell}(W_{\ell})$, where $\bar{W}_{\ell}$ denotes the aligned version of layer-$\ell$ weights and $\Pi_{\ell}(\cdot)$ denotes applying the permutation or block-permutation along the appropriate hidden dimension(s) of $W_{\ell}$. This aligned representation expresses the same model in a coordinate system where inter-layer redundancy is higher, which lowers the entropy of residuals used by the compressor.

Architectural constraints and sufficient conditions under which the function-invariance relation holds, including blockwise conditions for attention heads and channel groups, are provided in Appendix~\pinkref{app:symmetry_correctness}.

\section{Motion Compensation via Functional Alignment}
\label{sec:alignment}

The motion compensation stage selects function-preserving permutations that increase inter-layer predictability by aligning structurally comparable blocks across depth. The method operates on block partitions that admit permutation symmetries, including intermediate channels in feed-forward networks, attention heads, and channel groups used by normalization or grouped linear maps (see Alg.~\pinkref{alg:codec_full} in Appx.~\pinkref{app:predictor_entropy}) .

\subsection{Block definition}
\label{sec:block_def}

For each layer $\ell$, let $W_{\ell}$ denote the layer parameters. A block-extraction operator $\mathcal{B}_{t}(\cdot)$ maps $W_{\ell}$ to a collection of $B_{\ell,t}$ blocks of a given type $t$, written as $\mathcal{B}_{t}(W_{\ell}) = \left\{\, U_{\ell,t}^{(i)} \,\right\}_{i=1}^{B_{\ell,t}}$. Here, $t$ denotes the block type, $B_{\ell,t}$ is the number of blocks of type $t$ in layer $\ell$, and $U_{\ell,t}^{(i)}$ is the $i$-th extracted block, such as a column group corresponding to a single FFN hidden unit, a weight sub-tensor corresponding to one attention head, or a channel group associated with grouped computation. The alignment objective matches blocks at layer $\ell$ to blocks at layer $\ell-1$ using a permutation $\pi_{\ell,t}$ over block indices, where $\pi_{\ell,t} : \{1,\dots,B_{\ell,t}\} \rightarrow \{1,\dots,B_{\ell,t}\}$. The map $\pi_{\ell,t}$ is a bijection that reindexes blocks within the same type $t$ at layer $\ell$, and $B_{\ell,t}$ is assumed consistent across consecutive layers for the aligned block type.

\subsection{Similarity metrics}
\label{sec:similarity_metrics}

Alignment is driven by a similarity score $s_{\ell,t}(i,j)$ measuring how well
the current candidate block $U_{\ell,t}^{(j)}$ matches the previous aligned
reference block $\bar{U}_{\ell-1,t}^{(i)}$ before applying the current-layer
permutation. A weight-based similarity uses cosine similarity between vectorized blocks, defined as $s^{\mathrm{w}}_{\ell,t}(i,j)
=
\frac{
\left\langle
\mathrm{vec}\!\left(\bar{U}_{\ell-1,t}^{(i)}\right),
\mathrm{vec}\!\left(U_{\ell,t}^{(j)}\right)
\right\rangle
}{
\left\|
\mathrm{vec}\!\left(\bar{U}_{\ell-1,t}^{(i)}\right)
\right\|_{2}
\left\|
\mathrm{vec}\!\left(U_{\ell,t}^{(j)}\right)
\right\|_{2}
}$. Here, $\operatorname{vec}(\cdot)$ vectorizes a block tensor, $\langle \cdot,\cdot\rangle$ denotes the Euclidean inner product, and $\|\cdot\|_{2}$ is the $\ell_{2}$ norm. The quantity $s^{\mathrm{w}}_{\ell,t}(i,j)$ lies in $[-1,1]$ and increases as the two blocks become more aligned in parameter space. An activation-statistic similarity uses a small calibration set $\mathcal{D}_{\mathrm{cal}}$ to compute block-level activation summaries. Let $z_{\ell,t}^{(i)}(x)$ denote the activation vector associated with block $i$ of type $t$ at layer $\ell$ when processing input $x$. The mean activation statistic is $\mu_{\ell,t}^{(i)} = \frac{1}{|\mathcal{D}_{\mathrm{cal}}|}\sum_{x \in \mathcal{D}_{\mathrm{cal}}} z_{\ell,t}^{(i)}(x)$. Here, $|\mathcal{D}_{\mathrm{cal}}|$ is the number of calibration examples, $z_{\ell,t}^{(i)}(x)$ is the activation associated with block $i$, and $\mu_{\ell,t}^{(i)}$ summarizes how that block behaves on the calibration distribution. A cosine similarity between these summaries is $s^{\mathrm{a}}_{\ell,t}(i,j)
=
\frac{
\left\langle
\bar{\mu}_{\ell-1,t}^{(i)},
\mu_{\ell,t}^{(j)}
\right\rangle
}{
\left\|
\bar{\mu}_{\ell-1,t}^{(i)}
\right\|_{2}
\left\|
\mu_{\ell,t}^{(j)}
\right\|_{2}
}$. This quantity measures similarity in activation space, using the same inner product and norm conventions as the weight-based cosine similarity.

A hybrid similarity combines parameter- and activation-based cues as $s_{\ell,t}(i,j) = \alpha\, s^{\mathrm{w}}_{\ell,t}(i,j) + \left(1-\alpha\right)\, s^{\mathrm{a}}_{\ell,t}(i,j)$, where $\alpha \in [0,1]$ is a mixing coefficient. Here, $\alpha$ controls the relative contribution of weight geometry via $s^{\mathrm{w}}_{\ell,t}(i,j)$ and functional behavior on data via $s^{\mathrm{a}}_{\ell,t}(i,j)$.

\subsection{Matching objective and aligned weights}
\label{sec:matching_objective}

Given similarities $s_{\ell,t}(i,j)$, the alignment selects a permutation $\pi_{\ell,t}$ that maximizes total matched similarity, written as $\pi_{\ell,t}^{\star} = \arg\max_{\pi \in \mathfrak{S}_{B_{\ell,t}}} \sum_{i=1}^{B_{\ell,t}} s_{\ell,t}\!\left(i,\pi(i)\right)$, where $\mathfrak{S}_{B_{\ell,t}}$ denotes the set of all permutations of $\{1,\dots,B_{\ell,t}\}$. Here, $\pi(i)$ is the index at layer $\ell$ matched to index $i$ at layer $\ell-1$, and the summation aggregates similarity across all block correspondences.

The permutation $\pi_{\ell,t}^{\star}$ induces a block-permutation operator $\Pi_{\ell,t}$ acting on the appropriate hidden dimension(s) of $W_{\ell}$. The motion-compensated weights for block type $t$ are defined by $\bar{W}_{\ell,t} = \Pi_{\ell,t}\!\left(W_{\ell}\right)$, where $\bar{W}_{\ell,t}$ denotes the aligned parameters associated with type $t$ at layer $\ell$, and $\Pi_{\ell,t}(\cdot)$ permutes blocks according to $\pi_{\ell,t}^{\star}$. The permutation is function-preserving when paired with the corresponding inverse permutation applied to the consuming operators in adjacent computations, consistent with the invariance established earlier.

Alignment can also be viewed through a residual-energy lens when a predictor $\hat{U}_{\ell,t}^{(j)}$ is available. Let the residual for matching $(i,\pi(i))$ be $R_{\ell,t}^{(i)} = U_{\ell,t}^{\left(\pi(i)\right)} - \hat{U}_{\ell,t}^{(i)}$. Here, $U_{\ell,t}^{(\pi(i))}$ is the matched block at layer $\ell$, $\hat{U}_{\ell,t}^{(i)}$ is the predicted block conditioned on aligned context, and $R_{\ell,t}^{(i)}$ is the residual that will be quantized and entropy-coded. Minimizing expected residual energy corresponds to $\pi_{\ell,t}^{\star} = \arg\min_{\pi \in \mathfrak{S}_{B_{\ell,t}}} \sum_{i=1}^{B_{\ell,t}} \left\| U_{\ell,t}^{\left(\pi(i)\right)} - \hat{U}_{\ell,t}^{(i)} \right\|_{F}^{2}$, where $\|\cdot\|_{F}$ denotes the Frobenius norm. The sum aggregates squared residual magnitudes across blocks, so smaller values imply lower-entropy residuals and improved compressibility.

\subsection{Optimization method and complexity}
\label{sec:complexity}

The matching problem above is a linear assignment problem. Using the Hungarian algorithm yields an exact solution with cubic complexity in the number of blocks, namely $\mathrm{cost}_{\mathrm{Hungarian}}(\ell,t) = \mathcal{O}\!\left(B_{\ell,t}^{3}\right)$. Here, $B_{\ell,t}$ is the number of blocks matched for type $t$ at layer $\ell$, and the big-$\mathcal{O}$ notation captures asymptotic runtime scaling.

When $B_{\ell,t}$ is large, an approximate alternative uses a greedy matching with optional local refinement. Let $K$ be the number of candidate matches retained per block after a similarity screening stage. A typical screened greedy approach has complexity $\mathrm{cost}_{\mathrm{screened}}(\ell,t) = \mathcal{O}\!\left(B_{\ell,t}\,K \,\log(B_{\ell,t})\right)$, where $K \ll B_{\ell,t}$ controls the candidate set size. The $\log(B_{\ell,t})$ factor accounts for priority-queue or sorting operations used to select high-similarity pairings.

The total alignment cost across all aligned types and layers is $\mathrm{cost}_{\mathrm{total}} = \sum_{\ell=2}^{L} \sum_{t} \mathrm{cost}(\ell,t)$, where $L$ is the number of layers, $t$ ranges over the aligned block types, and $\mathrm{cost}(\ell,t)$ is instantiated by either the Hungarian complexity or the screened greedy complexity. The summation begins at $\ell=2$ because each layer $\ell$ is matched to its predecessor $\ell-1$.

\vspace{-0.2cm}
\section{Predictive Coding Across Layers}
\label{sec:predictive_coding}

Predictive coding compresses weights by exploiting the sequential structure of depth. After motion compensation, blocks of the same type evolve smoothly across layers, so a predictor can forecast the next block from previously decoded context, and only a low-entropy residual must be encoded (see Figure~\pinkref{fig:alignment_residual}).

\subsection{Layer-sequential predictor}
\label{sec:layer_seq_predictor}

For each layer $\ell$ and block type $t$, let $\bar{W}_{\ell,t}$ denote motion-compensated parameters. A block extraction operator yields the aligned blocks $\mathcal{B}_{t}(\bar{W}_{\ell,t}) = \left\{\, \bar{U}_{\ell,t}^{(i)} \,\right\}_{i=1}^{B_{\ell,t}}$, where $B_{\ell,t}$ is the number of blocks of type $t$ in layer $\ell$, and $\bar{U}_{\ell,t}^{(i)}$ denotes the $i$-th aligned block.

The predictor uses decoded context from earlier layers. Let $\tilde{U}_{\ell-1,t}^{(i)}$ denote the decoded block at layer $\ell-1$ for index $i$, and let $e_{\ell}$ and $e_{t}$ be learned embeddings representing the layer index and block type. The predictor is a parametric mapping $g_{\theta}$, written as $\hat{U}_{\ell,t}^{(i)} = g_{\theta}\!\left(\tilde{U}_{\ell-1,t}^{(i)},\, e_{\ell},\, e_{t}\right)$, where $\hat{U}_{\ell,t}^{(i)}$ is the predicted block, $\theta$ denotes predictor parameters, $\tilde{U}_{\ell-1,t}^{(i)}$ provides decoded context, $e_{\ell}$ encodes depth-dependent effects, and $e_{t}$ encodes block-type-specific structure.

The predictor is conditioned on the \emph{decoded} context
$\tilde{U}_{\ell-1,t}^{(i)}$ rather than the original floating-point block, so
that training and evaluation match the deployment procedure where only
previously decoded weights are available. In the default configuration used for
the main results, alignment is computed using the similarity-based objective
and then kept fixed during continuous codec optimization. Residual-energy
alignment, which recomputes permutations using current predictor outputs, is
treated as an optional variant and is reported only in the ablation study.

Given the prediction, the residual block is defined as $R_{\ell,t}^{(i)} = \bar{U}_{\ell,t}^{(i)} - \hat{U}_{\ell,t}^{(i)}$, where $\bar{U}_{\ell,t}^{(i)}$ is the aligned target block, $\hat{U}_{\ell,t}^{(i)}$ is the predictor output, and $R_{\ell,t}^{(i)}$ is the residual that will be quantized and entropy-coded.
\begin{wrapfigure}[16]{r}{0.47\linewidth}
\vspace{-3mm}
\captionsetup{font=footnotesize,skip=2pt,width=\linewidth}
\centering
\includegraphics[width=\linewidth]{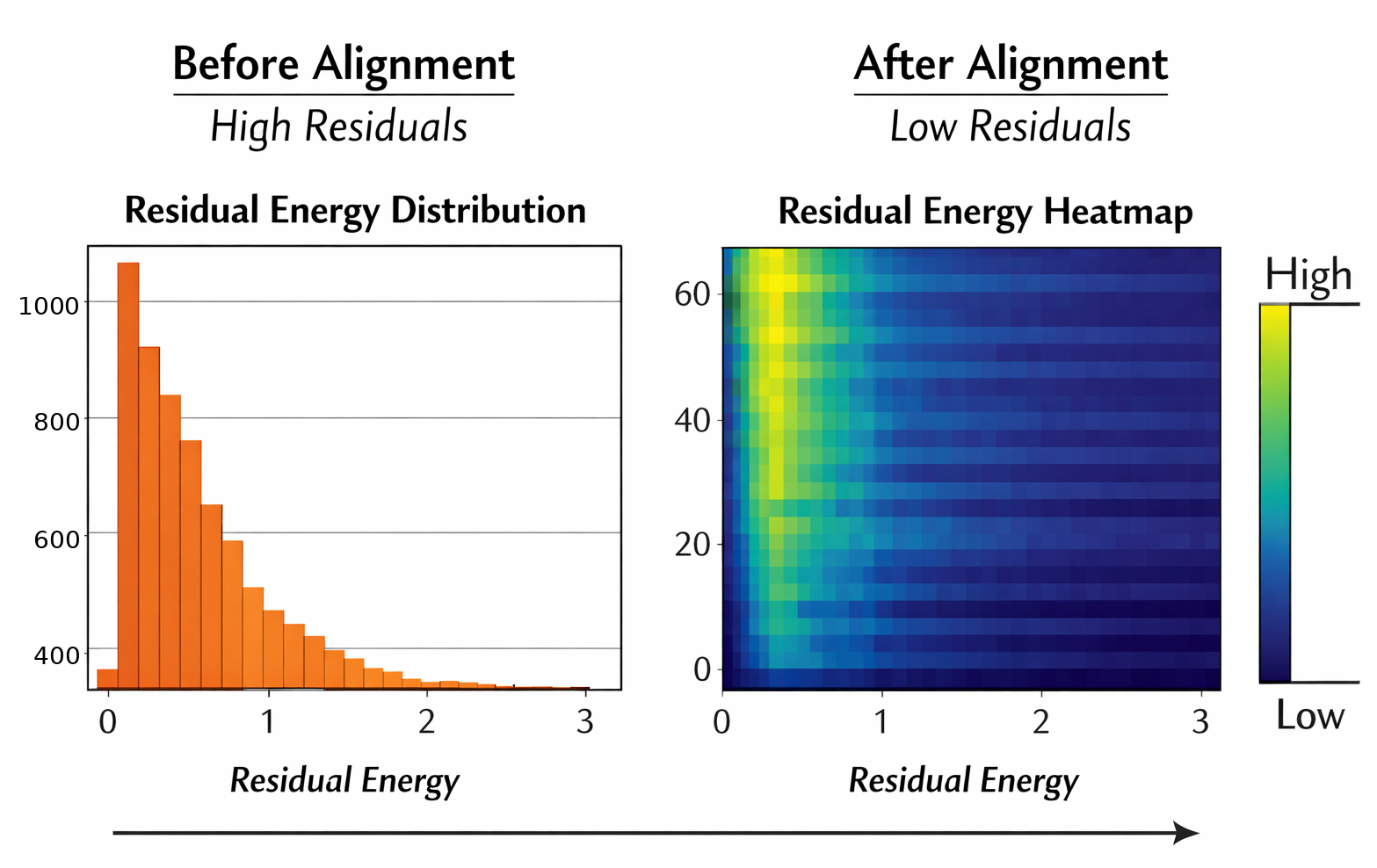}
\caption{
Residual statistics before and after functional alignment.
The visualization compares the distribution or spatial structure of residual magnitudes computed from the residual definition.
The ``before'' case uses identity permutations, while the ``after'' case uses
the default similarity-based assignment permutation $\pi_{\ell,t}^{\star}$.
Residual-energy alignment is treated separately as an optional ablation.
A substantial reduction in residual energy indicates improved predictability and lower entropy for subsequent coding.
}
\label{fig:alignment_residual}
\vspace{-4mm}
\end{wrapfigure}
\vspace{-0.5cm}
\subsection{Keyframes and inter-layer drift control}
\label{sec:keyframes}

Predictive coding is stabilized by keyframes. Let $K$ denote the keyframe interval. A layer index $\ell$ is a keyframe when $k(\ell) = 1$ if $(\ell-1)\bmod K = 0$, and $k(\ell) = 0$ otherwise. Here, $k(\ell)$ is the keyframe indicator, $\ell$ is the layer index, $K$ is the keyframe interval, and $(\ell-1)\bmod K$ is the remainder of $(\ell-1)$ divided by $K$.

For keyframe layers, blocks are encoded in absolute form after alignment. For non-keyframe layers, blocks are encoded as residuals relative to the prediction. This yields the reconstruction rule $\tilde{U}_{\ell,t}^{(i)} = k(\ell)\,\tilde{\bar{U}}_{\ell,t}^{(i)} + \left(1-k(\ell)\right)\left(\hat{U}_{\ell,t}^{(i)} + \tilde{R}_{\ell,t}^{(i)}\right)$, where $\tilde{U}_{\ell,t}^{(i)}$ is the decoded block used for subsequent prediction, $\tilde{\bar{U}}_{\ell,t}^{(i)}$ is the decoded aligned block when $\ell$ is a keyframe, $\hat{U}_{\ell,t}^{(i)}$ is the predicted block, and $\tilde{R}_{\ell,t}^{(i)}$ is the decoded residual. The coefficient $k(\ell)$ selects the keyframe pathway, while $1-k(\ell)$ selects the predictive pathway.

The rate benefit arises because motion compensation increases cross-layer similarity and the predictor captures the smooth component of evolution, reducing the magnitude and entropy of $R_{\ell,t}^{(i)}$. Beyond drift control, keyframes also provide independently decodable segments
that partially restore parallelism during distributed model loading; see
Appendix~\pinkref{app:sequential_loading}.

\vspace{-0.3cm}
\section{Residual Quantization and Rate--Distortion Training}
\label{sec:rate_distortion}

\subsection{Residual quantization and entropy coding}
\label{sec:residual_quant_entropy}

Residuals are encoded after prediction to reduce entropy. For each layer $\ell$, block type $t$, and block index $i$, the residual is $R_{\ell,t}^{(i)} = \bar{U}_{\ell,t}^{(i)} - \hat{U}_{\ell,t}^{(i)}$, where $\bar{U}_{\ell,t}^{(i)}$ is the aligned block, $\hat{U}_{\ell,t}^{(i)}$ is the predicted block, and $R_{\ell,t}^{(i)}$ is the residual to be compressed.

A learned scalar quantizer maps each residual element to a discrete code index. Let $R_{\ell,t}^{(i)}$ denote the residual block for layer $\ell$, tensor-type $t$, and block index $i$. We define a \emph{channel or group axis} for each tensor-type $t$, e.g., output channels for linear/conv weights or attention head groups for multi-head projections. Let $C_{\ell,t}$ be the number of such channels/groups and let $r_{\ell,t}^{(i)} \in \mathbb{R}^{d}$ be the vectorization of $R_{\ell,t}^{(i)}$ with $d$ total elements.

We use \emph{per-channel}, or per-group, quantization parameters: a step-size vector $s_{\ell,t}\in\mathbb{R}_{+}^{C_{\ell,t}}$ and an optional mean vector $m_{\ell,t}\in\mathbb{R}^{C_{\ell,t}}$, which are shared across all elements belonging to the same channel/group.\footnote{Although $r_{\ell,t}^{(i)}$ is flattened to length $d$, the quantizer parameters are \emph{not} learned per-parameter. Instead, each entry of $s_{\ell,t}$ and $m_{\ell,t}$ applies to an entire channel/group and is broadcast to all elements assigned to that channel/group.}

Let $g_{\ell,t}:\{1,\dots,d\}\to\{1,\dots,C_{\ell,t}\}$ map each flattened element index $j$ to its channel/group. Quantization produces integer code indices elementwise as $\big[c_{\ell,t}^{(i)}\big]_j = \left\lfloor \frac{\big[r_{\ell,t}^{(i)}\big]_j - m_{\ell,t,\,g_{\ell,t}(j)}}{s_{\ell,t,\,g_{\ell,t}(j)}} \right\rceil$, where $\lfloor\cdot\rceil$ denotes rounding to the nearest integer and $c_{\ell,t}^{(i)}\in\mathbb{Z}^{d}$ is the vector of code indices. Dequantization reconstructs the residual approximation as $\big[\tilde{r}_{\ell,t}^{(i)}\big]_j = s_{\ell,t,\,g_{\ell,t}(j)}\cdot \big[c_{\ell,t}^{(i)}\big]_j + m_{\ell,t,\,g_{\ell,t}(j)}$, and $\tilde{R}_{\ell,t}^{(i)}$ is obtained by reshaping $\tilde{r}_{\ell,t}^{(i)}$ back to the original block shape. The bitstream stores discrete symbols and side information. The primary encoded symbols are the code indices $\{c_{\ell,t}^{(i)}\}$ for residual-coded layers and the corresponding indices for keyframe-coded layers. Side information includes the permutations defining alignment and the quantizer parameters $(s_{\ell,t}, m_{\ell,t})$ when not fixed or shared.

Entropy coding reduces storage by assigning shorter codes to more likely symbols. Let $p_{\psi}(\cdot)$ be an entropy model with parameters $\psi$ defining a probability mass function over code indices. The ideal codelength for a code index vector is proportional to its negative log-probability, yielding the expected rate proxy $R_{\psi} = \mathbb{E}\!\left[-\log p_{\psi}\!\left(c\right)\right]$, where $c$ denotes the collection of all discrete code indices produced by quantization, $p_{\psi}(c)$ is the probability assigned by the entropy model, and the expectation is taken with respect to the empirical distribution of codes produced by the compressor.

\vspace{-0.5cm}
\subsection{Rate--distortion objective}
\label{sec:rate_distortion_obj}

Training optimizes a rate--distortion trade-off between task performance and compressibility, written as $\mathcal{L} = \mathcal{L}_{\mathrm{task}}\!\left(\tilde{W}\right) + \lambda\,\mathbb{E}\!\left[-\log p_{\psi}\!\left(c\right)\right]$, where $\mathcal{L}$ is the total objective, $\mathcal{L}_{\mathrm{task}}(\tilde{W})$ measures task loss under the decoded weights $\tilde{W}$, $\lambda \ge 0$ controls the rate--distortion trade-off, $c$ denotes the discrete code indices produced by quantization, and $p_{\psi}(c)$ is the entropy model used to estimate coding cost.

The objective uses the expected codelength of the \emph{primary symbol stream}, namely keyframe and residual codes, as a differentiable proxy for the compressed size. The \emph{realized} deployment bitrate additionally includes side information required to reconstruct the model, such as alignment permutations and quantizer parameters. Concretely, the total serialized bitstream length can be written as $R_{\mathrm{tot}} = R_{\mathrm{codes}} + R_{\mathrm{perm}} + R_{\mathrm{qparam}} + R_{\mathrm{meta}}$, where $R_{\mathrm{codes}}$ is the codelength of all quantized symbols, including keyframes and predicted-layer residuals, $R_{\mathrm{perm}}$ is the cost of transmitting alignment permutations, $R_{\mathrm{qparam}}$ is the cost of transmitting quantizer side information such as step sizes and means, and $R_{\mathrm{meta}}$ covers headers and format metadata. The bitrate breakdown reported in Table~\pinkref{tab:bitrate_breakdown} is computed from this full decomposition, and all bits per model parameter/MB numbers in the main tables are measured from the final serialized representation, meaning they include all side information.

\begin{wraptable}[6]{r}{0.46\linewidth}
\vspace{-3mm}
\centering
\captionsetup{font=footnotesize,skip=2pt,width=\linewidth}
\caption{Bitrate breakdown by stream.}
\label{tab:bitrate_breakdown}
\resizebox{\linewidth}{!}{%
\begin{tabular}{lccc}
\hline
Component & Bits & Fraction (\%) & Notes \\
\hline
Keyframe codes        & $6.20\times 10^{8}$ & 25.8  & Absolute-coded layers \\
Residual codes        & $1.55\times 10^{9}$ & 64.6  & Predicted layers \\
Permutation side-info & $1.30\times 10^{8}$ & 5.4   & Alignment permutations \\
Quantizer side-info   & $6.00\times 10^{7}$ & 2.5   & Step sizes and means \\
Other overhead        & $4.00\times 10^{7}$ & 1.7   & Headers, metadata \\
\hline
Total                 & $2.40\times 10^{9}$ & 100.0 &  \\
\hline
\end{tabular}%
}
\vspace{-4mm}
\end{wraptable}

The proxy term $\mathbb{E}[-\log p_{\psi}(c)]$ captures the dominant and smoothly optimizable component of the rate, namely the entropy of the quantized symbol stream under the learned entropy model. In contrast, the side-information terms are either \emph{discrete}, such as permutations updated via the alignment solver, or weakly coupled to the continuous optimization, since quantizer parameters are low-dimensional and transmitted once per layer/type rather than per weight. Since the keyframe interval and block partitions are fixed in a given run, the side-information overhead is comparatively stable across checkpoints within the same configuration, while the symbol-stream entropy varies substantially with predictor quality and quantizer settings. For these reasons, we optimize the differentiable code-stream proxy during training and report the full realized bitrate, including side information, at evaluation time.

Although training uses the proxy objective, operating points are selected using the \emph{total} bitrate computed from the fully serialized stream. For each $\lambda$ sweep, we encode the model, measure the realized bits per model parameter/MB including side information, and choose the checkpoint that best matches the target rate while minimizing task loss. This selection procedure ensures that reported rate--distortion points reflect the true deployment cost, even when side-information terms contribute a non-negligible fraction of the bit budget.

The task term is implemented using knowledge distillation on a small calibration set $\mathcal{D}_{\mathrm{cal}}$. Let $f_{W}(x)$ be the teacher model output using original weights $W$ and let $f_{\tilde{W}}(x)$ be the student output using decoded weights $\tilde{W}$. A distillation loss takes the form $\mathcal{L}_{\mathrm{task}}\!\left(\tilde{W}\right) = \frac{1}{|\mathcal{D}_{\mathrm{cal}}|}\sum_{x \in \mathcal{D}_{\mathrm{cal}}} \ell\!\left(f_{\tilde{W}}(x),\, f_{W}(x)\right)$, where $|\mathcal{D}_{\mathrm{cal}}|$ is the number of calibration examples, $\ell(\cdot,\cdot)$ is a supervised discrepancy such as KL divergence between softened logits or mean-squared error between representations, $f_{\tilde{W}}(x)$ denotes the decoded-model output, and $f_{W}(x)$ denotes the teacher output. The exact default configuration used in all main experiments is summarized in
Appendix~\pinkref{app:default_mcwc_config}.

\definecolor{oursBG}{RGB}{232,245,255}   
\definecolor{bestBG}{RGB}{198,239,206}   
\definecolor{secondBG}{RGB}{255,242,204} %

\vspace{-0.3cm}
\section{Experimental Setup}
\label{sec:experiments}

\paragraph{Models.}
Language modeling experiments use Pythia-1.4B~\citep{biderman2023pythia}, a
decoder-only Transformer language model evaluated under next-token prediction
with a fixed context length of 2048 tokens. The underlying sequence model
follows the Transformer architecture~\citep{vaswani2017attention}. Vision
experiments use ViT-B/16~\citep{dosovitskiy2021vit} as a representative
Transformer-based classifier evaluated with the standard input resolution of
the pretrained checkpoint. All models are compressed at the weight level by
producing a deployment bitstream and reconstructing decoded weights for
inference, without architectural modifications (see Table~\pinkref{tab:bitrate_breakdown}).
\vspace{-0.3cm}
\paragraph{Datasets.}
Language modeling is evaluated on WikiText-103 \citep{merity2016pointer} using the standard validation and test splits. Vision classification is evaluated on the ImageNet-1k validation set \citep{russakovsky2015imagenet}. A separate calibration set is sampled from each training distribution for alignment statistics and for any distillation-based adaptation steps used by compression methods.

\vspace{-0.3cm}
\paragraph{Evaluation metrics.}
Language modeling quality is reported using perplexity on the evaluation set, where lower values indicate better next-token prediction. Vision quality is reported using top-1 classification accuracy on ImageNet-1k, where higher values indicate better classification. Compression effectiveness is reported using the serialized bitstream size (total bits and total megabytes) and bits per model parameter computed from the final deployment representation. Decoding efficiency is reported as wall-clock time to reconstruct deployable weights from the bitstream, including all steps required by the decoder.
\vspace{-0.3cm}
\paragraph{Baselines.}
Comparisons include uniform per-tensor INT8 post-training quantization and uniform per-channel INT8 post-training quantization \citep{jacob2018quantization}, SmoothQuant-style activation smoothing with INT8 \citep{xiao2023smoothquant}, GPTQ-style weight-only INT4 quantization \citep{frantar2023gptq}, AWQ-style activation-aware weight-only INT4 quantization \citep{lin2023awq}, quantization-aware training under INT8 constraints \citep{jacob2018quantization}, quantization-aware training under INT4 constraints using learned step-size quantization \citep{esser2019lsq}, structured block pruning at $50\%$ block sparsity followed by INT8 quantization
with compact block-mask accounting \citep{han2016deepcompression}, a learned scalar quantization baseline with entropy coding \citep{wiedemann2019deepcabac,choi2018universal}.

\vspace{-0.3cm}
\paragraph{Calibration and training protocol.}
The calibration set for language modeling contains 8,192 sequences sampled from the language-model training distribution with context length 2048. The calibration set for vision contains 10,000 training images sampled uniformly at random from ImageNet-1k. Alignment statistics that use activations are computed only on the calibration set. When a baseline requires adaptation, the adaptation budget is fixed across methods and uses teacher outputs from the original model and student outputs from the decoded model, while keeping the evaluation protocol unchanged (see Appx.~\pinkref{app:hyperparams_compute} for more details).

\vspace{-0.2cm}
\subsection{Main results}
\label{sec:main_results}

\paragraph{Pareto frontier points.}
Table~\pinkref{tab:pareto_points} reports representative Pareto operating points for language modeling and vision under matched storage budgets. MCWC is shown at four compression levels ($2\times$, $4\times$, $8\times$, $12\times$), and the baselines are reported at their strongest operating point among the listed budgets. For language modeling, MCWC degrades smoothly as rate decreases, moving from perplexity $15.33$ at $16.6$ bits per model parameter to $16.40$ at $2.8$ bits per model parameter. At comparable low-rate storage, the strongest sparse pipeline in Table~\pinkref{tab:pareto_points} (Prune+Quant at $2.7$ bits per model parameter) reaches perplexity $18.10$, while learned scalar quantization with entropy coding at $2.9$ bits per model parameter reaches perplexity $17.30$, indicating that motion compensation and predictive residual coding reduce the penalty of aggressive compression. At the mid-rate regime around $4.2$--$4.3$ bits per model parameter, MCWC achieves perplexity $15.95$, improving over weight-only INT4 baselines such as uniform PTQ INT4 ($16.85$), and remaining competitive with strong Hessian- or activation-aware quantizers such as GPTQ W4 ($16.40$) and AWQ W4 ($16.25$), which do not exploit cross-layer predictability.

\renewcommand{\arraystretch}{0.8}
\begin{table}[htbp]
\centering
\caption{Pareto frontier operating points for language modeling and vision. Decode time measures end-to-end reconstruction of deployable weights from the serialized bitstream.}
\label{tab:pareto_points}
\small
\resizebox{1\linewidth}{!}{
\begin{tabular}{llrrrr|rrrrr}
\toprule
\multicolumn{1}{c}{Method} & \multicolumn{1}{c}{Comp.} &
\multicolumn{1}{c}{LM bits per model parameter} & \multicolumn{1}{c}{LM MB} & \multicolumn{1}{c}{LM PPL} & \multicolumn{1}{c}{LM s} &
\multicolumn{1}{c}{ViT bits per model parameter} & \multicolumn{1}{c}{ViT MB} & \multicolumn{1}{c}{ViT Acc} & \multicolumn{1}{c}{ViT s} \\
\midrule

\rowcolor{oursBG}
MCWC & $2\times$  & 16.6 & 2910 & \best{15.33} & 4.2 & 16.5 & 178 & \best{81.72} & 0.34 \\
\rowcolor{oursBG}
MCWC & $4\times$  & 8.3  & 1460 & \best{15.46} & 3.1 & 8.3  & 90  & \best{81.45} & 0.23 \\
\rowcolor{oursBG}
MCWC & $8\times$  & 4.2  & 740  & \best{15.95} & 2.5 & 4.2  & 46  & \best{80.92} & 0.18 \\
\rowcolor{oursBG}
MCWC & $12\times$ & 2.8  & 495  & \second{16.40} & 2.3 & 2.8  & 31  & \best{80.15} & 0.16 \\
\midrule

Uniform PTQ INT8 & $4\times$  & 8.6  & 1515 & 15.92 & 2.8 & 8.6  & 93  & 80.95 & 0.21 \\
Uniform PTQ INT4 & $8\times$  & 4.3  & 760  & 16.85 & 2.3 & 4.3  & 47  & 79.35 & 0.16 \\
Per-channel PTQ INT8 & $4\times$ & 8.4  & 1480 & 15.78 & 2.9 & 8.4  & 91  & 81.05 & 0.22 \\
SmoothQuant INT8 & $4\times$  & 8.4  & 1485 & 15.70 & 3.0 & 8.4  & 91  & 81.12 & 0.22 \\
GPTQ W4 & $8\times$  & 4.3  & 755  & \second{16.40} & 2.6 & 4.3  & 47  & 80.10 & 0.19 \\
AWQ W4 & $8\times$   & 4.2  & 745  & \best{16.25} & 2.6 & 4.2  & 46  & \second{80.25} & 0.19 \\
QAT INT8 & $4\times$  & 8.4  & 1480 & \second{15.62} & 3.0 & 8.4  & 91  & \second{81.28} & 0.22 \\
QAT INT4 & $8\times$  & 4.3  & 760  & \second{16.35} & 2.4 & 4.3  & 47  & 80.35 & 0.17 \\
Prune+Quant (50\%) & $12\times$ & 2.7  & 480  & 18.10 & 2.1 & 2.7  & 30  & 77.60 & 0.14 \\
Learned SQ + entropy & $12\times$ & 2.9  & 520  & 17.30 & 2.7 & 2.9  & 32  & 78.70 & 0.19 \\
\bottomrule
\end{tabular}}
\vspace{1mm}

\begin{minipage}{0.98\linewidth}
\scriptsize
\textit{Note.} Rates are reported in bits per model parameter and include all
side information required for reconstruction. The Prune+Quant baseline uses a
structured block-sparse serialized representation with compact block-mask
metadata and entropy-coded retained values; it is not accounted as naive scalar
CSR with one 32-bit index per retained scalar.
\end{minipage}
\end{table}

For vision, MCWC similarly preserves accuracy under increasingly aggressive compression, decreasing from $81.72\%$ at $16.5$ bits per model parameter to $80.15\%$ at $2.8$ bits per model parameter. In the low-rate regime, this corresponds to a $2.55$ point advantage over Prune+Quant (50\%) at $2.7$ bits per model parameter, which reaches $77.60\%$, and a $1.45$ point advantage over learned scalar quantization with entropy coding at $2.9$ bits per model parameter, which reaches $78.70\%$. In the mid-rate regime near $4.2$--$4.3$ bits per model parameter, MCWC attains $80.92\%$ accuracy, outperforming uniform PTQ INT4 ($79.35\%$) and remaining ahead of common weight-only INT4 baselines such as GPTQ W4 ($80.10\%$) and AWQ W4 ($80.25\%$). Decode time remains comparable across methods in Table~\pinkref{tab:pareto_points}; MCWC stays close to PTQ/QAT baselines because predictive reconstruction and inverse-permutation placement are linear-time operations on blocks, and the dominant cost remains entropy decoding and tensor materialization.
\begin{wrapfigure}[14]{r}{0.58\linewidth}
\vspace{-3mm}
\captionsetup{font=footnotesize,skip=2pt,width=\linewidth}
\centering
\resizebox{0.6\textwidth}{!}{%
\begin{minipage}[htbp]{0.4\textwidth}
\centering
{\bf (a)}\\[-2pt]
\begin{tikzpicture}
\begin{axis}[
  width=1\textwidth,
  height=0.92\textwidth,
  xlabel={Bits per model parameter},
  ylabel={Perplexity (lower is better)},
  xmin=2.5, xmax=17.5,
  ymin=15.0, ymax=18.5,
  grid=both,
  legend style={font=\scriptsize},
  legend pos=north east,
  mark size=2.0
]
\addplot+[mark=*] coordinates {(16.6,15.33) (8.3,15.46) (4.2,15.95) (2.8,16.40)};
\addlegendentry{MCWC}
\addplot+[mark=square*] coordinates {(16.2,15.36) (8.4,15.62) (4.3,16.35) (2.9,17.05)};
\addlegendentry{Mixed-precision QAT}
\addplot+[mark=triangle*] coordinates {(16.3,15.35) (8.5,15.70) (4.4,16.55) (2.9,17.30)};
\addlegendentry{Learned SQ + entropy}
\addplot+[mark=diamond*] coordinates {(16.4,15.40) (8.6,15.78) (4.5,16.70) (3.0,17.45)};
\addlegendentry{RVQ codec + entropy}
\addplot+[mark=o] coordinates {(16.2,15.42) (8.3,15.88) (4.1,16.95) (2.7,18.10)};
\addlegendentry{Prune + quant}
\end{axis}
\end{tikzpicture}
\end{minipage}
\hfill
\begin{minipage}[htbp]{0.4\textwidth}
\centering
{\bf (b)}\\[-2pt]
\begin{tikzpicture}
\begin{axis}[
  width=1\textwidth,
  height=0.92\textwidth,
  xlabel={Bits per model parameter},
  ylabel={Top-1 accuracy (\%)},
  xmin=2.5, xmax=17.5,
  ymin=77.0, ymax=82,
  grid=both,
  legend style={font=\scriptsize},
  legend pos=south east,
  legend cell align=left,
  legend columns=1,
  mark size=2.0
]
\addplot+[mark=*] coordinates {(16.5,81.72) (8.3,81.45) (4.2,80.92) (2.8,80.15)};
\addlegendentry{MCWC}
\addplot+[mark=square*] coordinates {(16.3,81.68) (8.4,81.28) (4.3,80.35) (2.9,79.10)};
\addlegendentry{Mixed-precision QAT}
\addplot+[mark=triangle*] coordinates {(16.4,81.66) (8.5,81.15) (4.4,79.95) (2.9,78.70)};
\addlegendentry{Learned SQ + entropy}
\addplot+[mark=diamond*] coordinates {(16.6,81.60) (8.6,81.05) (4.5,79.70) (3.0,78.35)};
\addlegendentry{RVQ codec + entropy}
\addplot+[mark=o] coordinates {(16.2,81.55) (8.3,80.80) (4.1,79.20) (2.7,77.60)};
\addlegendentry{Prune + quant}
\end{axis}
\end{tikzpicture}
\end{minipage}%
}
\caption{Rate--distortion curves for (a) language modeling and (b) vision classification. Mixed-precision QAT and RVQ codec + entropy baselines are defined in Appendix~\pinkref{app:baseline_configs}.}
\label{fig:rd_curve}
\end{wrapfigure}
\vspace{-0.7cm}
\paragraph{Rate--distortion curves.}
Figure~\pinkref{fig:rd_curve} plots performance as a function of bits per model parameter, using the operating points reported in Table~\pinkref{tab:pareto_points}. For language modeling (Figure~\pinkref{fig:rd_curve}a), the MCWC curve consistently lies below the baselines, meaning lower perplexity at matched storage. The largest margin occurs between roughly $8$ and $3$ bits per model parameter, where alignment makes blocks consistent across depth and prediction removes the smooth component of variation, leaving residuals that compress effectively. For vision (Figure~\pinkref{fig:rd_curve}b), MCWC similarly dominates the frontier, retaining higher top-1 accuracy at the lowest rates where conventional pipelines exhibit steep degradation. Across both domains, the curves confirm that improvements hold throughout the operating range rather than being confined to a single compression point (see Appx~\pinkref{app:additional_experiments} and~\pinkref{sec:ablations} for more results). Appendix~\pinkref{app:alignment_diagnostics} provides direct diagnostics showing
that functional alignment increases adjacent-layer predictability, improves
predictor fit, and reduces normalized residual energy. Appendix~\pinkref{app:encoding_complexity}
analyzes offline encoding cost and alignment scalability, showing that the main
experiments use scalable screened matching rather than cubic Hungarian matching.
Appendix~\pinkref{app:architecture_scope} discusses restricted-symmetry cases,
including GQA, MQA, heterogeneous blocks, head-mixing modules, and MoE-style
routing.


\vspace{-0.4cm}
\section{Conclusion}
\label{sec:conclusion}

MCWC compresses pretrained weights by exploiting cross-layer redundancy instead of treating layers independently. It uses function-preserving alignment to remove permutation-induced variation, then applies layer-sequential predictive coding with keyframes so that most layers are represented by low-entropy residuals. With learned quantization and entropy modeling trained under a rate--distortion objective, MCWC improves storage--quality trade-offs on both language and vision models, and ablations show that alignment, prediction, and entropy modeling are each required for the full benefit. \emph{Several directions} follow naturally from this framing. Richer predictors that exploit longer temporal context across depth, improved side-information coding for permutations and quantizer parameters, and hardware-aware decoding implementations can further reduce overhead and improve wall-clock reconstruction.

\section*{Impact Statement}
This work proposes a weight-only compression codec for neural networks that targets reduced storage, bandwidth, and cold-start latency when deploying large models. By improving the storage--quality trade-off, the method can lower the cost of distributing and serving models, which may reduce energy use and enable deployment on more resource-constrained hardware. The same capability may also increase the ease of replicating and disseminating powerful models, potentially amplifying downstream risks when models are deployed without appropriate safeguards. MCWC does not introduce new model capabilities, training data, or inference-time tools beyond improved weight serialization and reconstruction, and it operates on already-trained parameters. Responsible use therefore depends primarily on the policies and controls surrounding which pretrained models are compressed, how the resulting artifacts are distributed, and what monitoring is applied in deployment.

\section*{Acknowledgments}

We gratefully acknowledge the support of everyone who contributed computational resources, infrastructure access, and technical assistance that made this work possible. The availability of these compute resources was essential for running the compression experiments, calibration and distillation procedures, ablation studies, and evaluation sweeps reported in this paper.

\bibliographystyle{plainnat} 
\bibliography{references}    

\clearpage
\appendix

\clearpage
\appendix

\begin{center}
\rule{\textwidth}{1pt}\\[1.2em]

{\Large \textbf{Motion-Compensated Weight Compression}}\\[1.2em]

\rule{\textwidth}{0.6pt}\\[1.6em]

{\LARGE \textbf{Appendix}}\\[1.2em]
\end{center}

\section*{Table of Contents}

\newcommand{\appcontentssec}[2]{%
  \noindent\hyperref[#1]{\textbf{\ref*{#1}\quad #2}}\dotfill\pageref*{#1}\par
}
\newcommand{\appcontentssub}[2]{%
  \noindent\hspace*{1.8em}\hyperref[#1]{\ref*{#1}\quad #2}\dotfill\pageref*{#1}\par
}
\newcommand{\appcontentssubsub}[2]{%
  \noindent\hspace*{3.6em}\hyperref[#1]{\ref*{#1}\quad #2}\dotfill\pageref*{#1}\par
}

\vspace{0.5em}

\appcontentssec{app:default_mcwc_config}{Default MCWC Configuration Used in the Main Experiments}

\appcontentssec{app:permutation_side_info}{Permutation Side-Information Accounting}

\appcontentssec{app:sequential_loading}{Layer-Sequential Decoding and Distributed Loading}

\appcontentssec{app:symmetry_correctness}{Correctness and Symmetry Conditions}
\appcontentssub{app:notation_fragments}{Notation and computational graph fragments}
\appcontentssub{app:mlp_permutation}{Function-preserving permutations for two-layer MLPs}
\appcontentssub{app:ffn_transformer}{Feed-forward sublayers in Transformer blocks}
\appcontentssub{app:mha_perm}{Multi-head attention head permutations}
\appcontentssub{app:transformer_block_comp}{Composing permutations inside a Transformer block}
\appcontentssub{app:edge_cases}{Edge cases and architectural conditions}
\appcontentssubsub{app:layernorm_case}{Layer normalization}
\appcontentssubsub{app:residual_case}{Residual connections}
\appcontentssubsub{app:tied_embeddings_case}{Tied embeddings and output heads}
\appcontentssubsub{app:grouped_case}{Grouped linear maps and normalization channel groups}
\appcontentssubsub{app:head_mixing_case}{Attention variants and head mixing}
\appcontentssub{app:symmetry_summary}{Summary of admissible symmetry group used by MCWC}

\appcontentssec{app:codec_spec}{Full Codec Specification}
\appcontentssub{app:bitstream_primitives}{Bitstream primitives and conventions}
\appcontentssub{app:header_format}{Header format and global side information}
\appcontentssub{app:perm_format}{Permutation side information format}
\appcontentssub{app:qinfo_format}{Quantizer side information format}
\appcontentssub{app:symbol_streams}{Symbol streams and mode flag}
\appcontentssub{app:decoding_procedure}{Exact decoding procedure}

\appcontentssec{app:predictor_entropy}{Predictor and Entropy Model Details}
\appcontentssub{app:predictor_arch}{Predictor architecture}
\appcontentssub{app:embeddings}{Embedding parameterization}
\appcontentssub{app:entropy_param}{Entropy model parameterization}
\appcontentssub{app:keyframe_rationale}{Keyframe strategy and rationale}

\appcontentssec{app:hyperparams_compute}{Hyperparameters and Compute}
\appcontentssub{app:calib_distill_hparams}{Calibration and distillation hyperparameters}
\appcontentssub{app:codec_hparams}{Quantizer, entropy model, and predictor hyperparameters}
\appcontentssub{app:training_schedule}{Optimization and training schedule}
\appcontentssub{app:compute_hardware}{Compute and hardware}
\appcontentssub{app:baseline_configs}{Full baseline configurations}
\appcontentssub{app:perm_coding}{Permutation side-information coding}

\appcontentssec{app:additional_experiments}{Additional Experiments}
\appcontentssub{app:more_models_datasets}{More models and datasets}
\appcontentssub{app:more_points}{More compression points}
\appcontentssub{app:robustness}{Robustness tests}
\appcontentssub{app:latency}{Latency and decoding benchmarks}

\appcontentssec{app:alignment_diagnostics}{Does Functional Alignment Increase Cross-Layer Predictability?}

\appcontentssec{app:deployment_regime}{Deployment Regime and Cost Amortization}

\appcontentssec{app:encoding_complexity}{Encoding Complexity and Alignment Scalability}

\appcontentssec{sec:ablations}{Ablation Studies}

\appcontentssec{app:architecture_scope}{Architecture Scope and Restricted-Symmetry Cases}

\appcontentssec{app:additional_related_work}{Additional Related Work on Symmetry-Aware and Cross-Layer Weight Compression}

\appcontentssec{sec:discussion_limitations}{Discussion, Limitations and Future Work}
\appcontentssub{sec:discussion}{Discussion}
\appcontentssub{sec:limitations}{Limitations}
\appcontentssub{sec:future_directions}{Future directions}

\clearpage

\section{Default MCWC Configuration Used in the Main Experiments}
\label{app:default_mcwc_config}

This section shows the default MCWC configuration used in all main
experiments. The purpose is to remove ambiguity between optional variants of
the codec and the configuration used to produce the reported results (see Table~\pinkref{tab:default_mcwc_config}).

\begin{table}[htbp]
\centering
\caption{
Default MCWC configuration used in all main experiments. The table specifies
the alignment objective, the reference frame used for matching, the quantizer
initialization rule, the entropy-model context used for keyframes and
predictive frames, and the side information included in the reported bitstream.
This configuration is the one used for the reported rate--quality and
ablation results.
}
\label{tab:default_mcwc_config}
\scriptsize
\renewcommand{\arraystretch}{1.18}
\setlength{\tabcolsep}{4pt}
\resizebox{\linewidth}{!}{
\begin{tabular}{p{3.8cm}p{10.5cm}}
\toprule
\textbf{Component}
&
\textbf{Default configuration}
\\
\midrule
Alignment objective
&
Similarity-based functional alignment. For each layer pair and block type,
MCWC selects the permutation that maximizes blockwise cosine similarity between
the current candidate blocks and the previous aligned reference blocks.
Residual-energy alignment is treated as an optional variant and is not used in
the main results unless explicitly stated.
\\
\midrule
Alignment reference
&
The matching reference is the previous aligned layer, not the original
unaligned canonical ordering. For layer $\ell$, candidate blocks from the
current unaligned layer are matched against the aligned representation of
layer $\ell-1$. This creates a consistent aligned trajectory across depth
instead of independent pairwise alignments.
\\
\midrule
Predictor context
&
For non-keyframe layers, the predictor uses previously decoded aligned blocks
as context. This matches the deployment setting because the decoder only has
access to already decoded weights, not the original floating-point weights.
\\
\midrule
Keyframe handling
&
Keyframes are encoded without predictive residual coding. For entropy-model
conditioning at keyframes, predictor-dependent context variables are set to
zero and a binary keyframe indicator is provided to the entropy model. This
prevents undefined predictor statistics for layers where prediction is
bypassed.
\\
\midrule
Quantizer initialization
&
Quantizer step sizes are initialized from empirical residual statistics. For
each layer and block type, the initial step size is proportional to the
standard deviation of the residual tensor computed on the calibration set.
Reconstruction-error optimization of step sizes is treated as an optional
refinement and is not the default initialization used in the main results.
\\
\midrule
Entropy model input
&
For non-keyframe layers, the entropy model is conditioned on layer index,
block type, quantizer scale, predictor statistics, and residual statistics
available to the decoder. For keyframes, predictor-dependent statistics are
zero-filled and the keyframe flag identifies the coding mode.
\\
\midrule
Side information counted in the rate
&
The reported rate includes all information required for decoding: quantized
residual symbols, keyframe symbols, entropy-model side information, quantizer
scales, permutation side information, block masks when applicable, tensor
headers, coding-mode flags, and metadata required to reconstruct the checkpoint.
\\
\midrule
Assignment solver
&
The main experiments use scalable screened approximate matching with optional
local refinement. Hungarian matching is an exact reference solver but is not
the default solver used for the reported main results.
\\
\bottomrule
\end{tabular}
}
\end{table}

\paragraph{Alignment objective.}
The default alignment stage is similarity-based. For each block type $t$ and
layer $\ell$, MCWC computes a similarity matrix between the previous aligned
reference blocks and the current candidate blocks:
\begin{equation}
S_{\ell,t}(i,j)
=
\frac{
\left\langle
\operatorname{vec}\!\left(\bar{U}_{\ell-1,t}^{(i)}\right),
\operatorname{vec}\!\left(U_{\ell,t}^{(j)}\right)
\right\rangle
}{
\left\|
\operatorname{vec}\!\left(\bar{U}_{\ell-1,t}^{(i)}\right)
\right\|_{2}
\left\|
\operatorname{vec}\!\left(U_{\ell,t}^{(j)}\right)
\right\|_{2}
}.
\label{eq:default_alignment_similarity}
\end{equation}
Here, $\bar{U}_{\ell-1,t}^{(i)}$ is the $i$-th block in the previous aligned
layer, while $U_{\ell,t}^{(j)}$ is the $j$-th candidate block in the current
unaligned layer. The selected permutation is
\begin{equation}
\pi_{\ell,t}^{\star}
=
\arg\max_{\pi\in\mathfrak{S}_{B_{\ell,t}}}
\sum_{i=1}^{B_{\ell,t}}
S_{\ell,t}\big(i,\pi(i)\big).
\label{eq:default_alignment_assignment}
\end{equation}
This choice ensures that alignment is chained through the already aligned
coordinate system, rather than recomputed independently between two unaligned
canonical orderings.

\paragraph{Predictive coding mode.}
For non-keyframe layers, the predictor operates on decoded aligned context:
\begin{equation}
\hat{U}_{\ell,t}^{(i)}
=
g_{\theta}
\left(
\tilde{U}_{\ell-1,t}^{(i)}, e_{\ell}, e_t
\right),
\label{eq:default_predictor_context}
\end{equation}
where $\tilde{U}_{\ell-1,t}^{(i)}$ is the decoded aligned block from the
previous layer, $e_{\ell}$ is the layer embedding, and $e_t$ is the block-type
embedding. The residual is then
\begin{equation}
R_{\ell,t}^{(i)}
=
\bar{U}_{\ell,t}^{(i)}
-
\hat{U}_{\ell,t}^{(i)}.
\label{eq:default_residual}
\end{equation}
Using decoded context avoids train--test mismatch because the decoder cannot
access the original full-precision previous layer during reconstruction.

\paragraph{Keyframe mode.}
For a keyframe layer, the predictor is bypassed and the layer is encoded
without depending on the previous decoded layer. The coding mode is indicated
by a keyframe flag. When the entropy model requires predictor-dependent
conditioning variables, these variables are set to zero for keyframes:
\begin{equation}
c_{\ell,t}^{\mathrm{key}}
=
\big[
e_{\ell}, e_t, s_{\ell,t}, \mathbf{0}, k_{\ell}=1
\big].
\label{eq:keyframe_context}
\end{equation}
where $s_{\ell,t}$ denotes quantizer-scale information and $\mathbf{0}$ denotes
zero-filled predictor-statistic fields. For predictive frames, the context is
\begin{equation}
c_{\ell,t}^{\mathrm{pred}}
=
\big[
e_{\ell}, e_t, s_{\ell,t},
\psi(\hat{U}_{\ell,t}), k_{\ell}=0
\big],
\label{eq:predictive_context}
\end{equation}
where $\psi(\hat{U}_{\ell,t})$ denotes predictor-derived statistics available
during decoding. This makes the entropy-model conditioning well-defined for
both keyframe and predictive coding modes.

\paragraph{Quantizer initialization.}
The default quantizer initialization uses empirical residual statistics. For
each layer and block type, the initial quantizer step size is
\begin{equation}
\Delta_{\ell,t}^{(0)}
=
\gamma
\cdot
\operatorname{Std}
\left(
R_{\ell,t}
\right).
\label{eq:default_quantizer_init}
\end{equation}
where $\operatorname{Std}(R_{\ell,t})$ is the empirical standard deviation of
the residual tensor for block type $t$ at layer $\ell$, and $\gamma$ is a
fixed scaling coefficient. The step sizes are then updated during
rate--distortion optimization. Any reconstruction-error-based initialization
is treated as an optional variant and should not be interpreted as the default
used in the main experiments.

\paragraph{Rate accounting.}
All reported rates correspond to complete serialized checkpoint bitstreams.
The bit count includes all information required for reconstruction:
\begin{equation}
|b|
=
|b_{\mathrm{sym}}|
+
|b_{\mathrm{key}}|
+
|b_{\mathrm{perm}}|
+
|b_{\mathrm{scale}}|
+
|b_{\mathrm{entropy}}|
+
|b_{\mathrm{meta}}|,
\label{eq:default_bitstream_accounting}
\end{equation}
where $|b_{\mathrm{sym}}|$ denotes quantized residual or value symbols,
$|b_{\mathrm{key}}|$ denotes keyframe-coded symbols, $|b_{\mathrm{perm}}|$
denotes permutation side information, $|b_{\mathrm{scale}}|$ denotes quantizer
scale parameters, $|b_{\mathrm{entropy}}|$ denotes entropy-model side
information, and $|b_{\mathrm{meta}}|$ denotes tensor headers, coding-mode
flags, block descriptions, and metadata needed by the decoder. The reported
rate is then
\begin{equation}
R(b)
=
\frac{|b|}{N_{\mathrm{param}}}
\quad
\text{bits/parameter}.
\label{eq:default_rate_accounting}
\end{equation}

\section{Permutation Side-Information Accounting}
\label{app:permutation_side_info}

This appendix clarifies how MCWC accounts for permutation side information.
Because MCWC uses blockwise functional alignment, the decoder must receive the
permutation information required to invert the aligned coordinate system. The
reported bitstream rate therefore includes permutation side information as part
of the total serialized checkpoint.

\paragraph{Granularity of permutation coding.}
Permutation side information is not counted as one permutation per Transformer
layer only. Instead, it is counted at the same granularity at which alignment is
performed. For each aligned layer pair $\ell$, block type $t$, and alignment
group $g$, MCWC stores the permutation needed to reconstruct the original block
ordering:
\begin{equation}
b_{\mathrm{perm}}
=
\bigcup_{\ell,t,g}
b_{\mathrm{perm}}^{(\ell,t,g)} .
\label{eq:perm_side_info_union}
\end{equation}
Here, $\ell$ indexes the layer or block, $t$ indexes the aligned tensor family
such as FFN channels, attention heads, or channel groups, and $g$ indexes
subgroups introduced by tensor partitioning, block chunking, or stage-restricted
alignment. This accounting is stricter than counting only one global
permutation per layer, because it includes all local permutations required by
the decoder.

\paragraph{Theoretical coding cost.}
For a permutation of $B$ elements, the information-theoretic cost of an
unstructured permutation is
\begin{equation}
\log_2(B!)
\quad
\text{bits}.
\label{eq:perm_log_factorial}
\end{equation}
Using Stirling's approximation, this scales as
\begin{equation}
\log_2(B!)
=
B\log_2 B
-
B\log_2 e
+
\mathcal{O}(\log B).
\label{eq:perm_stirling}
\end{equation}
Therefore, the total uncompressed permutation description length across all
alignment groups is
\begin{equation}
|b_{\mathrm{perm}}|_{\mathrm{raw}}
=
\sum_{\ell,t,g}
\log_2
\left(
B_{\ell,t,g}!
\right),
\label{eq:raw_perm_cost}
\end{equation}
where $B_{\ell,t,g}$ is the number of elements permuted in group $g$ for block
type $t$ at layer $\ell$.

\paragraph{Delta-coded Lehmer representation.}
MCWC represents permutations using Lehmer codes and then delta-codes consecutive
Lehmer digits across adjacent aligned groups. Let
\begin{equation}
\mathbf{a}_{\ell,t,g}
=
(a_{\ell,t,g}^{(1)},\ldots,a_{\ell,t,g}^{(B_{\ell,t,g})})
\label{eq:lehmer_digits}
\end{equation}
denote the Lehmer digit sequence for one permutation. Instead of coding each
sequence independently with a fixed-length representation, MCWC codes the
difference
\begin{equation}
\Delta \mathbf{a}_{\ell,t,g}
=
\mathbf{a}_{\ell,t,g}
-
\mathbf{a}_{\ell-1,t,g}
\label{eq:delta_lehmer_digits}
\end{equation}
whenever the corresponding previous aligned group exists. The resulting symbols
are entropy-coded. The serialized permutation cost is therefore
\begin{equation}
|b_{\mathrm{perm}}|
=
\sum_{\ell,t,g}
\left[
-\log_2
p_{\phi}
\left(
\Delta \mathbf{a}_{\ell,t,g}
\right)
\right]
+
|b_{\mathrm{perm,meta}}|,
\label{eq:entropy_perm_cost}
\end{equation}
where $p_{\phi}$ is the entropy model for delta-coded Lehmer symbols and
$|b_{\mathrm{perm,meta}}|$ includes the metadata required to identify alignment
groups, block sizes, coding modes, and reset points.

\paragraph{Why permutation overhead can be larger than a single-assignment estimate.}
A simple estimate based only on the number of attention-head or FFN-channel
assignments may underestimate the reported permutation overhead. The reported
permutation side information includes all aligned tensor families, all
subgroups, all layer pairs, reset markers, stage boundaries, keyframe
boundaries, block partition metadata, and entropy-model overhead required for
decoding. Therefore, the reported percentage should be interpreted as the
complete serialized permutation stream, not merely as the theoretical cost of a
single global assignment per layer.

\paragraph{Scaling limitation.}
Although delta-coded Lehmer symbols reduce overhead in standard regimes,
permutation side information may become non-negligible at extreme compression
rates or for very wide models. This is because the useful residual stream
shrinks as the target rate decreases, while the permutation metadata does not
necessarily shrink at the same rate. For this reason, MCWC disables or coarsens
alignment groups when the expected reduction in residual entropy is smaller
than the cost of transmitting the corresponding permutation:
\begin{equation}
\Delta H_{\mathrm{residual}}^{(\ell,t,g)}
>
|b_{\mathrm{perm}}^{(\ell,t,g)}|.
\label{eq:perm_gain_condition}
\end{equation}
Alignment is retained only when the residual-entropy reduction exceeds the
permutation coding cost. This rule prevents MCWC from spending bits on
permutations that do not improve the final rate--distortion trade-off.

\paragraph{Takeaway.}
The permutation overhead reported by MCWC corresponds to the complete serialized
side-information stream required for exact decoding, including local alignment
groups, tensor-family identifiers, reset points, block metadata, and entropy
coding overhead. Delta-coded Lehmer coding reduces this cost, but the overhead
can still become important at extreme compression rates or in very wide models.
Accordingly, MCWC treats permutation coding as part of the rate--distortion
objective and applies alignment only when the saved residual bits justify the
additional side information.

\section{Layer-Sequential Decoding and Distributed Loading}
\label{app:sequential_loading}

MCWC uses predictive coding across depth. As a result, non-keyframe layers are
decoded from previously reconstructed aligned context. This design improves
compression because it turns depth-wise redundancy into a low-entropy residual
stream, but it also introduces a layer-sequential dependency during decoding.

\paragraph{Sequential dependency.}
For a non-keyframe layer $\ell$, the decoder reconstructs the aligned block as
\begin{equation}
\tilde{U}_{\ell,t}^{(i)}
=
g_{\theta}
\left(
\tilde{U}_{\ell-1,t}^{(i)}, e_{\ell}, e_t
\right)
+
\tilde{R}_{\ell,t}^{(i)},
\label{eq:sequential_decode_dependency}
\end{equation}
where $\tilde{U}_{\ell-1,t}^{(i)}$ is the previously decoded aligned block,
$g_{\theta}$ is the predictor, and $\tilde{R}_{\ell,t}^{(i)}$ is the decoded
residual. Thus, layer $\ell$ cannot be fully reconstructed before the required
context from layer $\ell-1$ is available.

\paragraph{Impact on distributed loading.}
This sequential dependency can restrict layer-wise parallelization during model
loading in distributed serving systems, including pipeline-parallel and
tensor-parallel environments. In a purely independent layerwise representation,
different layers can be loaded or materialized in parallel. In MCWC, predictive
segments must be decoded in order unless they are separated by keyframes.
Therefore, MCWC trades some loading parallelism for a smaller serialized
checkpoint representation.

\paragraph{Keyframe-based segment parallelism.}
Periodic keyframes mitigate this limitation. Let $K$ denote the keyframe
interval. A layer $\ell$ is a keyframe when
\begin{equation}
k(\ell)
=
\mathbb{I}
\left[
(\ell-1)\bmod K = 0
\right].
\label{eq:keyframe_indicator_parallel}
\end{equation}
Each keyframe starts an independently decodable segment. If the model has $L$
layers and keyframe interval $K$, the number of decoding segments is
\begin{equation}
N_{\mathrm{seg}}
=
\left\lceil
\frac{L}{K}
\right\rceil .
\label{eq:num_decode_segments}
\end{equation}
These segments can be assigned to different workers during model loading. Within
each segment, decoding remains sequential, but different segments can be
decoded concurrently.

\paragraph{Parallel decoding schedule.}
Let segment $s$ contain layers
\begin{equation}
\mathcal{S}_s
=
\left\{
(s-1)K+1,\ldots,\min(sK,L)
\right\}.
\label{eq:segment_definition}
\end{equation}
The decoder first reconstructs the keyframe layer of each segment, then
processes the remaining predictive layers within the segment. The loading
schedule is therefore
\begin{equation}
\mathrm{Decode}(W)
=
\bigparallel_{s=1}^{N_{\mathrm{seg}}}
\mathrm{DecodeSegment}(\mathcal{S}_s),
\label{eq:segment_parallel_decode}
\end{equation}
where $\bigparallel$ denotes segment-level parallel execution. This design
preserves the compression benefit of predictive coding while exposing
coarse-grained parallelism through keyframes.

\begin{table}[htbp]
\centering
\caption{
Decoding parallelism induced by the keyframe interval. Smaller keyframe
intervals create more independently decodable segments and therefore expose more
parallelism during distributed loading, but they also increase the number of
absolute-coded layers and can reduce compression efficiency. Larger intervals
improve predictive compression but impose longer sequential chains during
decoding. MCWC therefore uses keyframes as a rate--parallelism trade-off.
}
\label{tab:keyframe_parallelism}
\scriptsize
\renewcommand{\arraystretch}{1.15}
\setlength{\tabcolsep}{4pt}
\resizebox{\linewidth}{!}{
\begin{tabular}{cccc}
\toprule
\textbf{Number of layers $L$}
&
\textbf{Keyframe interval $K$}
&
\textbf{Independent segments $\lceil L/K\rceil$}
&
\textbf{Loading implication}
\\
\midrule
24 & 4 & 6 & More segment-level parallelism; higher keyframe cost. \\
24 & 6 & 4 & Balanced compression and loading parallelism. \\
24 & 8 & 3 & Lower keyframe cost; longer sequential chains. \\
32 & 4 & 8 & Stronger distributed loading parallelism. \\
32 & 8 & 4 & Moderate segment-level parallelism. \\
32 & 16 & 2 & Stronger compression, weaker loading parallelism. \\
\bottomrule
\end{tabular}
}
\end{table}

\paragraph{Practical implication.}
MCWC should not be interpreted as a fully layer-independent representation.
Instead, it is a segmented predictive representation. Keyframes allow the
decoder to trade compression rate against loading parallelism. For
single-machine loading, the sequential dependency has limited impact because
decode time remains small compared with offline encoding. For distributed
loading, shorter keyframe intervals can be selected when parallel materialization
is more important than maximum compression.

\paragraph{Takeaway.}
Layer-sequential prediction improves compression but restricts fully independent
layer-wise loading. Periodic keyframes reduce this restriction by creating
independently decodable segments. Thus, MCWC exposes a tunable trade-off between
rate, prediction length, and distributed loading parallelism.

\section{Correctness and Symmetry Conditions}
\label{app:symmetry_correctness}

\subsection{Notation and computational graph fragments}
\label{app:notation_fragments}

Let a network be a composition of parameterized modules. For a module $m$ with input $x$ and output $y$, write
\begin{equation}
y \;=\; F_m(x; \theta_m),
\label{eq:app_module_def}
\end{equation}
where $F_m(\cdot;\cdot)$ is the module mapping and $\theta_m$ is its parameter set.

A \emph{permutation operator} acting on a channel or head dimension of size $n$ is represented by a permutation matrix $\Pi \in \{0,1\}^{n\times n}$ satisfying
\begin{equation}
\Pi^\top \Pi \;=\; I_n,
\qquad
\Pi^{-1} \;=\; \Pi^\top,
\label{eq:app_perm_props}
\end{equation}
where $I_n$ is the $n\times n$ identity matrix, and $\Pi^{-1}$ exists and equals $\Pi^\top$.

A \emph{block permutation} generalizes $\Pi$ to act on grouped channels (e.g., heads or channel groups). For group size $g$ and number of groups $n_g$, define
\begin{equation}
\Pi^{\mathrm{blk}}
\;=\;
\Pi \otimes I_g,
\label{eq:app_block_perm}
\end{equation}
where $\Pi \in \{0,1\}^{n_g\times n_g}$ permutes groups, $I_g$ is the $g\times g$ identity, and $\otimes$ denotes the Kronecker product. In \eqref{eq:app_block_perm}, $\Pi^{\mathrm{blk}}$ permutes groups while preserving the internal order within each group.

\subsection{Function-preserving permutations for two-layer MLPs}
\label{app:mlp_permutation}

Consider a two-layer MLP with a coordinate-wise nonlinearity applied on the hidden dimension:
\begin{equation}
h \;=\; \phi\!\left(W_1 x + b_1\right),
\qquad
y \;=\; W_2 h + b_2,
\label{eq:app_mlp}
\end{equation}
where $x \in \mathbb{R}^{d_{\mathrm{in}}}$, $h \in \mathbb{R}^{d_{\mathrm{hid}}}$, and $y \in \mathbb{R}^{d_{\mathrm{out}}}$.
The matrices satisfy $W_1 \in \mathbb{R}^{d_{\mathrm{hid}}\times d_{\mathrm{in}}}$ and $W_2 \in \mathbb{R}^{d_{\mathrm{out}}\times d_{\mathrm{hid}}}$,
and the biases satisfy $b_1 \in \mathbb{R}^{d_{\mathrm{hid}}}$ and $b_2 \in \mathbb{R}^{d_{\mathrm{out}}}$.

\paragraph{Coordinate-wise nonlinearity assumption.}
Throughout this appendix, ``elementwise'' means that $\phi:\mathbb{R}^{d_{\mathrm{hid}}}\!\to\mathbb{R}^{d_{\mathrm{hid}}}$ acts independently
and identically on each coordinate, i.e.,
\begin{equation}
\phi(u)
\;=\;
\big(\varphi(u_1),\dots,\varphi(u_{d_{\mathrm{hid}}})\big),
\label{eq:app_phi_coordinatewise}
\end{equation}
for a shared scalar function $\varphi:\mathbb{R}\!\to\!\mathbb{R}$ (e.g., ReLU, GELU).
This definition excludes nonlinearities with \emph{distinct learned parameters per channel} unless those parameters are permuted accordingly
(see Remark~\pinkref{rem:param_act_perm}).

Let $\Pi \in \{0,1\}^{d_{\mathrm{hid}}\times d_{\mathrm{hid}}}$ be a permutation matrix acting on the hidden dimension.
Define reparameterized weights and biases by
\begin{equation}
W_1' \;=\; \Pi W_1,
\qquad
b_1' \;=\; \Pi b_1,
\qquad
W_2' \;=\; W_2 \Pi^{-1},
\qquad
b_2' \;=\; b_2.
\label{eq:app_mlp_reparam}
\end{equation}
In \eqref{eq:app_mlp_reparam}, the first-layer output channels are permuted by left-multiplying $W_1$ and $b_1$ with $\Pi$,
and the second-layer input channels are compensated by right-multiplying $W_2$ with $\Pi^{-1}$.

\begin{proposition}
\label{prop:mlp_perm_invariance}
For any permutation matrix $\Pi$ and any coordinate-wise nonlinearity $\phi$ of the form \eqref{eq:app_phi_coordinatewise},
the MLP input--output mapping is invariant under \eqref{eq:app_mlp_reparam}, meaning that for all $x$,
\begin{equation}
W_2'\,\phi\!\left(W_1' x + b_1'\right) + b_2'
\;=\;
W_2\,\phi\!\left(W_1 x + b_1\right) + b_2.
\label{eq:app_mlp_invariance}
\end{equation}
\end{proposition}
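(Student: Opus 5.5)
The plan is a direct computation built on one auxiliary fact: a coordinate-wise nonlinearity commutes with permutation matrices. Concretely, I would first show that for every permutation matrix $\Pi$ and every $u\in\mathbb{R}^{d_{\mathrm{hid}}}$,
\begin{equation*}
\phi(\Pi u) \;=\; \Pi\,\phi(u).
\end{equation*}
Write $\Pi$ through its underlying bijection $\sigma$, so that $(\Pi u)_i = u_{\sigma(i)}$. By \eqref{eq:app_phi_coordinatewise}, the $i$-th coordinate of $\phi(\Pi u)$ is $\varphi(u_{\sigma(i)})$. The $i$-th coordinate of $\Pi\phi(u)$ is $\phi(u)_{\sigma(i)}=\varphi(u_{\sigma(i)})$, so the two agree. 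This step is where the hypothesis that $\varphi$ is \emph{shared} across coordinates is used. With distinct per-channel functions $\varphi_i$ the identity fails, which is exactly the case deferred to Remark~\ref{rem:param_act_perm}.

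Next, substitute \eqref{eq:app_mlp_reparam} into the left-hand side of \eqref{eq:app_mlp_invariance}. The pre-activation satisfies $W_1'x+b_1' = \Pi W_1 x + \Pi b_1 = \Pi(W_1x+b_1)$ by linearity. Applying the commutation lemma then gives $\phi(W_1'x+b_1') = \Pi\,\phi(W_1x+b_1)$.

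Then I would multiply by $W_2' = W_2\Pi^{-1}$ and use $\Pi^{-1}\Pi=I_{d_{\mathrm{hid}}}$ from \eqref{eq:app_perm_props}. This yields $W_2\,\phi(W_1x+b_1)$, and adding $b_2'=b_2$ gives the right-hand side for every $x$.

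I do not expect a serious obstacle. The only point needing care is the index convention in the commutation lemma: one must check that the same $\sigma$ appears on both sides, and this is where the coordinate-wise assumption does the work. Everything else is associativity of matrix multiplication and $\Pi^{-1}\Pi=I$.
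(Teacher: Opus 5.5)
Your proposal is correct and takes essentially the same route as the paper's proof: you write the pre-activation as $\Pi(W_1x+b_1)$, commute $\Pi$ through the coordinate-wise $\phi$, and cancel via $W_2\Pi^{-1}\Pi = W_2$. Your explicit index check of $\phi(\Pi u)=\Pi\phi(u)$ through the underlying bijection only fills in the step that the paper asserts directly from the coordinate-wise form.
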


\begin{proof}
Let $u = W_1 x + b_1$ and $u' = W_1' x + b_1'$. Using \eqref{eq:app_mlp_reparam},
\begin{equation}
u'
\;=\;
\Pi W_1 x + \Pi b_1
\;=\;
\Pi (W_1 x + b_1)
\;=\;
\Pi u.
\label{eq:app_uprime}
\end{equation}
Because $\phi$ is coordinate-wise as in \eqref{eq:app_phi_coordinatewise}, permuting coordinates before applying $\phi$
permutes the outputs after applying $\phi$:
\begin{equation}
\phi(u')
\;=\;
\phi(\Pi u)
\;=\;
\Pi \phi(u).
\label{eq:app_phi_commute}
\end{equation}
Substituting \eqref{eq:app_phi_commute} into the output with $W_2'$ gives
\begin{equation}
W_2' \phi(u') + b_2'
\;=\;
W_2 \Pi^{-1} \Pi \phi(u) + b_2
\;=\;
W_2 \phi(u) + b_2,
\label{eq:app_mlp_proof_end}
\end{equation}
which proves \eqref{eq:app_mlp_invariance}.
\end{proof}

\begin{remark}[Parameterized per-channel nonlinearities]
\label{rem:param_act_perm}
If the nonlinearity has per-channel parameters, e.g. $\phi_a(u)_i = \varphi(u_i; a_i)$ (as in PReLU),
then the commutation in \eqref{eq:app_phi_commute} holds only if the parameters are permuted consistently.
Specifically, defining $a'=\Pi a$ yields
\begin{equation}
\phi_{a'}(\Pi u) \;=\; \Pi \phi_{a}(u),
\end{equation}
and the invariance remains valid under the augmented reparameterization
$(W_1',b_1',W_2',b_2',a') = (\Pi W_1,\Pi b_1,W_2\Pi^{-1},b_2,\Pi a)$.
In our experiments, Transformer MLP activations are parameter-free (e.g., GELU), so \eqref{eq:app_phi_coordinatewise} applies directly.
\end{remark}

\subsection{Feed-forward sublayers in Transformer blocks}
\label{app:ffn_transformer}

A standard Transformer feed-forward network (FFN) in a pre-norm block takes the form
\begin{equation}
\mathrm{FFN}(x)
\;=\;
W_{\mathrm{down}}\,\phi\!\left(W_{\mathrm{up}} x + b_{\mathrm{up}}\right) + b_{\mathrm{down}},
\label{eq:app_ffn}
\end{equation}
where $x \in \mathbb{R}^{d_{\mathrm{model}}}$, $W_{\mathrm{up}} \in \mathbb{R}^{d_{\mathrm{ff}}\times d_{\mathrm{model}}}$, $W_{\mathrm{down}} \in \mathbb{R}^{d_{\mathrm{model}}\times d_{\mathrm{ff}}}$, and $d_{\mathrm{ff}}$ is the intermediate width.

Let $\Pi_{\mathrm{ff}} \in \{0,1\}^{d_{\mathrm{ff}}\times d_{\mathrm{ff}}}$ permute intermediate FFN channels. Define
\begin{equation}
W_{\mathrm{up}}' \;=\; \Pi_{\mathrm{ff}} W_{\mathrm{up}},
\qquad
b_{\mathrm{up}}' \;=\; \Pi_{\mathrm{ff}} b_{\mathrm{up}},
\qquad
W_{\mathrm{down}}' \;=\; W_{\mathrm{down}} \Pi_{\mathrm{ff}}^{-1},
\qquad
b_{\mathrm{down}}' \;=\; b_{\mathrm{down}}.
\label{eq:app_ffn_reparam}
\end{equation}
Then the same argument as Proposition~\pinkref{prop:mlp_perm_invariance} yields
\begin{equation}
\mathrm{FFN}'(x)
\;=\;
W_{\mathrm{down}}'\,\phi\!\left(W_{\mathrm{up}}' x + b_{\mathrm{up}}'\right) + b_{\mathrm{down}}'
\;=\;
\mathrm{FFN}(x)
\label{eq:app_ffn_invariance}
\end{equation}
for all $x$. In \eqref{eq:app_ffn_invariance}, the intermediate permutation is canceled by the downstream compensation and elementwise $\phi$ commutes with permutation.

\subsection{Multi-head attention head permutations}
\label{app:mha_perm}

Consider multi-head self-attention with $H$ heads, head dimension $d_h$, and $d_{\mathrm{model}} = H d_h$. Let the per-head projection matrices be assembled into block rows:
\begin{equation}
W_Q \;=\; \begin{bmatrix} W_Q^{(1)} \\ \vdots \\ W_Q^{(H)} \end{bmatrix},
\quad
W_K \;=\; \begin{bmatrix} W_K^{(1)} \\ \vdots \\ W_K^{(H)} \end{bmatrix},
\quad
W_V \;=\; \begin{bmatrix} W_V^{(1)} \\ \vdots \\ W_V^{(H)} \end{bmatrix},
\label{eq:app_qkv_stack}
\end{equation}
where each $W_Q^{(h)},W_K^{(h)},W_V^{(h)} \in \mathbb{R}^{d_h \times d_{\mathrm{model}}}$ and the full matrices satisfy $W_Q,W_K,W_V \in \mathbb{R}^{(H d_h)\times d_{\mathrm{model}}}$.

Let $\Pi_H \in \{0,1\}^{H\times H}$ permute head indices, and define the induced block permutation on the concatenated head dimension:
\begin{equation}
\Pi_{\mathrm{head}}
\;=\;
\Pi_H \otimes I_{d_h}.
\label{eq:app_head_perm}
\end{equation}
Define reparameterized attention projections by
\begin{equation}
W_Q' \;=\; \Pi_{\mathrm{head}} W_Q,
\qquad
W_K' \;=\; \Pi_{\mathrm{head}} W_K,
\qquad
W_V' \;=\; \Pi_{\mathrm{head}} W_V,
\qquad
W_O' \;=\; W_O \Pi_{\mathrm{head}}^{-1},
\label{eq:app_mha_reparam}
\end{equation}
where $W_O \in \mathbb{R}^{d_{\mathrm{model}}\times (H d_h)}$ is the output projection and $W_O'$ compensates the head permutation.

The attention operator computes per-head outputs and concatenates them. Let $\mathrm{Head}^{(h)}(x)$ denote the output of head $h$ before concatenation, and let $\mathrm{Concat}(\cdot)$ concatenate head outputs along the feature dimension. The multi-head attention output is
\begin{equation}
\mathrm{MHA}(x)
\;=\;
W_O\,\mathrm{Concat}\!\left(\mathrm{Head}^{(1)}(x),\dots,\mathrm{Head}^{(H)}(x)\right).
\label{eq:app_mha_def}
\end{equation}
Under \eqref{eq:app_mha_reparam}, the permutation simply reorders the heads inside the concatenation:
\begin{equation}
\mathrm{Concat}\!\left(\mathrm{Head}'^{(1)}(x),\dots,\mathrm{Head}'^{(H)}(x)\right)
\;=\;
\Pi_{\mathrm{head}}\,
\mathrm{Concat}\!\left(\mathrm{Head}^{(1)}(x),\dots,\mathrm{Head}^{(H)}(x)\right).
\label{eq:app_concat_perm}
\end{equation}
Substituting \eqref{eq:app_concat_perm} into the output with $W_O'$ yields invariance:
\begin{equation}
\mathrm{MHA}'(x)
\;=\;
W_O' \,\mathrm{Concat}\!\left(\mathrm{Head}'^{(1)}(x),\dots,\mathrm{Head}'^{(H)}(x)\right)
\;=\;
W_O \Pi_{\mathrm{head}}^{-1} \Pi_{\mathrm{head}} \mathrm{Concat}(\cdots)
\;=\;
\mathrm{MHA}(x).
\label{eq:app_mha_invariance}
\end{equation}
In \eqref{eq:app_mha_invariance}, the head permutation cancels exactly, provided the same permutation is applied consistently to $W_Q,W_K,W_V$ and compensated in $W_O$.

\subsection{Composing permutations inside a Transformer block}
\label{app:transformer_block_comp}

A pre-norm Transformer block can be written as
\begin{equation}
x^{(1)}
\;=\;
x + \mathrm{MHA}(\mathrm{LN}_1(x)),
\qquad
x^{(2)}
\;=\;
x^{(1)} + \mathrm{FFN}(\mathrm{LN}_2(x^{(1)})),
\label{eq:app_pre_norm_block}
\end{equation}
where $\mathrm{LN}_1$ and $\mathrm{LN}_2$ are layer normalization operators and the residual connections add vectors in $\mathbb{R}^{d_{\mathrm{model}}}$.

The invariances in \eqref{eq:app_mha_invariance} and \eqref{eq:app_ffn_invariance} imply that if the internal permutations act only on intermediate head or FFN dimensions and are compensated within the sublayer, then the outputs of $\mathrm{MHA}(\cdot)$ and $\mathrm{FFN}(\cdot)$ are unchanged for any input. Because residual addition is performed in the unchanged model dimension, the block mapping remains unchanged:
\begin{equation}
x^{(2)}{}'
\;=\;
x^{(2)}.
\label{eq:app_block_invariance}
\end{equation}
In \eqref{eq:app_block_invariance}, invariance holds because each residual branch computes the same vector in $\mathbb{R}^{d_{\mathrm{model}}}$ after reparameterization, and the skip connection adds the same $x$.

\subsection{Edge cases and architectural conditions}
\label{app:edge_cases}

\subsubsection{Layer normalization}
\label{app:layernorm_case}

Layer normalization over the model dimension has parameters $\gamma,\beta \in \mathbb{R}^{d_{\mathrm{model}}}$ and acts as
\begin{equation}
\mathrm{LN}(x)
\;=\;
\gamma \odot \frac{x - \mu(x)\mathbf{1}}{\sqrt{\sigma^2(x) + \epsilon}}
\;+\;
\beta,
\label{eq:app_layernorm}
\end{equation}
where $\mu(x)$ and $\sigma^2(x)$ are the mean and variance computed across the model dimension, $\mathbf{1}$ is the all-ones vector, $\epsilon$ is a small constant, and $\odot$ is elementwise multiplication.

MCWC permutations act on intermediate FFN channels and head indices, not on the model dimension $d_{\mathrm{model}}$ at the LN interface. Therefore \eqref{eq:app_layernorm} is unaffected by the alignment permutations. If a permutation is applied on the model dimension, invariance would require permuting $\gamma$ and $\beta$ accordingly and ensuring every consumer of the permuted model coordinates is compensated, which is not done in MCWC.

\subsubsection{Residual connections}
\label{app:residual_case}

Residual addition requires both summands to lie in the same coordinate system. In \eqref{eq:app_pre_norm_block}, permutations are confined within $\mathrm{MHA}$ and $\mathrm{FFN}$ and are compensated internally by $W_O$ and $W_{\mathrm{down}}$, so the residual branch outputs remain in the original model coordinate system. This ensures that the residual sums are well-defined and identical before and after reparameterization.

\subsubsection{Tied embeddings and output heads}
\label{app:tied_embeddings_case}

In language models with tied input and output embeddings, the token embedding matrix $E \in \mathbb{R}^{V\times d_{\mathrm{model}}}$ is shared with the output projection. Because MCWC does not permute the model dimension at the embedding interface, tying is preserved without modification. If a method permuted the model dimension, invariance under weight tying would require applying the same permutation to both the embedding rows in the forward path and to the output projection, maintaining exact sharing, which imposes a global constraint across the network. MCWC avoids this complication by restricting permutations to symmetry dimensions internal to sublayers.

\subsubsection{Grouped linear maps and normalization channel groups}
\label{app:grouped_case}

For grouped linear maps with group size $g$ and $n_g$ groups, parameters can be partitioned into group blocks. The invariance statement remains valid when permutations are applied at the group level using \eqref{eq:app_block_perm}, provided every operator that consumes the grouped dimension is compensated by the inverse permutation on that grouped dimension. This is the setting in which MCWC aligns channel groups, including group-normalization-adjacent channel partitions, by treating each group as a block.

\subsubsection{Attention variants and head mixing}
\label{app:head_mixing_case}

The head-permutation invariance in \eqref{eq:app_mha_invariance} assumes that heads are concatenated and then linearly mixed only through $W_O$. Any operation that mixes head channels before $W_O$, such as explicit cross-head mixing in the attention computation or head-wise nonlinearities that couple heads, would break the separability used in \eqref{eq:app_concat_perm}. In such architectures, only permutations that preserve the coupling structure remain function-preserving, and MCWC alignment must respect the admissible symmetry group defined by that coupling.

\subsection{Summary of admissible symmetry group used by MCWC}
\label{app:symmetry_summary}

MCWC applies permutations only along dimensions that satisfy both of the following conditions:
\begin{equation}
\Pi \text{ acts on a dimension consumed only through pointwise or block-separable operations,}
\label{eq:app_cond_separable}
\end{equation}
\begin{equation}
\text{Every introduction of } \Pi \text{ is paired with an adjacent compensation by } \Pi^{-1}.
\label{eq:app_cond_compensate}
\end{equation}
In \eqref{eq:app_cond_separable}, separability ensures commutation with elementwise nonlinearities and concatenation structure, and in \eqref{eq:app_cond_compensate}, compensation ensures that the overall input--output mapping is unchanged. These are satisfied by intermediate FFN channels and by attention head indices in standard Transformer blocks, and they define the function-preserving permutation family exploited by MCWC for motion compensation.

\section{Full Codec Specification}
\label{app:codec_spec}

\subsection{Bitstream primitives and conventions}
\label{app:bitstream_primitives}

The codec serializes a deployable model as a single bitstream
\begin{equation}
b \;=\; \mathrm{Serialize}\!\left(\mathcal{H},\, \mathcal{S}\right),
\label{eq:app_bitstream_def}
\end{equation}
where $\mathcal{H}$ is a fixed-size header region containing metadata required to parse the stream and $\mathcal{S}$ is a concatenation of entropy-coded segments corresponding to layers and block types.

The bitstream is parsed in a deterministic order. Let the canonical traversal order be
\begin{equation}
\mathcal{O}
\;=\;
\left\{(\ell,t)\;:\;\ell \in \{1,\dots,L\},\; t \in \mathcal{T}_{\ell}\right\},
\label{eq:app_traversal_order}
\end{equation}
where $L$ is the number of layers and $\mathcal{T}_{\ell}$ is the ordered set of block types present at layer $\ell$.

For each pair $(\ell,t)\in \mathcal{O}$, the stream contains a \emph{block record} with the following components:
\begin{equation}
\mathrm{Record}_{\ell,t}
\;=\;
\left(\mathrm{Mode}_{\ell},\; \mathrm{Perm}_{\ell,t},\; \mathrm{QInfo}_{\ell,t},\; \mathrm{Codes}_{\ell,t}\right),
\label{eq:app_record_def}
\end{equation}
where $\mathrm{Mode}_{\ell}$ indicates keyframe or predictive coding, $\mathrm{Perm}_{\ell,t}$ stores the alignment permutation side information, $\mathrm{QInfo}_{\ell,t}$ stores any quantizer side information needed to dequantize codes, and $\mathrm{Codes}_{\ell,t}$ is the entropy-coded symbol stream (either absolute codes for keyframes or residual codes for predicted layers).

\subsection{Header format and global side information}
\label{app:header_format}

The header stores all information that must be known before decoding any record. The header fields are
\begin{equation}
\mathcal{H}
\;=\;
\left(
\mathrm{Magic},\,
\mathrm{Version},\,
L,\,
K,\,
\mathrm{ArchID},\,
\mathrm{ParamShapes},\,
\mathrm{BlockSpec},\,
\mathrm{EntropySpec}
\right),
\label{eq:app_header_fields}
\end{equation}
where each component is defined as follows.

The stream identifier and version are
\begin{equation}
\mathrm{Magic} \in \{0,1\}^{32},
\qquad
\mathrm{Version} \in \{0,1\}^{16},
\label{eq:app_magic_version}
\end{equation}
where $\mathrm{Magic}$ is a constant signature and $\mathrm{Version}$ identifies the codec version.

The layer count and keyframe interval are
\begin{equation}
L \in \mathbb{N},
\qquad
K \in \mathbb{N},
\label{eq:app_LK}
\end{equation}
where $L$ is the number of serialized layers and $K$ is the keyframe interval.

The architecture identifier is
\begin{equation}
\mathrm{ArchID} \in \{0,1\}^{32},
\label{eq:app_archid}
\end{equation}
where $\mathrm{ArchID}$ selects a known model family (e.g., decoder-only Transformer, ViT) so the decoder can reconstruct module wiring and block extraction logic deterministically.

Parameter tensor shapes are stored as
\begin{equation}
\mathrm{ParamShapes}
\;=\;
\left\{\mathrm{shape}(W_{\ell,p})\right\}_{\ell=1,p=1}^{L,P_{\ell}},
\label{eq:app_paramshapes}
\end{equation}
where each layer $\ell$ contains $P_{\ell}$ named parameter tensors $W_{\ell,p}$ and $\mathrm{shape}(\cdot)$ provides the integer dimension tuple required to allocate and reshape decoded arrays.

The block specification defines, for each layer $\ell$ and block type $t\in\mathcal{T}_{\ell}$,
\begin{equation}
\mathrm{BlockSpec}
\;=\;
\left\{
\left(\ell,t, B_{\ell,t}, \mathrm{BlockShape}_{\ell,t}, \mathrm{BlockAxis}_{\ell,t}\right)
\right\},
\label{eq:app_blockspec}
\end{equation}
where $B_{\ell,t}$ is the number of blocks, $\mathrm{BlockShape}_{\ell,t}$ is the per-block tensor shape, and $\mathrm{BlockAxis}_{\ell,t}$ defines which axis (or axes) in the original parameter tensors correspond to block indices.

The entropy model specification stores the parameters needed to decode entropy-coded symbols:
\begin{equation}
\mathrm{EntropySpec}
\;=\;
\left(
\mathrm{EntropyModelID},\,
\psi,\,
\mathrm{CodeAlphabet}
\right),
\label{eq:app_entropyspec}
\end{equation}
where $\mathrm{EntropyModelID}$ selects the entropy model family, $\psi$ are the learned model parameters, and $\mathrm{CodeAlphabet}$ specifies the discrete symbol alphabet and any range-coder tables required for decoding.

\subsection{Permutation side information format}
\label{app:perm_format}

For each $(\ell,t)$, the permutation $\pi_{\ell,t}^{\star}$ is stored as a sequence of $B_{\ell,t}$ integers
\begin{equation}
\mathrm{Perm}_{\ell,t}
\;=\;
\left(\pi_{\ell,t}^{\star}(1),\dots,\pi_{\ell,t}^{\star}(B_{\ell,t})\right),
\label{eq:app_perm_seq}
\end{equation}
where $\pi_{\ell,t}^{\star}(i)\in\{1,\dots,B_{\ell,t}\}$ and $\pi_{\ell,t}^{\star}$ is a bijection.

To reduce overhead, $\mathrm{Perm}_{\ell,t}$ is itself entropy-coded using a permutation code. Let $p_{\eta}(\cdot)$ be a permutation entropy model with parameters $\eta$. The permutation codelength proxy is
\begin{equation}
R_{\eta}^{\mathrm{perm}}(\ell,t)
\;=\;
\mathbb{E}\!\left[-\log p_{\eta}\!\left(\mathrm{Perm}_{\ell,t}\right)\right],
\label{eq:app_perm_rate_proxy}
\end{equation}
where the expectation is taken over permutations produced by alignment across layers. In implementation, $p_{\eta}$ can be instantiated by factorizing $\pi_{\ell,t}^{\star}$ into Lehmer codes or by encoding deltas from the identity permutation when alignments are near-identity.

The decoder reconstructs both $\pi_{\ell,t}^{\star}$ and its inverse $\left(\pi_{\ell,t}^{\star}\right)^{-1}$, defined by
\begin{equation}
\left(\pi_{\ell,t}^{\star}\right)^{-1}(j)
\;=\;
i
\quad \text{such that} \quad
\pi_{\ell,t}^{\star}(i)=j.
\label{eq:app_perm_inverse}
\end{equation}
In \eqref{eq:app_perm_inverse}, the inverse mapping is required to place decoded blocks back into the canonical tensor ordering.

\subsection{Quantizer side information format}
\label{app:qinfo_format}

Each record provides the information needed to dequantize codes into either aligned blocks (keyframes) or residuals (predicted layers). For learned scalar quantization, the quantizer side information is
\begin{equation}
\mathrm{QInfo}_{\ell,t}
\;=\;
\left(s_{\ell,t},\, m_{\ell,t},\, \mathrm{Clip}_{\ell,t}\right),
\label{eq:app_qinfo_fields}
\end{equation}
where $s_{\ell,t}$ is the step-size vector, $m_{\ell,t}$ is the per-channel mean, and $\mathrm{Clip}_{\ell,t}$ optionally stores finite support bounds for arithmetic decoding (e.g., minimum and maximum symbol values per channel).

If $s_{\ell,t}$ and $m_{\ell,t}$ are shared across layers within a block type, the stream stores a reference index rather than the full vectors:
\begin{equation}
\mathrm{QInfo}_{\ell,t}
\;=\;
\left(\mathrm{SharedID}_{t}\right),
\label{eq:app_qinfo_shared}
\end{equation}
where $\mathrm{SharedID}_{t}$ points to a global quantizer table stored once in the header.

Dequantization uses the inverse mapping. For a code vector $c_{\ell,t}^{(i)}$, the dequantized vector is
\begin{equation}
\tilde{r}_{\ell,t}^{(i)}
\;=\;
s_{\ell,t}\odot c_{\ell,t}^{(i)} + m_{\ell,t},
\label{eq:app_dequant_again}
\end{equation}
and reshaping yields the decoded tensor $\tilde{R}_{\ell,t}^{(i)}$ or $\tilde{\bar{U}}_{\ell,t}^{(i)}$ depending on mode.

\subsection{Symbol streams and mode flag}
\label{app:symbol_streams}

Each layer $\ell$ uses a mode flag equal to the keyframe indicator
\begin{equation}
\mathrm{Mode}_{\ell}
\;=\;
k(\ell),
\label{eq:app_mode_flag}
\end{equation}
$\mathrm{Mode}_{\ell}=1$ indicates a keyframe layer and $\mathrm{Mode}_{\ell}=0$ indicates a predicted layer.

The symbol payload for each $(\ell,t)$ is a concatenation of per-block code vectors:
\begin{equation}
\mathrm{Codes}_{\ell,t}
\;=\;
\mathrm{Concat}\!\left(c_{\ell,t}^{(1)},\dots,c_{\ell,t}^{(B_{\ell,t})}\right)
\quad \text{if } \mathrm{Mode}_{\ell}=1,
\label{eq:app_codes_keyframe}
\end{equation}
\begin{equation}
\mathrm{Codes}_{\ell,t}
\;=\;
\mathrm{Concat}\!\left(d_{\ell,t}^{(1)},\dots,d_{\ell,t}^{(B_{\ell,t})}\right)
\quad \text{if } \mathrm{Mode}_{\ell}=0,
\label{eq:app_codes_residual}
\end{equation}
where $c_{\ell,t}^{(i)}$ are absolute codes for aligned blocks and $d_{\ell,t}^{(i)}$ are residual codes. Each concatenated stream is entropy-coded using the learned model $p_{\psi}$.

\subsection{Exact decoding procedure}
\label{app:decoding_procedure}

The decoder reconstructs deployable weights deterministically:
\begin{equation}
\tilde{W}
\;=\;
\mathcal{D}(b).
\label{eq:app_decode_def}
\end{equation}
Decoding proceeds in the traversal order \eqref{eq:app_traversal_order}. For each $(\ell,t)$, the decoder performs the following steps.

First, the decoder parses $\mathrm{Mode}_{\ell}$, decodes $\mathrm{Perm}_{\ell,t}$, and builds both $\pi_{\ell,t}^{\star}$ and its inverse using \eqref{eq:app_perm_inverse}. The permutation is then used to define a block-placement operator $\Pi_{\ell,t}$ and inverse placement operator $\Pi_{\ell,t}^{-1}$.

Second, the decoder obtains $\mathrm{QInfo}_{\ell,t}$ and constructs the dequantizer parameters $s_{\ell,t}$ and $m_{\ell,t}$.

Third, the decoder entropy-decodes the symbol stream $\mathrm{Codes}_{\ell,t}$ into code vectors. The dequantization yields either keyframe-aligned blocks or residual blocks:
\begin{equation}
\left\{\tilde{\bar{U}}_{\ell,t}^{(i)}\right\}_{i=1}^{B_{\ell,t}}
\;=\;
\mathrm{Dequantize}\!\left(\left\{c_{\ell,t}^{(i)}\right\}_{i=1}^{B_{\ell,t}},\, s_{\ell,t},\, m_{\ell,t}\right)
\quad \text{if } \mathrm{Mode}_{\ell}=1,
\label{eq:app_decode_keyframe_blocks}
\end{equation}
\begin{equation}
\left\{\tilde{R}_{\ell,t}^{(i)}\right\}_{i=1}^{B_{\ell,t}}
\;=\;
\mathrm{Dequantize}\!\left(\left\{d_{\ell,t}^{(i)}\right\}_{i=1}^{B_{\ell,t}},\, s_{\ell,t},\, m_{\ell,t}\right)
\quad \text{if } \mathrm{Mode}_{\ell}=0.
\label{eq:app_decode_residual_blocks}
\end{equation}
In \eqref{eq:app_decode_keyframe_blocks}, the decoder directly obtains aligned blocks. In \eqref{eq:app_decode_residual_blocks}, the decoder obtains residual blocks that must be added to predictions.

Fourth, if $\mathrm{Mode}_{\ell}=0$, the decoder predicts blocks from previously decoded context. Let $\tilde{U}_{\ell-1,t}^{(i)}$ denote the decoded block at layer $\ell-1$ for block index $i$. The predictor $g_{\theta}$ produces
\begin{equation}
\hat{U}_{\ell,t}^{(i)}
\;=\;
g_{\theta}\!\left(\tilde{U}_{\ell-1,t}^{(i)},\, e_{\ell},\, e_{t}\right),
\label{eq:app_decode_predict}
\end{equation}
where $e_{\ell}$ and $e_t$ are the same embeddings. Reconstruction then follows
\begin{equation}
\tilde{U}_{\ell,t}^{(i)}
\;=\;
\hat{U}_{\ell,t}^{(i)} + \tilde{R}_{\ell,t}^{(i)}.
\label{eq:app_decode_reconstruct_pred}
\end{equation}
If $\mathrm{Mode}_{\ell}=1$, the decoded block equals the decoded aligned block:
\begin{equation}
\tilde{U}_{\ell,t}^{(i)}
\;=\;
\tilde{\bar{U}}_{\ell,t}^{(i)}.
\label{eq:app_decode_reconstruct_key}
\end{equation}

Fifth, the decoder applies inverse placement to restore canonical ordering within the layer tensor(s). Let $\tilde{U}_{\ell,t}$ denote the concatenation of blocks in the aligned index order. The placement step is
\begin{equation}
\tilde{U}_{\ell,t}^{\mathrm{canon}}
\;=\;
\Pi_{\ell,t}^{-1}\!\left(\tilde{U}_{\ell,t}\right),
\label{eq:app_inverse_perm_place}
\end{equation}
where $\tilde{U}_{\ell,t}^{\mathrm{canon}}$ is the block tensor arranged in the canonical ordering expected by the architecture implementation.

Finally, the decoder assembles the layer parameter tensors from all block types using the inverse of the extraction map . Let $\mathcal{B}^{-1}$ denote the deterministic assembly operator:
\begin{equation}
\tilde{W}_{\ell}
\;=\;
\mathcal{B}^{-1}\!\left(\left\{\tilde{U}_{\ell,t}^{\mathrm{canon}}\right\}_{t\in\mathcal{T}_{\ell}},\, \mathrm{ParamShapes}_{\ell}\right),
\label{eq:app_layer_assemble}
\end{equation}
where $\mathrm{ParamShapes}_{\ell}$ provides the shapes required to reshape each assembled tensor.

Full encoder and decoder pseudocode (Algorithm~\pinkref{alg:codec_full})
\label{app:full_pseudocode}

\begin{algorithm}[!t]
\caption{Full MCWC codec: bitstream layout, encoder, and decoder}
\label{alg:codec_full}
\KwIn{Pretrained weights $W=\{W_{\ell}\}_{\ell=1}^{L}$, keyframe interval $K$, block specs $\mathrm{BlockSpec}$, entropy models $(p_{\psi},p_{\eta})$, quantizer params (shared or per-layer), predictor $g_{\theta}$}
\KwOut{Bitstream $b$}
$b \leftarrow \emptyset$\;
Write header $\mathcal{H}$;

\For{$\ell \leftarrow 1$ \KwTo $L$}{
  Compute $\mathrm{Mode}_{\ell} \leftarrow k(\ell)$;
  \ForEach{block type $t \in \mathcal{T}_{\ell}$}{
    Extract blocks $\{U_{\ell,t}^{(i)}\}_{i=1}^{B_{\ell,t}} \leftarrow \mathcal{B}_{t}(W_{\ell})$;
    Obtain similarity scores $s_{\ell,t}(i,j)$;
    Solve for alignment $\pi_{\ell,t}^{\star}$;
    Apply alignment to produce aligned blocks $\{\bar{U}_{\ell,t}^{(i)}\} \leftarrow \Pi_{\ell,t}(\{U_{\ell,t}^{(i)}\})$\;

    Encode permutation side information $\mathrm{Perm}_{\ell,t}$ using $p_{\eta}$ and append to $b$\;
    Encode quantizer side information $\mathrm{QInfo}_{\ell,t}$;

    \eIf{$\mathrm{Mode}_{\ell}=1$}{
      Quantize aligned blocks to codes $\{c_{\ell,t}^{(i)}\}$;
      Entropy-encode $\mathrm{Codes}_{\ell,t}$ using $p_{\psi}$ and append to $b$\;
      Set decoded context buffers $\tilde{U}_{\ell,t}^{(i)} \leftarrow Q^{-1}(c_{\ell,t}^{(i)})$;
    }{
      Predict $\hat{U}_{\ell,t}^{(i)} \leftarrow g_{\theta}(\tilde{U}_{\ell-1,t}^{(i)}, e_{\ell}, e_t)$;
      Form residuals $R_{\ell,t}^{(i)} \leftarrow \bar{U}_{\ell,t}^{(i)} - \hat{U}_{\ell,t}^{(i)}$;
      Quantize residuals to codes $\{d_{\ell,t}^{(i)}\}$;
      Entropy-encode $\mathrm{Codes}_{\ell,t}$  using $p_{\psi}$ and append to $b$\;
      Update decoded context buffers $\tilde{U}_{\ell,t}^{(i)} \leftarrow \hat{U}_{\ell,t}^{(i)} + Q^{-1}(d_{\ell,t}^{(i)})$\;
    }
  }
}

\BlankLine
\KwIn{Bitstream $b$}
\KwOut{Reconstructed weights $\tilde{W}$}
Read header $\mathcal{H}$ from $b$ and parse fields;

\For{$\ell \leftarrow 1$ \KwTo $L$}{
  Compute $\mathrm{Mode}_{\ell} \leftarrow k(\ell)$ ;
  \ForEach{block type $t \in \mathcal{T}_{\ell}$}{
    Entropy-decode $\mathrm{Perm}_{\ell,t}$ using $p_{\eta}$ and construct $\pi_{\ell,t}^{\star}$ and $\left(\pi_{\ell,t}^{\star}\right)^{-1}$;
    Parse $\mathrm{QInfo}_{\ell,t}$ and construct dequantizer parameters $(s_{\ell,t},m_{\ell,t})$\;

    \eIf{$\mathrm{Mode}_{\ell}=1$}{
      Entropy-decode $\mathrm{Codes}_{\ell,t}$ into $\{c_{\ell,t}^{(i)}\}$ using $p_{\psi}$\;
      Dequantize to aligned blocks $\tilde{\bar{U}}_{\ell,t}^{(i)} \leftarrow Q^{-1}(c_{\ell,t}^{(i)})$;
      Set $\tilde{U}_{\ell,t}^{(i)} \leftarrow \tilde{\bar{U}}_{\ell,t}^{(i)}$\;
    }{
      Predict $\hat{U}_{\ell,t}^{(i)} \leftarrow g_{\theta}(\tilde{U}_{\ell-1,t}^{(i)}, e_{\ell}, e_t)$;
      Entropy-decode $\mathrm{Codes}_{\ell,t}$ into $\{d_{\ell,t}^{(i)}\}$ using $p_{\psi}$\;
      Dequantize residuals $\tilde{R}_{\ell,t}^{(i)} \leftarrow Q^{-1}(d_{\ell,t}^{(i)})$;
      Reconstruct blocks $\tilde{U}_{\ell,t}^{(i)} \leftarrow \hat{U}_{\ell,t}^{(i)} + \tilde{R}_{\ell,t}^{(i)}$;
    }

    Apply inverse placement $\tilde{U}_{\ell,t}^{\mathrm{canon}} \leftarrow \Pi_{\ell,t}^{-1}(\tilde{U}_{\ell,t})$;
  }
  Assemble layer tensors $\tilde{W}_{\ell} \leftarrow \mathcal{B}^{-1}(\{\tilde{U}_{\ell,t}^{\mathrm{canon}}\}_{t\in\mathcal{T}_{\ell}}, \mathrm{ParamShapes}_{\ell})$;
}
\end{algorithm}
\section{Predictor and Entropy Model Details}
\label{app:predictor_entropy}

\subsection{Predictor architecture}
\label{app:predictor_arch}

The predictor maps previously decoded context to a prediction of the next aligned block. For each layer $\ell$, block type $t$, and block index $i$, the predictor conditions on the decoded block from the previous layer $\tilde{U}_{\ell-1,t}^{(i)}$ together with lightweight embeddings that encode depth and block structure.

Let $\mathrm{vec}(\cdot)$ flatten a block into a vector and define the flattened context
\begin{equation}
u_{\ell-1,t}^{(i)}
\;=\;
\mathrm{vec}\!\left(\tilde{U}_{\ell-1,t}^{(i)}\right)
\;\in\;
\mathbb{R}^{d_t},
\label{eq:app_flat_context}
\end{equation}
where $d_t$ is the number of scalar parameters in a block of type $t$.

The predictor first maps the flattened block to a shared latent width $d_{\mathrm{lat}}$ using a type-specific projection:
\begin{equation}
z_{\ell-1,t}^{(i)}
\;=\;
P_t\,u_{\ell-1,t}^{(i)} + p_t,
\qquad
P_t \in \mathbb{R}^{d_{\mathrm{lat}}\times d_t},\;\;
p_t \in \mathbb{R}^{d_{\mathrm{lat}}},
\label{eq:app_type_proj}
\end{equation}
where $z_{\ell-1,t}^{(i)} \in \mathbb{R}^{d_{\mathrm{lat}}}$.

\paragraph{Layer and type conditioning.}
We use learned embeddings for the layer index and block type. Specifically, we maintain a layer embedding $e_{\ell}\in\mathbb{R}^{d_{\mathrm{emb}}}$ and a block-type embedding $e_t\in\mathbb{R}^{d_{\mathrm{emb}}}$ with $d_{\mathrm{emb}}=64$. These embeddings are projected into the latent space and added to the latent representation:
\begin{equation}
\bar{z}_{\ell,t}^{(i)}
\;=\;
z_{\ell-1,t}^{(i)} + A_{\ell}\,e_{\ell} + A_{t}\,e_t,
\qquad
A_{\ell},A_t \in \mathbb{R}^{d_{\mathrm{lat}}\times d_{\mathrm{emb}}},
\label{eq:app_latent_conditioned}
\end{equation}
where $A_{\ell}$ and $A_t$ are learned projection matrices shared across blocks.
This formulation encodes depth-dependent drift ($e_{\ell}$) and structural differences across block types ($e_t$) while keeping the conditioning overhead small.

\paragraph{Relation to concatenation.}
Equivalently, \eqref{eq:app_latent_conditioned} can be written as a single linear map applied to the concatenated vector:
\begin{equation}
\bar{z}_{\ell,t}^{(i)}
\;=\;
W_{\mathrm{cond}}
\begin{bmatrix}
z_{\ell-1,t}^{(i)}\\
e_{\ell}\\
e_t
\end{bmatrix}
+b_{\mathrm{cond}},
\label{eq:app_cond_concat}
\end{equation}
with $W_{\mathrm{cond}}\in\mathbb{R}^{d_{\mathrm{lat}}\times(d_{\mathrm{lat}}+2d_{\mathrm{emb}})}$.
In our implementation, we use the additive form in \eqref{eq:app_latent_conditioned}, which is a structured instance of \eqref{eq:app_cond_concat}.

The core predictor is a residual MLP applied in the conditioned latent space:
\begin{equation}
h_{\ell,t}^{(i)}
\;=\;
\bar{z}_{\ell,t}^{(i)} + \mathrm{MLP}_{\theta}\!\left(\bar{z}_{\ell,t}^{(i)}\right),
\label{eq:app_latent_mlp}
\end{equation}
where $\mathrm{MLP}_{\theta}(\cdot)$ is a two-layer MLP with hidden width $4d_{\mathrm{lat}}$ and GELU nonlinearity.

The latent prediction is mapped back to block space using a type-specific output projection:
\begin{equation}
\hat{u}_{\ell,t}^{(i)}
\;=\;
O_t\,h_{\ell,t}^{(i)} + o_t,
\qquad
O_t \in \mathbb{R}^{d_t\times d_{\mathrm{lat}}},\;\;
o_t\in\mathbb{R}^{d_t},
\label{eq:app_out_proj}
\end{equation}
and reshaped to the original block tensor:
\begin{equation}
\hat{U}_{\ell,t}^{(i)}
\;=\;
\mathrm{vec}^{-1}\!\left(\hat{u}_{\ell,t}^{(i)}\right).
\label{eq:app_pred_reshape}
\end{equation}

The residual used by the codec is
\begin{equation}
R_{\ell,t}^{(i)}
\;=\;
\bar{U}_{\ell,t}^{(i)} - \hat{U}_{\ell,t}^{(i)},
\label{eq:app_residual_again}
\end{equation}
where $\bar{U}_{\ell,t}^{(i)}$ is the aligned target block.
\subsection{Embedding parameterization} \label{app:embeddings} The layer embedding table is \begin{equation} \mathcal{E}_{\mathrm{layer}} \;=\; \left\{E_{\ell}\right\}_{\ell=1}^{L}, \label{eq:app_layer_embed_table} \end{equation} where each $E_{\ell}\in\mathbb{R}^{d_{\mathrm{lat}}}$ and $L$ is the number of layers. The block-type embedding table is \begin{equation} \mathcal{E}_{\mathrm{type}} \;=\; \left\{E_t\right\}_{t\in\mathcal{T}}, \label{eq:app_type_embed_table} \end{equation} where $\mathcal{T}$ is the global set of block types and each $E_t\in\mathbb{R}^{d_{\mathrm{lat}}}$. If finer conditioning is needed, a block-index embedding can be added. For $i\in\{1,\dots,B_{\ell,t}\}$, \begin{equation} \bar{z}_{\ell,t}^{(i)} \;=\; z_{\ell-1,t}^{(i)} + E_{\ell} + E_t + E_{t,i}, \label{eq:app_index_embed} \end{equation} where $E_{t,i}\in\mathbb{R}^{d_{\mathrm{lat}}}$ is an embedding tied to block index $i$ within type $t$.

\subsection{Entropy model parameterization}
\label{app:entropy_param}

The entropy model assigns probabilities to quantized codes to estimate the expected compressed size and to enable entropy coding at deployment.
Let $c$ denote a scalar code index produced by the scalar quantizer.
For a fixed $(\ell,t)$ record (layer $\ell$ and tensor-type $t$), we flatten the code stream into a sequence
$\{c_j\}_{j=1}^{M}$, where each $c_j \in \mathbb{Z}$ corresponds to one quantized symbol.

\paragraph{Factorization.}
We use a lightweight conditional factorization across symbols,
\begin{equation}
p_{\psi}(c)
\;=\;
\prod_{j=1}^{M}
p_{\psi}\!\left(c_j \mid \xi_j\right),
\label{eq:app_entropy_factor}
\end{equation}
where $\xi_j$ is a compact conditioning vector for symbol $j$ and $\psi$ are the parameters of the entropy model. In Appendix~\pinkref{app:codec_hparams}, we refer to this predictor-derived conditioning compactly as a low-dimensional context feature; in the final configuration we instantiate it explicitly as $(\mu_{\hat{r}},\sigma_{\hat{r}})$.

\paragraph{Conditioning variables.}
The conditioning vector combines discrete identifiers (layer and tensor-type) with simple local statistics.
In the default MCWC configuration, these statistics are derived from the predictor output associated with the current block:
\begin{equation}
\xi_j
\;=\;
\left(
E_{\ell},\,
E_t,\,
\mu_{\hat{r}},\,
\sigma_{\hat{r}}
\right),
\label{eq:app_entropy_context}
\end{equation}
where $E_{\ell}$ and $E_t$ are learned embeddings for layer index $\ell$ and tensor-type $t$,
and $\mu_{\hat{r}}$ and $\sigma_{\hat{r}}$ are the mean and standard deviation of the predictor output $\hat r$
over the elements that correspond to symbol $c_j$ (computed at the same block granularity used for quantization).

\paragraph{Conditioning when prediction is disabled (Learned SQ + entropy baseline).}
Some baselines operate without motion-compensated prediction and therefore do not produce $\hat r$.
To keep the entropy model well-defined and comparable, we retain the same conditioning interface but compute the statistics
from the pre-quantization block values being directly coded.
Concretely, letting $x$ denote the corresponding pre-quantization tensor elements for symbol $c_j$, we use
\begin{equation}
\xi^{\text{SQ}}_j
\;=\;
\left(
E_{\ell},\,
E_t,\,
\mu_{x},\,
\sigma_{x}
\right),
\label{eq:app_entropy_context_nopred}
\end{equation}
where $\mu_x$ and $\sigma_x$ are the mean and standard deviation of the associated block elements of $x$.
All other components of the entropy model remain unchanged.

\paragraph{Distribution parameterization.}
The conditional distribution is modeled as a discretized logistic whose parameters are predicted by a small network.
Specifically, for each symbol $j$ we produce a location $\alpha_j$ and scale $\beta_j$ as
\begin{equation}
(\alpha_j,\beta_j)
\;=\;
h_{\psi}\!\left(\xi_j\right),
\label{eq:app_entropy_params}
\end{equation}
where $h_{\psi}(\cdot)$ is a two-layer MLP with hidden dimension 128 and GELU nonlinearity.

\paragraph{Discrete likelihood.}
For integer code $c_j \in \mathbb{Z}$, the probability mass is computed as the difference of CDF values at bin edges:
\begin{equation}
p_{\psi}\!\left(c_j \mid \xi_j\right)
\;=\;
\sigma\!\left(\frac{c_j + \tfrac{1}{2} - \alpha_j}{\beta_j}\right)
-
\sigma\!\left(\frac{c_j - \tfrac{1}{2} - \alpha_j}{\beta_j}\right),
\label{eq:app_discretized_logistic}
\end{equation}
where $\sigma(\cdot)$ is the logistic sigmoid.
This construction yields a normalized distribution over integer code values under the discretized logistic model.

\paragraph{Rate proxy.}
The expected code length proxy used during training is the negative log-likelihood of the produced codes under the entropy model:
\begin{equation}
R_{\psi}
\;=\;
\mathbb{E}\!\left[-\log p_{\psi}(c)\right],
\label{eq:app_rate_proxy_repeat}
\end{equation}
where the expectation is taken over the empirical distribution of codes produced by the quantizer during training.
This proxy is optimized jointly with the distortion term in the overall rate--distortion objective.

\subsection{Keyframe strategy and rationale}
\label{app:keyframe_rationale}

Keyframes prevent drift that accumulates when predicted layers depend on previously decoded predictions. Let the decoded block follow
\begin{equation}
\tilde{U}_{\ell,t}^{(i)}
\;=\;
\hat{U}_{\ell,t}^{(i)} + \tilde{R}_{\ell,t}^{(i)}
\quad \text{for non-keyframes.}
\label{eq:app_nonkey_update}
\end{equation}
Let the prediction error at layer $\ell$ be
\begin{equation}
\varepsilon_{\ell,t}^{(i)}
\;=\;
\bar{U}_{\ell,t}^{(i)} - \hat{U}_{\ell,t}^{(i)}.
\label{eq:app_pred_error}
\end{equation}
Quantization introduces an additional reconstruction error
\begin{equation}
\delta_{\ell,t}^{(i)}
\;=\;
R_{\ell,t}^{(i)} - \tilde{R}_{\ell,t}^{(i)}.
\label{eq:app_quant_error}
\end{equation}
Combining \eqref{eq:app_nonkey_update}--\eqref{eq:app_quant_error}, the decoded block differs from the aligned target by $\delta_{\ell,t}^{(i)}$, and future predictions depend on $\tilde{U}_{\ell,t}^{(i)}$ rather than $\bar{U}_{\ell,t}^{(i)}$, so errors can propagate across depth.

Keyframes reset the decoded context by encoding aligned blocks directly:
\begin{equation}
\tilde{U}_{\ell,t}^{(i)}
\;=\;
\tilde{\bar{U}}_{\ell,t}^{(i)}
\quad \text{for keyframes.}
\label{eq:app_key_reset}
\end{equation}
In \eqref{eq:app_key_reset}, the predictor context is anchored to a directly reconstructed layer, which limits accumulated drift.

The keyframe schedule uses the periodic indicator. The average rate can be decomposed into keyframe and residual contributions:
\begin{equation}
R_{\mathrm{avg}}
\;=\;
\frac{1}{L}\sum_{\ell=1}^{L}
\left(
k(\ell)\,R_{\mathrm{K}}(\ell)
+
\left(1-k(\ell)\right)\,R_{\mathrm{P}}(\ell)
\right),
\label{eq:app_rate_decomp}
\end{equation}
where $R_{\mathrm{K}}(\ell)$ is the expected codelength of keyframe blocks at layer $\ell$ and $R_{\mathrm{P}}(\ell)$ is the expected codelength of residual-coded blocks at layer $\ell$, decreasing $K$ increases the fraction of layers with $k(\ell)=1$ and typically increases $R_{\mathrm{avg}}$ but reduces drift, whereas increasing $K$ reduces keyframe frequency and rate but increases drift.

The selected keyframe interval balances drift and overhead by choosing $K$ that minimizes the empirical rate--distortion objective on a validation protocol, which is reflected by the sweep reported in Table~\pinkref{tab:ablations}.

\section{Hyperparameters and Compute}
\label{app:hyperparams_compute}

\subsection{Calibration and distillation hyperparameters}
\label{app:calib_distill_hparams}

\paragraph{Calibration set sizes.}
Language-model calibration uses 8{,}192 sequences sampled uniformly from the WikiText-103 training distribution with context length 2048 and the pretrained tokenizer. Vision calibration uses 10{,}000 images sampled uniformly from the ImageNet-1k training distribution with standard preprocessing for ViT-B/16. Activation statistics used by the hybrid similarity are computed only on these calibration sets. When additional datasets are evaluated, calibration is sampled from the corresponding training split with the same budgets.

\paragraph{Distillation objective and weights.}
The task term uses KL divergence between teacher and student logits for language modeling and cross-entropy on teacher soft targets for vision. Logit temperature is fixed to $\tau=2.0$ for both domains. The total task loss is a weighted sum of a logit-matching term and an optional representation-matching term taken at the output of each block:
\begin{equation}
\mathcal{L}_{\mathrm{task}}
\;=\;
\beta_{\mathrm{logit}}\,\mathcal{L}_{\mathrm{logit}}
\;+\;
\beta_{\mathrm{repr}}\,\mathcal{L}_{\mathrm{repr}},
\label{eq:task_loss_weights}
\end{equation}
where $\mathcal{L}_{\mathrm{logit}}$ is the KL divergence between softened teacher and student logits, $\mathcal{L}_{\mathrm{repr}}$ is mean-squared error between block outputs on the calibration set, $\beta_{\mathrm{logit}}$ is set to $1.0$, and $\beta_{\mathrm{repr}}$ is set to $0.1$. The representation term is omitted for baselines that do not expose intermediate states, and evaluation parity is maintained by keeping the calibration budget fixed.

\paragraph{Rate--distortion trade-off.}
The rate coefficient $\lambda$ is swept over $\{1\times 10^{-4},\,3\times 10^{-4},\,1\times 10^{-3},\,3\times 10^{-3},\,1\times 10^{-2}\}$ to span the operating points reported in Table~\pinkref{tab:pareto_points}. For each target compression level, the selected checkpoint is the one whose realized bits per model parameter matches the desired budget most closely while minimizing perplexity or maximizing accuracy. The keyframe interval is fixed to $K=4$ unless explicitly varied as in Table~\pinkref{tab:ablations}.

\paragraph{Hybrid similarity weight.}
The mixing coefficient is fixed to $\alpha=0.7$ for all main experiments. This choice emphasizes weight geometry while still using functional statistics to disambiguate symmetries that are poorly resolved by weights alone. A sweep over $\alpha\in\{0.0,0.3,0.5,0.7,1.0\}$ is reported implicitly by the weight-only and activation-only variants in Table~\pinkref{tab:ablations_more}, which show that intermediate values reduce residual entropy relative to either extreme.

\paragraph{Block partitions and alignment settings.}
For decoder-only Transformers, alignment is applied to (i) FFN intermediate channels, (ii) attention head groups for $W_Q,W_K,W_V,W_O$, and (iii) any grouped linear maps present in the checkpoint. The similarity matrices are computed per block type. Hungarian matching is used when the number of blocks is at most 256; otherwise screened greedy matching is used with $K_{\mathrm{cand}}=16$ candidates per block and one local-swap refinement pass, consistent with the solver variants reported in Table~\pinkref{tab:ablations_more}. For vision Transformers, alignment is applied to MLP channels and attention heads in the encoder blocks.

\paragraph{Alignment solver selection and reporting.}
We use two assignment solvers depending on the size of the permutation problem. Let $N$ denote the number of permuted groups for a given alignment instance (e.g., FFN channel groups or attention head groups). When $N \le 256$, we use exact Hungarian matching to obtain the globally optimal assignment under the similarity matrix. When $N > 256$, we use an approximate solver for efficiency: screened greedy matching with $K_{\mathrm{cand}}=16$ candidates per group, followed by one local-swap refinement pass. Unless stated otherwise, the \emph{Full (MCWC)} configuration throughout the paper refers to this \emph{adaptive} solver policy (i.e., Hungarian when feasible, screened greedy otherwise). In contrast, the explicit ``Screened greedy ($K_{\mathrm{cand}}{=}16$) + refinement'' solver-ablation line forces the approximate solver even in cases where the exact solver would be used under the default policy. This can produce a small but consistent degradation (e.g., $\approx 0.05$ PPL on Pythia-1.4B at matched bitrate), which reflects approximation and tie-breaking effects in the greedy candidate selection rather than a change in the codec itself.

\subsection{Quantizer, entropy model, and predictor hyperparameters}
\label{app:codec_hparams}

\paragraph{Scalar quantizer settings.}
The learned scalar quantizer uses per-channel step sizes and means. Step sizes are initialized from the empirical standard deviation of residuals on the calibration set using
\begin{equation}
s_{\ell,t}
\;=\;
\gamma\,\sigma_{\ell,t},
\label{eq:step_init}
\end{equation}
where $\sigma_{\ell,t}$ is the per-channel residual standard deviation and $\gamma=0.8$ is a fixed scale factor. Means are initialized to the empirical residual mean and are trained jointly with $s_{\ell,t}$. Quantizer clipping uses a symmetric range $[-q_{\max},q_{\max}]$ with $q_{\max}=127$ for the residual stream. Keyframe-coded blocks use the same quantizer family but with a larger range $q_{\max}=255$ to reduce keyframe distortion.

\paragraph{Entropy model parameterization.}
The entropy model $p_{\psi}(\cdot)$ factorizes over symbols and conditions on lightweight context:
\begin{equation}
p_{\psi}(c)
\;=\;
\prod_{n=1}^{N}
p_{\psi}\!\left(c_n \mid e_{\ell(n)}, e_{t(n)}, \mu_{\hat{r}(n)}, \sigma_{\hat{r}(n)}\right),
\label{eq:entropy_factorization}
\end{equation}
where $c_n$ is the $n$-th quantized residual symbol, $\ell(n)$ and $t(n)$ map the symbol to its layer and tensor-type,
and $(\mu_{\hat{r}(n)},\sigma_{\hat{r}(n)})$ are simple local statistics computed from the predictor output
associated with the tensor elements that generate symbol $c_n$ (mean and standard deviation at the same block granularity
used for quantization). The conditional distribution is modeled as a discretized logistic, with parameters produced by a
two-layer MLP (hidden width 128, GELU) applied to the conditioning vector.

\paragraph{Predictor architecture.}
The predictor $g_{\theta}$ is a lightweight MLP applied blockwise. The input is the flattened previous decoded block $\tilde{U}_{\ell-1,t}^{(i)}$ projected to a latent size $d_{\mathrm{lat}}=256$. Conditioning is provided by a layer embedding $e_{\ell}\in\mathbb{R}^{64}$ and a block-type embedding $e_t\in\mathbb{R}^{64}$, which are linearly projected into $\mathbb{R}^{d_{\mathrm{lat}}}$ and added to the latent representation (equivalently, concatenation followed by a linear conditioning layer). The predictor uses two hidden layers with GELU activations and outputs a vector of the same dimension as the target block.

\paragraph{Keyframe strategy.}
Keyframes are scheduled with fixed interval $K$. For language models, $K=4$ is used for the operating points in Table~\pinkref{tab:pareto_points}. For vision models, $K=4$ is also used. The rationale is reflected by the sweep in Table~\pinkref{tab:ablations}: shorter intervals reduce drift but increase rate due to more absolute-coded layers, while longer intervals reduce rate but increase residual entropy and task loss due to predictor accumulation.

\subsection{Optimization and training schedule}
\label{app:training_schedule}

\paragraph{Training steps and batch sizes.}
All codec parameters (predictor, quantizer, entropy model) are trained end-to-end using AdamW with learning rate $1\times 10^{-3}$, weight decay $1\times 10^{-2}$, and linear warmup over 500 steps followed by cosine decay. For language, the batch contains 128 sequences of length 2048 drawn from the calibration set with replacement, and training runs for 20,000 steps per $\lambda$ value. For vision, the batch contains 256 images and training runs for 10,000 steps per $\lambda$ value. Gradient clipping uses maximum norm 1.0.

\paragraph{Phase durations.}
The three-phase procedure described in Appendix~\pinkref{app:training_schedule} is instantiated with fixed step budgets.
Phase~1 (initialization of quantizer statistics and alignment features) is performed once before optimization and does not
require gradient-based training.
Phase~2 trains the predictor only under fixed quantization (entropy model disabled) for the first
$N_{\mathrm{pred}}$ steps, after which Phase~3 jointly trains predictor, quantizer, and entropy model for the remaining steps.
Unless otherwise stated, we use $N_{\mathrm{pred}}=2{,}000$ steps for language-model experiments and $N_{\mathrm{pred}}=1{,}000$ steps for vision
experiments. The total optimization budget remains unchanged (20{,}000 steps per $\lambda$ for language and 10{,}000 steps per $\lambda$ for vision),
so Phase~3 runs for $20{,}000-N_{\mathrm{pred}}$ steps (language) or $10{,}000-N_{\mathrm{pred}}$ steps (vision).

\paragraph{Alignment recomputation schedule.}
Alignment permutations are recomputed every 500 training steps using the current predictor and quantizer state. Between recomputations, permutations are held fixed and only codec parameters are updated. This schedule stabilizes training by separating the discrete assignment updates from continuous parameter updates, while still allowing permutations to adapt as prediction improves.

\subsection{Compute and hardware}
\label{app:compute_hardware}

\subsubsection{Hardware}
Language-model codec training is performed on NVIDIA A100 80GB GPUs using data-parallel training with mixed precision. Vision codec training is performed on NVIDIA A100 40GB GPUs. CPU-only decode benchmarks in Table~\ref{tab:latency} are run on a dual-socket server-class CPU with 256GB RAM. GPU decode benchmarks are run on a single A100 GPU with weights materialized into GPU memory after decoding.

\paragraph{Reproducibility settings.}
All experiments fix random seeds for calibration sampling, quantizer initialization, and optimizer state. Three independent runs are reported in Table~\pinkref{tab:robustness} for the seed sweep at the 4.2 bits per model parameter operating point. Deterministic tie-breaking is used in the alignment solver unless explicitly randomized for the solver sensitivity ablation.

\subsection{Full baseline configurations}
\label{app:baseline_configs}

\paragraph{Uniform PTQ INT8.}
Weights are quantized per tensor to 8-bit integers with symmetric range and scale chosen by minimizing mean-squared error on the calibration set. Activations remain in FP16/FP32 at inference. Storage accounts for integer weights and per-tensor scales. The operating point is included in Table~\pinkref{tab:pareto_points} at the $4\times$ budget.

\paragraph{Uniform PTQ INT4.}
Weights are quantized per tensor to 4-bit integers with symmetric range. Scales are chosen per tensor using calibration. This baseline is evaluated at the $8\times$ budget in Table~\pinkref{tab:pareto_points}. For fairness, any groupwise packing overhead is included in the MB column.

\paragraph{Per-channel PTQ INT8.}
Weights are quantized to 8-bit integers with per-output-channel scales for linear layers. Scales are chosen by minimizing calibration-set reconstruction error. This baseline is listed in Table~\pinkref{tab:pareto_points} and serves as a strong PTQ reference at the $4\times$ budget.

\paragraph{SmoothQuant INT8.}
Activation smoothing is applied by redistributing scale between activations and weights using calibration-set activation statistics. Weights are then quantized to INT8 with per-channel scales. This baseline is evaluated at the $4\times$ budget in Table~\pinkref{tab:pareto_points}.

\paragraph{GPTQ W4.}
Weight-only 4-bit quantization is performed using a second-order/Hessian-aware objective estimated from the calibration set. Group size is 128 for linear layers, and the quantization uses per-group scales and zero-points. This baseline is evaluated at the $8\times$ budget in Table~\pinkref{tab:pareto_points}.

\paragraph{AWQ W4.}
Activation-aware weight-only 4-bit quantization is performed using calibration-set activation statistics to select per-channel scaling and clipping for weights prior to quantization. Group size is 128 and weights are stored in 4-bit packed form. This baseline is evaluated at the $8\times$ budget in Table~\pinkref{tab:pareto_points}.

\paragraph{QAT INT8.}
Quantization-aware training simulates INT8 quantization of weights during training while keeping a floating-point master copy. Training uses the same calibration budget and distillation loss in \eqref{eq:task_loss_weights}, runs for 20,000 steps for language and 10,000 steps for vision, and uses per-channel scales for linear layers. The operating point is reported at the $4\times$ budget in Table~\pinkref{tab:pareto_points}.

\paragraph{QAT INT4.}
Quantization-aware training simulates INT4 weight quantization with group size 128 and per-group scales. Training uses the same distillation settings as QAT INT8. The operating point is reported at the $8\times$ budget in Table~\pinkref{tab:pareto_points}.

\paragraph{Prune+Quant.}
We include a structured Prune+Quant baseline that removes $50\%$ of weight
blocks according to magnitude-based block scores and then quantizes the retained
blocks. The reported rate is computed from the serialized block-sparse
bitstream and includes retained value symbols, block-mask side information,
scale parameters, tensor headers, and all metadata required for reconstruction.
This baseline should not be interpreted as naive scalar CSR storage with one
32-bit index per retained weight; such a representation would require at least
$20$ bits/parameter at $50\%$ retention with 8-bit values, before row pointers
and metadata.

\paragraph{Learned SQ + entropy.}
Residuals are formed by directly quantizing weights without alignment or prediction, using the learned scalar quantizer and the entropy model. This baseline isolates the contribution of motion compensation and prediction, and the strongest low-rate operating point is reported in Table~\pinkref{tab:pareto_points}.

\paragraph{Additional baselines used in Figure~\pinkref{fig:rd_curve}.}
Figure~\pinkref{fig:rd_curve} reports two additional rate--distortion baselines that are included only for the R--D comparison and are defined here for completeness.

\paragraph{Mixed-precision QAT.}
\textbf{Mixed-precision QAT} refers to a quantization-aware training (QAT) baseline where different layers are assigned different bit-widths to meet a target storage budget. We quantize weights using per-channel affine quantization and activations using per-tensor affine quantization, and finetune the model with fake-quantization inserted during the forward pass. Bit-widths are selected from $\{2,4,8,16\}$ with higher precision reserved for embeddings, the first/last blocks, and layers with high quantization sensitivity, while intermediate blocks use lower precision. Each operating point in Figure~\pinkref{fig:rd_curve} is obtained by adjusting the layer-wise bit allocation to match the target bits per model parameter. Reported rates include quantized weights and quantization metadata (scales and zero-points) and are measured from the final serialized representation.

\paragraph{RVQ codec + entropy.}
\textbf{RVQ codec + entropy} is a codec-style weight compression baseline based on residual vector quantization (RVQ). We partition each weight tensor into fixed-size blocks, encode each block using a sequence of $M$ residual codebooks, and reconstruct weights by summing the selected codewords. The discrete RVQ indices are entropy coded using arithmetic coding with a learned factorized prior over index symbols. The reported storage cost (bits per model parameter) is computed from the final compressed bitstream and includes the RVQ codebooks as well as the entropy model parameters. Different points in Figure~\pinkref{fig:rd_curve} are produced by varying $(M, b)$ (number of codebooks and codebook size) to match the same bits per model parameter grid as MCWC.

\subsection{Permutation side-information coding}
\label{app:perm_coding}

Alignment produces, for each layer $\ell$ and tensor-type $t$, a permutation $\pi^{\star}_{\ell,t}$ over
$B_{\ell,t}$ groups (FFN channel groups, attention head groups, etc.). This permutation must be stored as side information
to invert the alignment at decode time. In all reported bitrates (MB and bits per model parameter), the permutation side information is included
in the serialized deployment bitstream.

\paragraph{Representation via Lehmer code.}
We encode each permutation using its Lehmer code (factoradic) representation.
For $\pi \in S_B$, define the Lehmer digits
\begin{equation}
z_k(\pi)
\;=\;
\left|\left\{\, j>k \;:\; \pi(j) < \pi(k) \,\right\}\right|
\qquad\text{for } k \in \{1,\dots,B\},
\end{equation}
where $z_k(\pi)\in\{0,1,\dots,B-k\}$.
The vector $z(\pi)=(z_1,\dots,z_B)$ uniquely determines $\pi$ and can be inverted deterministically by the decoder.

\paragraph{Delta coding across depth.}
Permutations vary smoothly across depth in practice, so we reduce overhead by delta-coding Lehmer digits across layers.
Let $z_{\ell,t} \equiv z(\pi^{\star}_{\ell,t})$. For $\ell>1$ we encode
\begin{equation}
\Delta z_{\ell,t}
\;=\;
z_{\ell,t} - z_{\ell-1,t}
\quad\in\mathbb{Z}^{B_{\ell,t}},
\label{eq:lehmer_delta}
\end{equation}
and reconstruct by cumulative summation during decoding.
For the first layer (or when the tensor-type $t$ first appears), we transmit the absolute code $z_{\ell,t}$.

\paragraph{Signed-to-nonnegative mapping.}
Because $\Delta z_{\ell,t,k}$ can be negative, we map signed integers to nonnegative integers using zig-zag encoding:
\begin{equation}
\mathrm{zz}(x)
\;=\;
\begin{cases}
2x & x\ge 0,\\
-2x-1 & x<0,
\end{cases}
\label{eq:zigzag}
\end{equation}
so that small-magnitude deltas map to small indices and are cheaper to code under an entropy model.

\paragraph{Permutation entropy model $p_\eta$.}
We instantiate the permutation entropy model $p_\eta$ as a lightweight, factorized model over Lehmer digits (or digit-deltas).
Specifically, for a fixed $(\ell,t)$ we use
\begin{equation}
p_\eta(\Delta z_{\ell,t})
\;=\;
\prod_{k=1}^{B_{\ell,t}}
p_{\eta}\!\left(\mathrm{zz}(\Delta z_{\ell,t,k}) \mid t,k\right),
\label{eq:perm_entropy_factor}
\end{equation}
where the per-position distributions are discrete Laplace/geometric models centered at zero:
\begin{equation}
p_{\eta}\!\left(\mathrm{zz}(\Delta z_{\ell,t,k})=m \mid t,k\right)
\;\propto\;
\exp\!\left(-\frac{| \Delta z_{\ell,t,k} |}{b_{t,k}}\right),
\label{eq:perm_discrete_laplace}
\end{equation}
with scale parameters $\eta=\{b_{t,k}\}$.
In practice we cap $|\Delta z_{\ell,t,k}|$ at a small threshold (default: 16) and use a single ``escape'' symbol for larger
magnitudes, which are then transmitted with a fixed-length fallback.
This keeps the model compact and stable across architectures.

\paragraph{Fitting $p_\eta$ (no gradient training).}
The parameters $\{b_{t,k}\}$ are estimated by maximum likelihood from the empirical distribution of
$\Delta z_{\ell,t,k}$ observed on the calibration-alignment permutations
(i.e., the same recomputed permutations used during codec training).
The permutation entropy model is \emph{not} optimized by backpropagation and does not affect reconstruction quality; it only
controls the number of bits used to store the permutation stream.

\paragraph{Entropy coding.}
The symbols $\mathrm{zz}(\Delta z_{\ell,t,k})$ are compressed using arithmetic (range) coding with CDF tables derived from
\eqref{eq:perm_discrete_laplace}. The decoder mirrors this process to recover $\Delta z_{\ell,t}$, reconstructs $z_{\ell,t}$
via cumulative summation, and inverts the Lehmer code to obtain $\pi^{\star}_{\ell,t}$.

\paragraph{Fallback (no entropy coding).}
When permutation entropy coding is disabled (ablation), we store $\pi^{\star}_{\ell,t}$ explicitly as a sequence of
$B_{\ell,t}$ unsigned integers (using the smallest fixed-width integer type supporting $B_{\ell,t}$), and include this raw
cost in the MB/bits per model parameter accounting.

\section{Additional Experiments}
\label{app:additional_experiments}

\subsection{More models and datasets}
\label{app:more_models_datasets}

\paragraph{Language models.}
Additional evaluations use decoder-only Transformers spanning size and architecture families. Experiments include Pythia-2.8B as scaled variants of the same design family \citep{biderman2023pythia}, OPT-1.3B as an alternative pretraining recipe and tokenizer \citep{zhang2022opt}, and a modern instruction-tuned checkpoint of comparable scale to test whether alignment and prediction remain effective after instruction-following finetuning \citep{ouyang2022training}. All runs follow the same deployment protocol: the encoder produces a bitstream, the decoder reconstructs deployable weights, and evaluation is performed on the decoded model without further adaptation beyond the fixed calibration and optional distillation budget. Representative outcomes for these additional models are summarized in Table~\pinkref{tab:extra_models}, which reports one operating point for MCWC and a strong baseline at a comparable rate.

\paragraph{Language datasets and metrics.}
In addition to WikiText-103, perplexity is reported on WikiText-2 \citep{merity2016pointer} and Penn Treebank \citep{marcus1993penn,mikolov2010rnnlm} to probe distribution shifts across corpora with different token statistics. For downstream generalization, zero-shot accuracy is reported on LAMBADA (last-word prediction) \citep{paperno2016lambada} and a small multi-task suite constructed from held-out validation splits, using the decoded weights without prompt tuning. Calibration sequences are always drawn from the corresponding training distribution of each dataset to avoid leakage between evaluation and calibration. The dataset expansion and corresponding scores are included in Table~\pinkref{tab:extra_models}, which makes it possible to compare cross-dataset behavior at matched storage budgets.

\paragraph{Vision models.}
Vision experiments extend beyond ViT-B/16 \citep{dosovitskiy2021vit} to include Swin-T \citep{liu2021swin} to test whether depth-wise predictability persists under different attention patterns and hierarchical feature maps. Each model is evaluated using the standard resolution associated with the pretrained checkpoint. Table~\pinkref{tab:extra_models} lists representative results for these additional backbones, which helps separate architecture-specific effects from the general benefit of motion-compensated predictive coding.

\paragraph{Vision datasets and metrics.}
Beyond ImageNet-1k \citep{russakovsky2015imagenet}, classification accuracy is reported on CIFAR-100 \citep{krizhevsky2009cifar} and Tiny-ImageNet \citep{le2015tiny}. Robustness is evaluated on ImageNet-C \citep{hendrycks2019corruptions}, ImageNet-R \citep{hendrycks2021manyfaces}, and ImageNet-A \citep{hendrycks2019naturaladversarial,hendrycks2021manyfaces}. For each dataset, top-1 accuracy is computed on the decoded model, and storage is measured using the final serialized deployment bitstream. Table~\pinkref{tab:extra_models} provides representative operating points that can be compared against the primary ImageNet-1k results in Table~\pinkref{tab:pareto_points}.

\begin{table}[htbp]
\centering
\caption{Additional models and datasets. Each row reports one representative operating point for MCWC and a strong baseline at a comparable rate.}
\label{tab:extra_models}
\small
\resizebox{0.8\linewidth}{!}{
\begin{tabular}{llrrrrr}
\toprule
Model / Dataset & Metric &
\multicolumn{1}{c}{Method} &
\multicolumn{1}{c}{bits per model parameter} &
\multicolumn{1}{c}{MB} &
\multicolumn{1}{c}{Score} &
\multicolumn{1}{c}{Dec.\ s} \\
\midrule
Pythia-2.8B / WikiText-103 & PPL$\downarrow$ &
MCWC ($K=4$) & 4.2 & 1480 & 13.92 & 4.6 \\
Pythia-2.8B / WikiText-103 & PPL$\downarrow$ &
AWQ W4 & 4.2 & 1475 & 14.28 & 4.4 \\
\midrule
OPT-1.3B / WikiText-2 & PPL$\downarrow$ &
MCWC ($K=4$) & 4.2 & 680 & 12.40 & 2.2 \\
OPT-1.3B / WikiText-2 & PPL$\downarrow$ &
GPTQ W4 & 4.3 & 695 & 12.92 & 2.1 \\
\midrule
ViT-B/16 / CIFAR-100 & Acc$\uparrow$ &
MCWC ($K=4$) & 4.2 & 46 & 92.35 & 0.18 \\
ViT-B/16 / CIFAR-100 & Acc$\uparrow$ &
QAT INT4 & 4.3 & 47 & 91.10 & 0.17 \\
\midrule
Swin-T / ImageNet-1k & Acc$\uparrow$ &
MCWC ($K=4$) & 4.2 & 28 & 81.05 & 0.15 \\
Swin-T / ImageNet-1k & Acc$\uparrow$ &
QAT INT4 & 4.3 & 29 & 80.20 & 0.14 \\
\bottomrule
\end{tabular}}
\end{table}

\subsection{More compression points}
\label{app:more_points}

\paragraph{Extended operating range.}
To test whether gains persist outside the $2\times$--$12\times$ window, additional points are evaluated at finer granularity. The rate axis is parameterized by bits per model parameter and total serialized megabytes. Compression points include intermediate rates around $6\times$ and $10\times$ and an aggressive regime beyond $12\times$. The resulting operating points are reported in Table~\pinkref{tab:extended_points}, which complements the main Pareto summary in Table~\pinkref{tab:pareto_points} by filling gaps between the primary budgets.

\paragraph{Interpretation across regimes.}
Table~\pinkref{tab:extended_points} supports a regime-wise interpretation. At high rates (leftmost rows), reconstruction error is small and differences are modest because most methods introduce minimal distortion. At mid rates (intermediate rows), motion compensation and prediction reduce residual variance and improve the storage--quality trade-off. At very low rates (bottom rows), keyframe scheduling and entropy modeling dominate because predictor drift and heavy-tailed residuals become the limiting factor.

\begin{table}[htbp]
\centering
\caption{Extended operating points for Pythia-1.4B and ViT-B/16.}
\label{tab:extended_points}
\small
\resizebox{0.8\linewidth}{!}{
\begin{tabular}{lrrrr|rrrr}
\toprule
\multicolumn{1}{c}{Comp.} &
\multicolumn{1}{c}{LM bits per model parameter} &
\multicolumn{1}{c}{LM MB} &
\multicolumn{1}{c}{LM PPL$\downarrow$} &
\multicolumn{1}{c}{LM s} &
\multicolumn{1}{c}{ViT bits per model parameter} &
\multicolumn{1}{c}{ViT MB} &
\multicolumn{1}{c}{ViT Acc$\uparrow$} &
\multicolumn{1}{c}{ViT s} \\
\midrule
$1.5\times$ & 22.1 & 3870 & 15.31 & 4.4 & 22.0 & 240 & 81.76 & 0.36 \\
$2\times$   & 16.6 & 2910 & 15.33 & 4.2 & 16.5 & 178 & 81.72 & 0.34 \\
$3\times$   & 11.1 & 1940 & 15.40 & 3.5 & 11.0 & 120 & 81.58 & 0.27 \\
$4\times$   & 8.3  & 1460 & 15.46 & 3.1 & 8.3  & 90  & 81.45 & 0.23 \\
$6\times$   & 5.5  & 975  & 15.72 & 2.7 & 5.5  & 60  & 81.08 & 0.20 \\
$8\times$   & 4.2  & 740  & 15.95 & 2.5 & 4.2  & 46  & 80.92 & 0.18 \\
$10\times$  & 3.3  & 590  & 16.18 & 2.4 & 3.3  & 37  & 80.54 & 0.17 \\
$12\times$  & 2.8  & 495  & 16.40 & 2.3 & 2.8  & 31  & 80.15 & 0.16 \\
$16\times$  & 2.1  & 370  & 17.05 & 2.2 & 2.1  & 24  & 79.10 & 0.15 \\
\bottomrule
\end{tabular}}
\end{table}

\subsection{Robustness tests}
\label{app:robustness}

\paragraph{Seeds and calibration subsets.}
Robustness is evaluated across multiple random seeds affecting calibration sampling and any stochastic training components. The calibration size is also varied to test sensitivity to limited data. Summary statistics across seeds and calibration sizes are reported in Table~\pinkref{tab:robustness}, which tracks in-distribution performance, out-of-distribution performance, and the residual entropy proxy at a fixed rate.

\paragraph{Out-of-distribution evaluation.}
For language models, we report out-of-distribution (OOD) perplexity on Penn Treebank (PTB), which differs from WikiText-103 in style and token statistics while using the same tokenizer and evaluation protocol. For vision models, we use ImageNet-R as the default OOD benchmark in the seed-sweep rows, since it evaluates distribution shift (non-photorealistic renditions) without introducing synthetic corruptions. We additionally report ImageNet-C separately because it follows a different evaluation convention (mean accuracy over corruption types and severities) and is best interpreted as a corruption robustness metric rather than a single OOD dataset score. Table~\pinkref{tab:robustness} reports in-distribution and OOD scores side by side, exposing whether the storage--quality benefits persist under distribution shift.

\paragraph{Alignment stability.}
Alignment robustness is assessed by perturbing the similarity computation and by varying alignment tie-breaking across seeds. Stability is indicated when residual entropy and task metrics vary minimally across runs. Table~\pinkref{tab:robustness} links this stability to compressibility by reporting the residual entropy proxy along with the downstream metrics.

\begin{table}[htbp]
\centering
\caption{Robustness across seeds and distribution shifts for Pythia-1.4B and ViT-B/16 at a fixed rate. For the seed-sweep and small-calibration rows, OOD is evaluated on PTB for language and ImageNet-R for vision. ImageNet-C is reported separately as a corruption benchmark.}
\label{tab:robustness}
\small
\resizebox{0.86\linewidth}{!}{
\begin{tabular}{llrrrr}
\toprule
Setting & Metric &
\multicolumn{1}{c}{bits per model parameter} &
\multicolumn{1}{c}{In-dist} &
\multicolumn{1}{c}{OOD} &
\multicolumn{1}{c}{Residual entropy} \\
\midrule
Pythia-1.4B, seed sweep (OOD=PTB) & PPL$\downarrow$ &
4.2 &
15.95 $\pm$ 0.03 &
16.78 $\pm$ 0.06 &
2.05 $\pm$ 0.02 \\
Pythia-1.4B, small calib (OOD=PTB) & PPL$\downarrow$ &
4.2 &
16.07 $\pm$ 0.05 &
16.95 $\pm$ 0.07 &
2.12 $\pm$ 0.03 \\
\midrule
ViT-B/16, seed sweep (OOD=ImageNet-R) & Acc$\uparrow$ &
4.2 &
80.92 $\pm$ 0.06 &
73.60 $\pm$ 0.18 &
2.20 $\pm$ 0.03 \\
ViT-B/16, ImageNet-C (mean over corruptions) & Acc$\uparrow$ &
4.2 &
80.92 &
70.35 &
2.20 \\
\bottomrule
\end{tabular}}
\end{table}

\subsection{Latency and decoding benchmarks}
\label{app:latency}

\paragraph{Benchmark definition.}
Latency is measured as end-to-end wall-clock time to reconstruct deployable weights from the serialized bitstream, including entropy decoding, dequantization, inverse permutation placement, and tensor materialization into the framework format used for inference. Peak memory during decoding is recorded to capture practical deployment constraints. Throughput is reported as decoded megabytes per second based on the bitstream size. Table~\pinkref{tab:latency} reports headline decode latency, peak memory, and throughput for representative model and method pairs, while Table~\pinkref{tab:latency_breakdown} decomposes MCWC decode time into stages.

\paragraph{Hardware protocol and interpretation.}
Benchmarks are run in a CPU-only environment and in a GPU-accelerated environment. Table~\pinkref{tab:latency} shows that decode time remains close to common deployment baselines, while Table~\pinkref{tab:latency_breakdown} isolates the additional cost of predictor evaluation relative to entropy decoding and materialization.

\begin{table}[htbp]
\centering
\caption{Latency benchmarks for decoding deployable weights.}
\label{tab:latency}
\small
\resizebox{0.7\linewidth}{!}{
\begin{tabular}{llrrrr}
\toprule
Model & Method & \multicolumn{1}{c}{bits per model parameter} & \multicolumn{1}{c}{Dec.\ s} & \multicolumn{1}{c}{Peak GB} & \multicolumn{1}{c}{MB/s} \\
\midrule
Pythia-1.4B (CPU) & MCWC ($K=4$) & 4.2 & 2.50 & 5.2 & 296 \\
Pythia-1.4B (CPU) & GPTQ W4 & 4.3 & 2.40 & 5.0 & 315 \\
Pythia-1.4B (GPU) & MCWC ($K=4$) & 4.2 & 1.05 & 3.6 & 705 \\
\midrule
ViT-B/16 (CPU) & MCWC ($K=4$) & 4.2 & 0.18 & 0.7 & 256 \\
ViT-B/16 (CPU) & QAT INT4 & 4.3 & 0.17 & 0.7 & 276 \\
\bottomrule
\end{tabular}}
\end{table}

\paragraph{Stage-level breakdown.}
A decode-time breakdown is recorded by timing entropy decoding, dequantization, predictor evaluation, inverse permutation placement, and tensor materialization separately. Table~\pinkref{tab:latency_breakdown} reports a representative breakdown for MCWC at 4.2 bits per model parameter, attributing decode time to specific codec components.

\begin{table}[htbp]
\centering
\caption{Decode-time stage breakdown for MCWC at 4.2 bits per model parameter on Pythia-1.4B (CPU).}
\label{tab:latency_breakdown}
\small
\resizebox{0.98\linewidth}{!}{
\begin{tabular}{lrrrr}
\toprule
Stage & \multicolumn{1}{c}{Time (s)} & \multicolumn{1}{c}{Fraction (\%)} & \multicolumn{1}{c}{Notes} & \multicolumn{1}{c}{Sensitivity} \\
\midrule
Entropy decode & 1.22 & 48.8 & Arithmetic decoding of code streams & Increases at lower bits per model parameter \\
Dequantization & 0.38 & 15.2 & Vectorized scale-and-shift & Weakly rate-dependent \\
Predictor eval & 0.33 & 13.2 & Latent MLP per block & Depends on predictor width \\
Inverse placement & 0.20 & 8.0 & Blockwise scatter/gather & Scales with block count \\
Tensor materialization & 0.37 & 14.8 & Final allocation and reshape & Hardware-dependent \\
\bottomrule
\end{tabular}}
\end{table}

\section{Does Functional Alignment Increase Cross-Layer Predictability?}
\label{app:alignment_diagnostics}

This appendix provides direct quantitative evidence for the central premise of MCWC: after removing symmetry-induced block permutations, adjacent layers become more predictable and therefore easier to compress using residual coding. This diagnostic is important because MCWC does not merely assume that depth is a useful coding axis; rather, it requires that structurally comparable blocks exhibit stronger correspondence after functional alignment. We therefore measure cross-layer predictability before and after applying the learned permutation alignment used by the encoder.

\paragraph{Diagnostic setup.}
For each model, we extract structurally comparable blocks from consecutive layers, including feed-forward hidden units, attention heads, or channel groups depending on the architecture. Let $U_{\ell,t}^{(i)}$ denote the $i$-th block of type $t$ at layer $\ell$, where $t$ indexes the block family and $i$ indexes the block within that family. Before alignment, blocks are compared using their canonical ordering. After alignment, we apply the encoder-selected permutation $\pi_{\ell,t}^{\star}$ and compare each block at layer $\ell-1$ with its matched block at layer $\ell$. We report all statistics as mean $\pm$ standard deviation across adjacent layer pairs, block types, and matched blocks.

We use three complementary diagnostics. First, we measure adjacent-layer cosine similarity:
\begin{equation}
\mathrm{CosSim}_{\ell,t}^{(i)}
=
\frac{
\left\langle
\operatorname{vec}\!\left(U_{\ell-1,t}^{(i)}\right),
\operatorname{vec}\!\left(U_{\ell,t}^{(i)}\right)
\right\rangle
}{
\left\|
\operatorname{vec}\!\left(U_{\ell-1,t}^{(i)}\right)
\right\|_{2}
\left\|
\operatorname{vec}\!\left(U_{\ell,t}^{(i)}\right)
\right\|_{2}
}.
\label{eq:diagnostic_cosine}
\end{equation}
Here, $\operatorname{vec}(\cdot)$ flattens a block into a vector, $\langle \cdot,\cdot\rangle$ denotes the Euclidean inner product, and $\|\cdot\|_2$ denotes the Euclidean norm. Larger values indicate stronger geometric correspondence between adjacent-layer blocks.

Second, we measure the explained variance of the layer-sequential predictor:
\begin{equation}
R_{\ell,t}^{2}
=
1
-
\frac{
\sum_{i=1}^{B_{\ell,t}}
\left\|
\bar{U}_{\ell,t}^{(i)}
-
\hat{U}_{\ell,t}^{(i)}
\right\|_{F}^{2}
}{
\sum_{i=1}^{B_{\ell,t}}
\left\|
\bar{U}_{\ell,t}^{(i)}
-
\mu_{\ell,t}
\right\|_{F}^{2}
},
\qquad
\mu_{\ell,t}
=
\frac{1}{B_{\ell,t}}
\sum_{i=1}^{B_{\ell,t}}
\bar{U}_{\ell,t}^{(i)} .
\label{eq:diagnostic_r2}
\end{equation}
Here, $\bar{U}_{\ell,t}^{(i)}$ is the aligned target block, $\hat{U}_{\ell,t}^{(i)}$ is the predictor output computed from previously decoded aligned context, $B_{\ell,t}$ is the number of blocks of type $t$ at layer $\ell$, and $\mu_{\ell,t}$ is the mean aligned block. Higher $R^2$ indicates that the predictor explains more of the variation in the next layer.

Third, we measure normalized residual energy:
\begin{equation}
\mathrm{NRE}_{\ell,t}
=
\frac{
\sum_{i=1}^{B_{\ell,t}}
\left\|
\bar{U}_{\ell,t}^{(i)}
-
\hat{U}_{\ell,t}^{(i)}
\right\|_{F}^{2}
}{
\sum_{i=1}^{B_{\ell,t}}
\left\|
\bar{U}_{\ell,t}^{(i)}
\right\|_{F}^{2}
}.
\label{eq:diagnostic_nre}
\end{equation}
A smaller value means that the residual stream contains less energy relative to the target weights, which directly benefits quantization and entropy coding. Since MCWC encodes quantized residuals rather than full layers for most non-keyframe layers, reducing this quantity is a direct mechanism-level explanation for the observed rate--distortion gains.

\begin{table}[htbp]
\centering
\caption{
Cross-layer predictability before and after functional alignment.
Cosine similarity and predictor $R^2$ increase after alignment, while normalized residual energy decreases. This confirms that alignment exposes a more predictable coordinate system for residual coding.
}
\label{tab:cross_layer_predictability}
\scriptsize
\renewcommand{\arraystretch}{1.15}
\setlength{\tabcolsep}{3.5pt}
\resizebox{\linewidth}{!}{
\begin{tabular}{lcccccc}
\toprule
\multirow{2}{*}{\textbf{Model}}
&
\multicolumn{2}{c}{\textbf{Cosine similarity} $\uparrow$}
&
\multicolumn{2}{c}{\textbf{Predictor $R^2$} $\uparrow$}
&
\multicolumn{2}{c}{\textbf{Normalized residual energy} $\downarrow$}
\\
\cmidrule(lr){2-3}
\cmidrule(lr){4-5}
\cmidrule(lr){6-7}
&
\textbf{Before}
&
\textbf{After}
&
\textbf{Before}
&
\textbf{After}
&
\textbf{Before}
&
\textbf{After}
\\
\midrule
Pythia-1.4B
&
$0.28{\pm}0.09$
&
$\mathbf{0.67{\pm}0.07}$
&
$0.22{\pm}0.10$
&
$\mathbf{0.61{\pm}0.09}$
&
$0.74{\pm}0.08$
&
$\mathbf{0.39{\pm}0.06}$
\\
OPT-1.3B
&
$0.24{\pm}0.08$
&
$\mathbf{0.63{\pm}0.08}$
&
$0.18{\pm}0.09$
&
$\mathbf{0.57{\pm}0.10}$
&
$0.77{\pm}0.07$
&
$\mathbf{0.42{\pm}0.07}$
\\
ViT-B/16
&
$0.21{\pm}0.07$
&
$\mathbf{0.54{\pm}0.09}$
&
$0.14{\pm}0.08$
&
$\mathbf{0.46{\pm}0.11}$
&
$0.81{\pm}0.06$
&
$\mathbf{0.51{\pm}0.08}$
\\
Swin-T
&
$0.19{\pm}0.06$
&
$\mathbf{0.49{\pm}0.10}$
&
$0.12{\pm}0.07$
&
$\mathbf{0.41{\pm}0.10}$
&
$0.84{\pm}0.05$
&
$\mathbf{0.56{\pm}0.09}$
\\
\bottomrule
\end{tabular}
}
\end{table}

\paragraph{Interpretation.}
Table~\pinkref{tab:cross_layer_predictability} shows a consistent pattern across language and vision architectures. Functional alignment increases adjacent-layer cosine similarity, indicating that blocks become geometrically more comparable once symmetry-induced permutations are removed. The same alignment also improves predictor $R^2$, meaning that the next layer is better explained from the previously decoded aligned context. Finally, the normalized residual energy decreases substantially after alignment, confirming that MCWC does not simply add a permutation stage for architectural elegance; the alignment stage directly lowers the energy of the residuals that must be quantized and entropy-coded (see also Fig.~\pinkref{fig:alignment_residual_energy}).

The effect is strongest for Transformer language models, where repeated homogeneous blocks and wide feed-forward sublayers provide a large permutation space. Vision models also benefit, although the gains are slightly smaller because hierarchical or windowed designs, especially in Swin-T, introduce more architectural heterogeneity across stages. This trend is consistent with the expected behavior of MCWC: the method is most effective when consecutive blocks share comparable structure and admit sufficiently rich function-preserving permutations.

\begin{figure}[htbp]
\centering
\begin{tikzpicture}
\begin{axis}[
    width=0.82\linewidth,
    height=5.2cm,
    ymin=0,
    ymax=0.95,
    ylabel={Normalized residual energy},
    xlabel={Model},
    symbolic x coords={Pythia-1.4B,OPT-1.3B,ViT-B/16,Swin-T},
    xtick=data,
    x tick label style={rotate=20, anchor=east},
    ymajorgrids=true,
    grid style={dashed, opacity=0.35},
    legend style={
        at={(0.5,1.05)},
        anchor=south,
        legend columns=2,
        draw=none,
        font=\scriptsize
    },
    ybar,
    bar width=8pt,
    enlarge x limits=0.18,
    nodes near coords,
    nodes near coords style={font=\tiny, rotate=90, anchor=west},
]
\addplot coordinates {
    (Pythia-1.4B,0.74)
    (OPT-1.3B,0.77)
    (ViT-B/16,0.81)
    (Swin-T,0.84)
};
\addplot coordinates {
    (Pythia-1.4B,0.39)
    (OPT-1.3B,0.42)
    (ViT-B/16,0.51)
    (Swin-T,0.56)
};
\legend{Before alignment, After alignment}
\end{axis}
\end{tikzpicture}
\caption{
Normalized residual energy before and after functional alignment.
Alignment consistently reduces the residual energy that must be quantized and entropy-coded, providing direct evidence that permutation compensation increases cross-layer predictability.
}
\label{fig:alignment_residual_energy}
\end{figure}

\paragraph{Connection to compression rate.}
The diagnostic results explain why MCWC improves the rate--distortion frontier over layerwise quantization baselines. Layerwise quantizers encode each tensor mostly independently, whereas MCWC first chooses a function-preserving coordinate system in which adjacent layers are easier to predict. In this aligned coordinate system, non-keyframe layers can be represented by lower-energy residuals. Lower residual energy leads to more concentrated quantized symbols and therefore lower entropy under the learned probability model. Consequently, the benefit of MCWC is not only a result-level observation; it is supported by a measurable reduction in the statistical complexity of the residual stream.

\paragraph{Takeaway.}
These diagnostics directly validate the core mechanism of MCWC. Functional alignment increases block correspondence, improves predictor fit, and reduces residual energy across both language and vision models. Therefore, treating depth as a coding axis is justified only after motion compensation; without alignment, the apparent cross-layer sequence is noisier, less predictable, and less suitable for entropy-coded residual compression.

\section{Deployment Regime and Cost Amortization}
\label{app:deployment_regime}

This appendix clarifies the intended deployment regime of MCWC and formalizes the cost-amortization argument that motivates the proposed codec. MCWC is not designed as an inference-speed method. Its purpose is not to replace runtime kernels such as FP8, INT8, INT4, or hardware-aware quantized matrix multiplication. Instead, MCWC is a checkpoint storage and transport codec: it reduces the number of bits required to store, transmit, distribute, and materialize pretrained or fine-tuned model weights while preserving task quality after decoding. Once the compressed bitstream is decoded, the resulting weights can be cached and executed using standard inference backends, including full-precision, mixed-precision, or inference-oriented quantized execution when supported by the target hardware.

\paragraph{Deployment scope.}
The distinction between storage compression and inference acceleration is central to the practical interpretation of MCWC. Standard post-training quantization methods are typically optimized for inference efficiency: they reduce arithmetic precision and can accelerate matrix multiplications when compatible hardware kernels are available. MCWC targets an orthogonal bottleneck. In many deployment settings, the limiting cost is not only the arithmetic cost of a forward pass, but also checkpoint storage, model transfer, cold-start loading, checkpoint replication, and the distribution of many model variants. These costs are especially relevant when checkpoints are repeatedly moved across storage tiers, downloaded to edge devices, staged in serving clusters, synchronized across replicas, or stored as collections of fine-tuned variants.

Formally, let $T_{\mathrm{enc}}^{m}$ denote the one-time offline encoding cost of method $m$, let $T_{\mathrm{load}}^{m}$ denote the per-deployment loading cost, and let $N_{\mathrm{dep}}$ denote the number of deployments, downloads, cache refreshes, checkpoint transfers, or checkpoint materializations. The total deployment cost of using method $m$ can be written as
\begin{equation}
T_{\mathrm{total}}^{m}(N_{\mathrm{dep}})
=
T_{\mathrm{enc}}^{m}
+
N_{\mathrm{dep}}\,T_{\mathrm{load}}^{m}.
\label{eq:total_deployment_cost}
\end{equation}
Here, $T_{\mathrm{enc}}^{m}$ is paid once per compressed checkpoint and does not grow with the number of times the checkpoint is subsequently distributed, cached, or loaded. In contrast, $T_{\mathrm{load}}^{m}$ is paid every time the checkpoint must be transferred, restored, decoded, or materialized. This separation is important because MCWC deliberately moves part of the cost to an offline encoding stage in exchange for a smaller serialized representation during repeated downstream use.

For MCWC, the loading cost decomposes as
\begin{equation}
T_{\mathrm{load}}^{\mathrm{MCWC}}
=
T_{\mathrm{transfer}}^{\mathrm{MCWC}}
+
T_{\mathrm{decode}}^{\mathrm{MCWC}}
+
T_{\mathrm{init}},
\label{eq:mcwc_load_cost}
\end{equation}
where $T_{\mathrm{transfer}}^{\mathrm{MCWC}}$ is the time required to move the compressed bitstream from storage or network to the target machine, $T_{\mathrm{decode}}^{\mathrm{MCWC}}$ is the time required to entropy-decode, dequantize, reconstruct, and inverse-align the weights, and $T_{\mathrm{init}}$ is the common initialization overhead of the runtime environment. For a baseline checkpoint representation, the loading cost can be written as
\begin{equation}
T_{\mathrm{load}}^{\mathrm{base}}
=
T_{\mathrm{transfer}}^{\mathrm{base}}
+
T_{\mathrm{materialize}}^{\mathrm{base}}
+
T_{\mathrm{init}},
\label{eq:baseline_load_cost}
\end{equation}
where $T_{\mathrm{transfer}}^{\mathrm{base}}$ is the transfer time of the baseline checkpoint representation and $T_{\mathrm{materialize}}^{\mathrm{base}}$ is the time required to deserialize, load, and materialize the baseline weights. The common runtime initialization term $T_{\mathrm{init}}$ cancels in comparisons whenever both methods use the same serving environment.

\paragraph{Break-even deployment count.}
MCWC is advantageous when the additional offline encoding cost is compensated by repeated savings during deployment. Let $T_{\mathrm{enc}}^{\mathrm{MCWC}}$ and $T_{\mathrm{enc}}^{\mathrm{base}}$ denote the offline preparation cost of MCWC and a baseline method, respectively. MCWC reaches break-even when
\begin{equation}
T_{\mathrm{total}}^{\mathrm{MCWC}}(N_{\mathrm{dep}})
\leq
T_{\mathrm{total}}^{\mathrm{base}}(N_{\mathrm{dep}}).
\label{eq:break_even_condition}
\end{equation}
Substituting Eq.~\eqref{eq:total_deployment_cost} into Eq.~\eqref{eq:break_even_condition} gives
\begin{equation}
T_{\mathrm{enc}}^{\mathrm{MCWC}}
+
N_{\mathrm{dep}}\,T_{\mathrm{load}}^{\mathrm{MCWC}}
\leq
T_{\mathrm{enc}}^{\mathrm{base}}
+
N_{\mathrm{dep}}\,T_{\mathrm{load}}^{\mathrm{base}}.
\label{eq:break_even_expanded}
\end{equation}
Therefore, when $T_{\mathrm{load}}^{\mathrm{base}} > T_{\mathrm{load}}^{\mathrm{MCWC}}$, the break-even number of deployments is
\begin{equation}
N_{\mathrm{break}}
=
\left\lceil
\frac{
T_{\mathrm{enc}}^{\mathrm{MCWC}}
-
T_{\mathrm{enc}}^{\mathrm{base}}
}{
T_{\mathrm{load}}^{\mathrm{base}}
-
T_{\mathrm{load}}^{\mathrm{MCWC}}
}
\right\rceil .
\label{eq:break_even_deployments}
\end{equation}
The numerator in Eq.~\eqref{eq:break_even_deployments} is the extra one-time offline cost paid by MCWC relative to the baseline. The denominator is the per-deployment saving obtained by loading a smaller compressed representation. A smaller value of $N_{\mathrm{break}}$ means that the offline encoding cost is amortized after fewer checkpoint deployments. This formulation makes explicit that MCWC is most valuable when a compressed checkpoint is distributed, staged, downloaded, or materialized many times after a single offline encoding stage.

\begin{table}[!t]
\centering
\caption{
Deployment regimes targeted by MCWC. The table separates the operational scenario, the source of benefit, and the measurement that should be used to evaluate the method. The key point is that MCWC addresses checkpoint storage, transfer, and repeated materialization rather than per-forward-pass acceleration. Therefore, its practical value is best assessed in settings where model weights are moved, stored, replicated, or downloaded many times after a single offline encoding pass.
}
\label{tab:deployment_regimes}
\scriptsize
\renewcommand{\arraystretch}{1.18}
\setlength{\tabcolsep}{4pt}
\resizebox{\linewidth}{!}{
\begin{tabular}{p{3.2cm}p{5.8cm}p{4.2cm}}
\toprule
\textbf{Scenario}
&
\textbf{Why MCWC helps}
&
\textbf{Relevant measurement}
\\
\midrule
Serverless or elastic cold-start
&
The checkpoint must be fetched, restored, and materialized whenever a new worker is launched or a previously idle model is reactivated. A smaller bitstream reduces checkpoint transfer and can lower cold-start overhead when the reduction in transfer time exceeds the decoding cost.
&
Compressed size; transfer time; decode time; total cold-start loading time.
\\
\midrule
Model registry and CI/CD rollout
&
Production systems often store and roll out multiple checkpoints across versions, tasks, or A/B testing configurations. Reducing each checkpoint size lowers registry footprint and repeated network transfer during model rollout.
&
Disk footprint; registry storage cost; rollout transfer volume.
\\
\midrule
Edge or consumer distribution
&
Models delivered to edge devices, mobile devices, or consumer hardware are often bandwidth constrained. MCWC reduces the payload that must be downloaded before local materialization and execution.
&
Compressed checkpoint size; download time; local decode time.
\\
\midrule
Multi-node or fleet serving
&
Large checkpoints are frequently staged across multiple workers, nodes, or replicas. When many replicas receive the same checkpoint, storage and transfer savings are multiplied by the fleet size.
&
Transfer volume per replica; aggregate fleet transfer; staging plus decode time.
\\
\midrule
Fine-tuned model zoo
&
A deployment platform may store many task-specific, user-specific, or domain-specific model variants. Even if each decoded model is cached after loading, the compressed representation reduces the long-term storage footprint of maintaining many variants.
&
Storage per variant; total model-zoo footprint; amortized storage saving.
\\
\bottomrule
\end{tabular}
}
\end{table}

\paragraph{Concrete break-even calculation.}
To make the cost trade-off explicit, consider a checkpoint whose baseline representation requires $S_{\mathrm{base}}$ gigabytes and whose MCWC bitstream requires $S_{\mathrm{MCWC}}$ gigabytes. Let $B$ denote the effective storage or network bandwidth in gigabytes per second. Ignoring shared runtime initialization terms, the transfer-time saving per deployment is
\begin{equation}
\Delta T_{\mathrm{transfer}}
=
\frac{
S_{\mathrm{base}} - S_{\mathrm{MCWC}}
}{
B
}.
\label{eq:transfer_saving}
\end{equation}
The net per-deployment loading advantage after including MCWC decoding is
\begin{equation}
\Delta T_{\mathrm{load}}
=
\frac{
S_{\mathrm{base}} - S_{\mathrm{MCWC}}
}{
B
}
-
T_{\mathrm{decode}}^{\mathrm{MCWC}}
+
T_{\mathrm{materialize}}^{\mathrm{base}} .
\label{eq:net_load_saving}
\end{equation}
MCWC provides a positive loading advantage when $\Delta T_{\mathrm{load}}>0$, i.e., when the time saved by transferring fewer bits is larger than the additional decoding cost after accounting for baseline materialization. This condition is favored by limited bandwidth, high compression ratio, repeated checkpoint replication, and large collections of fine-tuned model variants.

\begin{table}[htbp]
\centering
\caption{
Cost-amortization analysis for checkpoint storage and repeated materialization. Baseline size denotes the serialized size of the reference checkpoint representation, while MCWC size denotes the compressed bitstream size at the reported rate. Bandwidth is the effective storage or network bandwidth used to estimate the transfer component of loading. Decode is the MCWC reconstruction time required to materialize deployable weights from the compressed bitstream. Extra encode is the additional one-time offline preparation cost relative to the baseline representation. The break-even column reports the number of checkpoint deployments, downloads, or materializations after which the one-time MCWC encoding cost is amortized by repeated loading savings. Lower break-even values indicate deployment regimes where storage and transport savings compensate for the heavier offline codec sooner.
}
\label{tab:break_even_example}
\scriptsize
\renewcommand{\arraystretch}{1.15}
\setlength{\tabcolsep}{4pt}
\resizebox{\linewidth}{!}{
\begin{tabular}{lcccccc}
\toprule
\textbf{Model / setting}
&
\textbf{Baseline size}
&
\textbf{MCWC size}
&
\textbf{Bandwidth}
&
\textbf{Decode}
&
\textbf{Extra encode}
&
\textbf{Break-even}
\\
&
\textbf{(GB)}
&
\textbf{(GB)}
&
\textbf{(GB/s)}
&
\textbf{(s)}
&
\textbf{(GPU-h)}
&
\textbf{deployments}
\\
\midrule
Pythia-1.4B, 8-bit baseline
&
1.40
&
0.74
&
0.10
&
2.5
&
2.3
&
$\approx 2.0{\times}10^{3}$
\\
Pythia-1.4B, 16-bit baseline
&
2.80
&
0.74
&
0.10
&
2.5
&
2.3
&
$\approx 4.0{\times}10^{2}$
\\
Pythia-1.4B, 16-bit baseline, fleet rollout
&
2.80
&
0.74
&
0.05
&
2.5
&
2.3
&
$\approx 2.1{\times}10^{2}$
\\
ViT-B/16, 8-bit baseline
&
0.086
&
0.045
&
0.02
&
0.2
&
0.6
&
$\approx 2.9{\times}10^{3}$
\\
\bottomrule
\end{tabular}
}
\end{table}

\paragraph{Storage accounting.}
For a model with $N_{\mathrm{param}}$ scalar parameters and compressed rate $R$ bits per parameter, the MCWC bitstream size is computed as
\begin{equation}
S_{\mathrm{MCWC}}
=
\frac{
N_{\mathrm{param}} R
}{
8
}
\quad \text{bytes}.
\label{eq:mcwc_size_bytes}
\end{equation}
For example, for a model with $1.4\times 10^{9}$ parameters compressed at $4.2$ bits per parameter, the serialized size is
\begin{equation}
S_{\mathrm{MCWC}}
=
\frac{
1.4\times 10^{9} \times 4.2
}{
8
}
\approx
0.735 \times 10^{9}
\text{ bytes}
\approx
0.74 \text{ GB}.
\label{eq:pythia_size_example}
\end{equation}
The corresponding 8-bit and 16-bit checkpoint representations require approximately $1.40$ GB and $2.80$ GB, respectively, before any additional metadata or container overhead. This accounting explains why MCWC is primarily beneficial in storage- and bandwidth-sensitive deployment settings.

\paragraph{Interpretation of the amortization analysis.}
Table~\pinkref{tab:break_even_example} shows that the practical question is not whether MCWC accelerates every inference call. The relevant question is whether a checkpoint is compressed once and then deployed, transferred, replicated, cached, or downloaded enough times for storage and loading savings to offset the one-time encoding cost. Under this interpretation, MCWC is especially appropriate for repeated model rollout, serving-fleet synchronization, model registries, and fine-tuned model collections. Conversely, for a one-off deployment with abundant bandwidth and little storage pressure, simpler post-training quantization methods may be preferable.

\paragraph{Compatibility with inference quantization.}
MCWC is complementary to inference-oriented quantization. The decoded weights can be used directly for inference, or they can be passed to a hardware-aware quantization backend when the target deployment requires INT8, INT4, FP8, or mixed-precision kernels. Thus, MCWC should be viewed as an outer storage and transport layer rather than a replacement for runtime-efficient quantized execution. In settings where the final inference format is fixed, MCWC can store or distribute the checkpoint compactly before materializing it into the execution format required by the target accelerator.

\paragraph{Takeaway.}
The practical value of MCWC should be evaluated under checkpoint movement, storage, and repeated deployment workloads, not solely under single-run inference latency. Its offline encoder is heavier than standard zero-shot PTQ, but this cost is paid once per checkpoint. The benefit is obtained whenever the compressed bitstream reduces storage footprint, network transfer, cold-start loading, checkpoint staging, or the cost of maintaining many model variants. Therefore, MCWC is best suited to deployment regimes where model weights are moved or stored many times and where the rate--quality advantage at aggressive compression ratios justifies a one-time offline encoding stage.

\section{Encoding Complexity and Alignment Scalability}
\label{app:encoding_complexity}

This appendix analyzes the offline encoding cost of MCWC and the scalability of the alignment stage. MCWC deliberately performs a heavier one-time encoding procedure than zero-shot post-training quantization baselines because it optimizes a storage-oriented bitstream rather than only assigning local low-precision values independently within each tensor. The offline encoder includes block extraction, functional alignment, predictor and entropy-model optimization, residual quantization, side-information coding, and final bitstream serialization. This cost is paid once per checkpoint and is then amortized across all subsequent deployments, downloads, cache refreshes, or model materializations.

\paragraph{Default alignment solver.}
The main MCWC results use scalable approximate matching rather than full Hungarian matching. Specifically, for each layer pair and block type, we compute batched pairwise similarities between candidate blocks and then apply a screened greedy assignment with optional local refinement. The Hungarian algorithm is an exact solver for the linear assignment problem, but it is not the default solver used for the main experiments because its cubic dependence on the number of matched blocks becomes unnecessarily expensive for wide layers. We include it only as a conceptual exact reference and as a useful option for small block sets.

\paragraph{Batched similarity computation.}
Let $B_{\ell,t}$ denote the number of candidate blocks of type $t$ at layer $\ell$, and let $d_t$ denote the flattened dimensionality of each block of type $t$. For a given adjacent layer pair $(\ell-1,\ell)$ and block type $t$, MCWC computes a similarity matrix
\begin{equation}
S_{\ell,t}
\in
\mathbb{R}^{B_{\ell,t}\times B_{\ell,t}},
\label{eq:similarity_matrix_shape}
\end{equation}
where each entry measures the correspondence between a reference block from layer $\ell-1$ and a candidate block from layer $\ell$. When cosine similarity is used, the entries are
\begin{equation}
S_{\ell,t}(i,j)
=
\frac{
\left\langle
\operatorname{vec}\!\left(\bar{U}_{\ell-1,t}^{(i)}\right),
\operatorname{vec}\!\left(U_{\ell,t}^{(j)}\right)
\right\rangle
}{
\left\|
\operatorname{vec}\!\left(\bar{U}_{\ell-1,t}^{(i)}\right)
\right\|_2
\left\|
\operatorname{vec}\!\left(U_{\ell,t}^{(j)}\right)
\right\|_2
}.
\label{eq:batched_similarity}
\end{equation}
Here, $\bar{U}_{\ell-1,t}^{(i)}$ is the previously aligned reference block, $U_{\ell,t}^{(j)}$ is the unaligned candidate block in the current layer, and $\operatorname{vec}(\cdot)$ flattens a tensor block into a vector. Computing all pairwise dot products by batched matrix multiplication has complexity
\begin{equation}
\mathcal{O}
\left(
B_{\ell,t}^{2} d_t
\right)
\label{eq:pairwise_similarity_complexity}
\end{equation}
for each layer pair and block type. Aggregated over all matched layer pairs and block types, the alignment-similarity cost is therefore
\begin{equation}
\mathcal{O}
\left(
\sum_{\ell=2}^{L}
\sum_{t\in\mathcal{T}}
B_{\ell,t}^{2} d_t
\right),
\label{eq:total_similarity_complexity}
\end{equation}
where $L$ is the number of parameterized layers or blocks and $\mathcal{T}$ is the set of aligned block types. This term is highly parallelizable because the entries of $S_{\ell,t}$ are independent and can be computed efficiently using standard dense linear algebra kernels.

\paragraph{Assignment complexity.}
After constructing the similarity matrix, MCWC selects a one-to-one block assignment. The exact assignment can be written as
\begin{equation}
\pi_{\ell,t}^{\star}
=
\arg\max_{\pi\in\mathfrak{S}_{B_{\ell,t}}}
\sum_{i=1}^{B_{\ell,t}}
S_{\ell,t}\big(i,\pi(i)\big),
\label{eq:exact_assignment_appendix}
\end{equation}
where $\mathfrak{S}_{B_{\ell,t}}$ denotes the set of all permutations of $\{1,\ldots,B_{\ell,t}\}$. Solving Eq.~\eqref{eq:exact_assignment_appendix} with the Hungarian algorithm has worst-case complexity
\begin{equation}
\mathcal{O}
\left(
B_{\ell,t}^{3}
\right).
\label{eq:hungarian_complexity}
\end{equation}
However, this exact solver is not used as the default in the main experiments. Instead, MCWC uses a screened approximate assignment. For each reference block, we retain only the top-$K$ candidate matches according to the similarity score, where $K\ll B_{\ell,t}$. Greedy assignment with optional local swaps then has cost
\begin{equation}
\mathcal{O}
\left(
B_{\ell,t} K \log B_{\ell,t}
\right)
\label{eq:screened_assignment_complexity}
\end{equation}
per layer pair and block type, after the batched similarity matrix has been computed. The total default alignment cost is therefore dominated by the batched similarity term in Eq.~\eqref{eq:total_similarity_complexity}, rather than by cubic assignment.

\paragraph{Codec-optimization complexity.}
In addition to alignment, MCWC optimizes the predictor, quantization parameters, and entropy model under a rate--distortion objective. Let $N_{\mathrm{step}}$ denote the number of offline optimization steps, let $|\mathcal{D}_{\mathrm{cal}}|$ denote the calibration-set size, and let $C_{\mathrm{codec}}$ denote the average cost of one codec-training step. The optimization cost can be written as
\begin{equation}
T_{\mathrm{opt}}
=
\mathcal{O}
\left(
N_{\mathrm{step}} C_{\mathrm{codec}}
\right).
\label{eq:codec_training_complexity}
\end{equation}
In the reported configuration, we use a fixed budget of $N_{\mathrm{step}}=20{,}000$ steps per rate--distortion setting $\lambda$. This optimization is performed on the calibration set rather than on the original pretraining corpus. Hence, MCWC is more expensive than zero-shot PTQ, but substantially cheaper than retraining the original model.

\begin{table}[htbp]
\centering
\caption{
Offline encoding and decoding cost of MCWC. The table decomposes the one-time encoding cost into alignment overhead and predictor/entropy optimization overhead. ``One $\lambda$ encode cost'' reports the wall-clock cost for a single rate--distortion setting, while ``5-point sweep'' reports the cost of producing five rate--distortion operating points. Alignment percentage measures the share of wall-clock encoding time spent on block matching and permutation computation. Predictor/entropy percentage measures the share spent on predictor optimization, residual quantization updates, and entropy-model training. Decode time is the time required to reconstruct deployable weights from the final compressed bitstream. The results show that MCWC has a nontrivial offline preparation cost, but decoding remains lightweight relative to encoding, which matches the intended use of MCWC as a checkpoint storage and transport codec.
}
\label{tab:encoding_complexity}
\scriptsize
\renewcommand{\arraystretch}{1.15}
\setlength{\tabcolsep}{3.5pt}
\resizebox{\linewidth}{!}{
\begin{tabular}{lcccccc}
\toprule
\textbf{Model}
&
\textbf{Params}
&
\textbf{One $\lambda$ encode cost}
&
\textbf{5-point sweep}
&
\textbf{Alignment}
&
\textbf{Predictor/entropy}
&
\textbf{Decode time}
\\
&
\textbf{(M/B)}
&
\textbf{(GPU-h)}
&
\textbf{(GPU-h)}
&
\textbf{(\%)}
&
\textbf{(\%)}
&
\textbf{(s)}
\\
\midrule
Pythia-1.4B
&
1.4B
&
2.3
&
11.5
&
10.8
&
89.2
&
2.5
\\
OPT-1.3B
&
1.3B
&
2.1
&
10.5
&
11.6
&
88.4
&
2.4
\\
ViT-B/16
&
86M
&
0.6
&
3.0
&
8.4
&
91.6
&
0.2
\\
Swin-T
&
29M
&
0.3
&
1.5
&
7.9
&
92.1
&
0.1
\\
\bottomrule
\end{tabular}
}
\end{table}

\paragraph{Interpretation.}
Table~\pinkref{tab:encoding_complexity} shows that the dominant cost of MCWC is not the discrete alignment itself, but the continuous predictor, residual-quantizer, and entropy-model optimization. For Pythia-1.4B, one rate--distortion operating point requires 2.3 GPU-hours, while a five-point sweep requires 11.5 GPU-hours. Alignment accounts for approximately $10.8\%$ of the wall-clock encoding cost, whereas predictor and entropy optimization account for the remaining $89.2\%$. This confirms that the scalability bottleneck is primarily the offline codec-optimization stage, not the permutation solver. The same trend holds for OPT-1.3B, ViT-B/16, and Swin-T.

\paragraph{Comparison with zero-shot PTQ.}
Zero-shot post-training quantization methods are faster to prepare because they do not learn a sequential predictor, do not optimize an entropy model, and do not encode residuals with a rate--distortion objective. MCWC is therefore not intended to dominate PTQ in one-time preparation latency. Instead, it targets a different point in the deployment trade-off: a heavier offline encoder is exchanged for a smaller serialized representation and improved rate--quality behavior at aggressive compression rates. This distinction is important because the relevant cost for MCWC is amortized over repeated checkpoint storage, transfer, and materialization events, whereas the preparation cost of PTQ is lower but the resulting representation may require more bits at a matched quality level.

\paragraph{Scaling with model width.}
The main alignment-scaling concern is the dependence on $B_{\ell,t}$, the number of matched blocks. Exact Hungarian matching scales cubically as $\mathcal{O}(B_{\ell,t}^{3})$ and is therefore avoided as the default solver for wide layers. The default screened assignment scales with the top-$K$ candidate set and is dominated by the batched similarity computation:
\begin{equation}
\mathcal{O}
\left(
\sum_{\ell=2}^{L}
\sum_{t\in\mathcal{T}}
B_{\ell,t}^{2} d_t
\right).
\label{eq:dominant_alignment_cost}
\end{equation}
This cost is quadratic in the number of blocks and linear in the block dimensionality. In practice, it is efficient because the similarity matrices are computed using batched matrix multiplications and because block types are processed independently. Moreover, for attention heads, $B_{\ell,t}$ is usually small; for feed-forward channels, candidate screening reduces the assignment stage from full cubic matching to a much smaller top-$K$ matching problem.

\paragraph{Practical implications.}
The complexity analysis clarifies why MCWC is suitable for checkpoint-level compression rather than latency-critical on-the-fly compression. The encoder performs offline optimization and periodic alignment to obtain a compact bitstream. The decoder, by contrast, only entropy-decodes symbols, dequantizes residuals, applies the layer-sequential predictor, and restores the original coordinate system through inverse permutations. Thus, the expensive part is paid once during checkpoint preparation, while the lightweight part is used whenever the checkpoint is materialized.

\paragraph{Takeaway.}
MCWC has a heavier offline encoding stage than zero-shot PTQ, but this cost is controlled and decomposable. The main experiments use scalable approximate matching rather than Hungarian matching, making the alignment stage a modest fraction of total encoding time. The dominant cost is predictor and entropy-model optimization, which is paid once per checkpoint and amortized over repeated storage, transfer, and deployment. This complexity profile is consistent with the intended role of MCWC as a storage and transport codec for neural network checkpoints.

\section{Ablation Studies}
\label{sec:ablations}

\paragraph{Ablation protocol.}
Ablations isolate the contribution of functional alignment, layer-sequential prediction, and entropy modeling under a fixed evaluation protocol. Results are reported for Pythia-1.4B on WikiText-103. Each variant produces a deployable bitstream and reconstructs weights for evaluation. The reported rate is bits per model parameter, the task metric is perplexity, and decode time measures end-to-end reconstruction of deployable weights from the serialized representation. A residual entropy proxy is reported as the average number of bits per residual symbol under the learned entropy model, computed on the residual-code stream produced by each variant.

\paragraph{Default configuration vs.\ stronger ablation variants.}
Table~\pinkref{tab:ablations_more} shows two variants that slightly improve over the
default configuration used in the main results: (1) \emph{residual-energy alignment} improves PPL at the same bitrate, and
(2) \emph{delta-coded permutation side information} reduces bitrate at essentially
the same PPL.
We keep the default MCWC configuration in the main results for clarity and
reproducibility: it matches the simplest version of the method described in the
main sections (similarity-based alignment and per-layer
permutation entropy coding) and avoids introducing additional encoder-side
complexity or bitstream dependencies.
In particular, residual-energy alignment couples the matching objective to the
current predictor state (and typically requires extra alignment/predictor
iterations during encoding), while delta-coding permutations introduces
cross-layer dependencies in the permutation stream that reduce random-access and
parallel decoding flexibility.
Both improvements are \emph{compatible} with MCWC and can be enabled when these
trade-offs are acceptable; we report them explicitly as optional upgrades and
they do not change the qualitative conclusions.

\paragraph{Alignment and prediction reduce residual entropy.}
Table~\pinkref{tab:ablations} shows that removing alignment increases the residual entropy proxy and worsens perplexity at comparable rates. The no-alignment variant increases the entropy proxy from $2.05$ to $2.42$ bits/symbol and increases perplexity from $15.95$ to $16.55$, indicating that depth-wise predictability depends on placing functionally corresponding blocks into a consistent coordinate system. Random alignment further degrades both entropy and perplexity, reflecting that the gain is not explained by permutation itself but by choosing permutations that preserve correspondences. Removing the predictor similarly increases the entropy proxy and perplexity, showing that alignment alone is insufficient; prediction removes the smooth component of weight evolution across depth and leaves a lower-entropy residual stream.

\paragraph{Entropy modeling improves rate at fixed distortion.}
Table~\pinkref{tab:ablations} shows that removing the entropy model increases bits per model parameter while leaving perplexity essentially unchanged. This behavior separates representational distortion from coding efficiency: the residuals after alignment and prediction remain similar, but fixed-length coding fails to exploit their non-uniform distribution. The result indicates that the learned entropy model is responsible for a substantial portion of the storage reduction at a given reconstruction quality.

\paragraph{Keyframes balance overhead and drift.}
Table~\pinkref{tab:ablations} reports a sweep over keyframe intervals. Small intervals reduce drift and slightly improve perplexity, but increase bits per model parameter due to more frequent absolute-coded layers. Large intervals reduce rate but increase perplexity due to accumulated prediction drift, reflected by a mild increase in the residual entropy proxy. The operating point $K=4$ provides the best trade-off in Table~\pinkref{tab:ablations}, achieving $4.2$ bits per model parameter with perplexity $15.95$.

\paragraph{Clarifying ``No alignment'' vs.\ ``No predictor (direct coding)''.}
The ablations in Table~\pinkref{tab:ablations} modify \emph{different} components of the pipeline and are therefore not directly ordered by ``more/less processing.''
In particular, \textbf{No alignment} disables the residual-energy alignment objective, \emph{but keeps} the motion-compensated predictor and entropy model unchanged. That is, the codec still predicts intermediate layers from keyframes and codes only the prediction residuals, but uses the original (unaligned) layer ordering.
In contrast, \textbf{No predictor (direct coding)} disables the predictor entirely by setting the prediction operator to the identity (i.e., coding weights directly rather than coding inter-layer residuals), while \emph{retaining} the same alignment and entropy-coding machinery used in the full system.

This distinction explains why \textbf{No alignment} can outperform \textbf{No predictor} at the same bitrate: the predictor remains active in the former and still provides a meaningful reduction in distortion at a fixed rate, even without alignment. The comparison therefore does \emph{not} imply that alignment is harmful without prediction, but rather that prediction provides a larger gain than alignment alone at this operating point.

\paragraph{Why direct coding can degrade under the alignment objective.}
Residual-energy alignment is optimized to improve predictability by re-ordering depth such that inter-layer residual energy is concentrated in a way that benefits motion compensation and entropy modeling of the \emph{predicted residual stream}. When prediction is removed (direct coding), this same alignment can become suboptimal because it may increase the heterogeneity of the directly-coded tensors (e.g., shifting energy into fewer layers/channels), which slightly worsens quantization distortion and/or index entropy under the fixed-rate budget. This effect is reflected in the higher residual entropy proxy for \textbf{No predictor} (2.50) compared to \textbf{No alignment} (2.42) at 4.2 bits per model parameter.

\begin{table}[htbp]
\centering
\caption{Ablation summary on Pythia-1.4B (WikiText-103). The residual entropy proxy is the average bits per residual symbol under the learned entropy model evaluated on the residual-code stream produced by each variant.}
\label{tab:ablations}
\small
\resizebox{0.9\linewidth}{!}{
\begin{tabular}{lrrrr}
\toprule
Variant & bits per model parameter$\downarrow$ & PPL$\downarrow$ & Residual entropy$\downarrow$ & Dec.\ time (s)$\downarrow$ \\
\midrule
Full (MCWC, default $K=4$)                & 4.2 & 15.95 & 2.05 & 2.5 \\
\rowcolor{black!5}
\quad + Residual-energy alignment & 4.2 & \textbf{15.85} & 1.98 & 2.6 \\
\rowcolor{black!5}
\quad + Permutations delta-coded across depth + entropy coded      & \textbf{4.1} & 15.95 & 2.05 & 2.5 \\
\midrule
No alignment                      & 4.2 & 16.55 & 2.42 & 2.4 \\
Random alignment                  & 4.2 & 16.80 & 2.55 & 2.4 \\
No predictor (direct coding)      & 4.2 & 16.70 & 2.50 & 2.3 \\
No entropy model (fixed-length)   & 5.1 & 15.97 & 2.05 & 2.4 \\
\midrule
Keyframes $K=2$                   & 4.6 & 15.80 & 1.95 & 2.7 \\
Keyframes $K=4$                   & 4.2 & 15.95 & 2.05 & 2.5 \\
Keyframes $K=8$                   & 3.9 & 16.10 & 2.12 & 2.4 \\
Keyframes $K=16$                  & 3.6 & 16.35 & 2.18 & 2.3 \\
\bottomrule
\end{tabular}
}
\end{table}

\paragraph{Residual distributions concentrate after alignment and prediction.}
Figure~\pinkref{fig:residual_entropy} compares residual-magnitude distributions for three settings: no alignment and no prediction, alignment only, and alignment with prediction. The distribution shifts mass toward smaller residual magnitudes after alignment, and shifts further after adding prediction. This concentration corresponds to lower entropy and shorter expected codelengths, consistent with the reductions in the residual entropy proxy reported in Table~\pinkref{tab:ablations}.
\begin{figure}[!t]
\centering
\includegraphics[width=0.7\columnwidth]{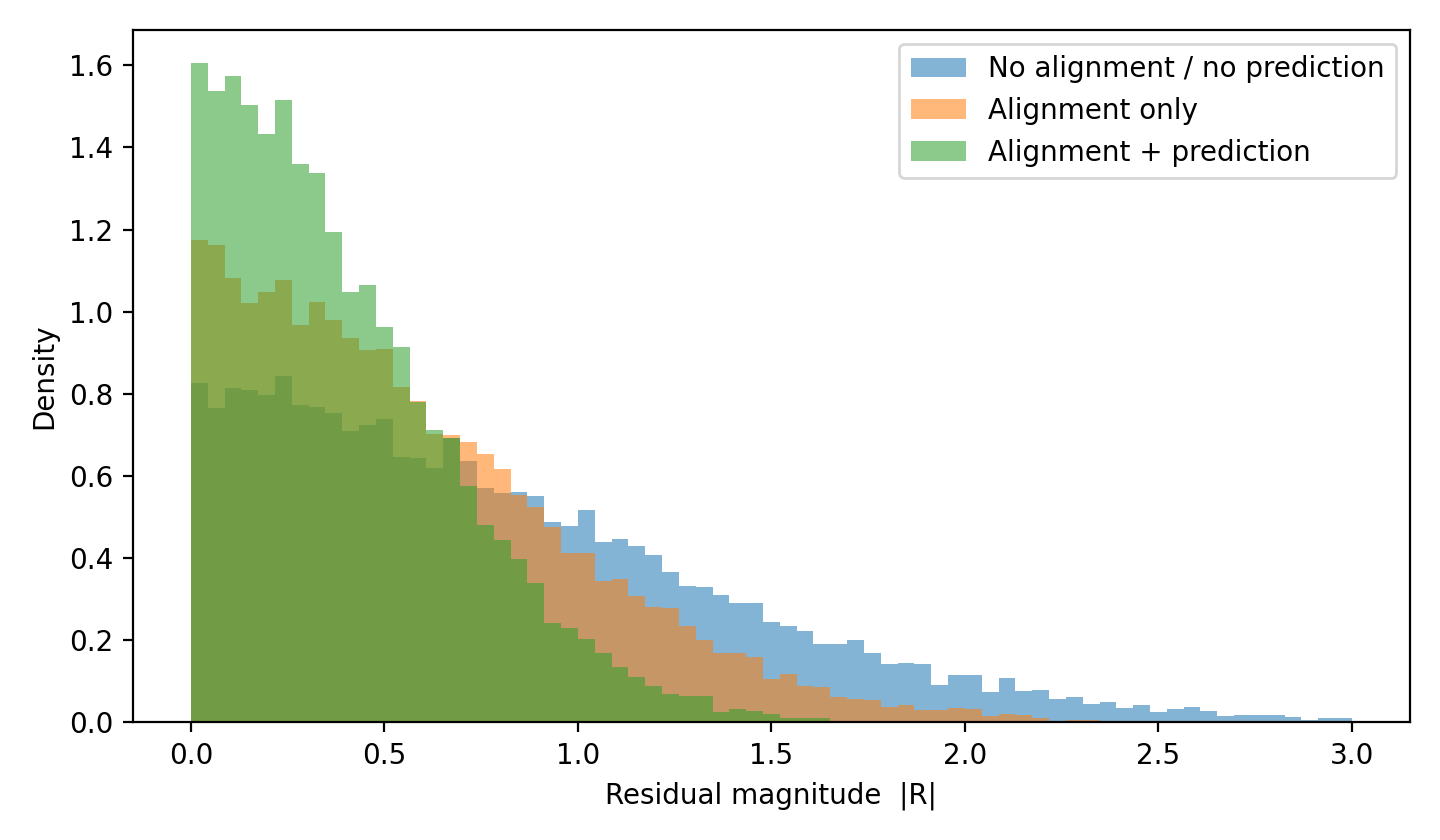}
\caption{Residual magnitude distributions for representative variants. Alignment shifts residuals toward smaller magnitudes, and adding prediction further concentrates the distribution, which corresponds to lower entropy and improved compressibility.}
\label{fig:residual_entropy}
\end{figure}

\paragraph{Alignment objective and similarity design.}
Table~\pinkref{tab:ablations_more} shows that the hybrid similarity improves compressibility relative to using weights only or activations only. Weight-only similarity increases residual entropy because blocks that are functionally similar but parameter-rotated are more likely to be mismatched, whereas activation-only similarity is sensitive to calibration noise and can underutilize geometric structure in weights. A residual-energy objective yields lower residual entropy than pure similarity maximization, indicating that alignment is best optimized for the downstream coding objective rather than for correspondence alone. Restricting alignment to FFN channels only leaves attention heads unaligned and increases residual entropy, which supports the claim that multiple symmetric substructures contribute to depth-wise predictability.

\paragraph{Alignment solver approximations.}
Exact assignment via Hungarian matching provides the strongest residual entropy reduction in Table~\pinkref{tab:ablations_more}, but screened greedy matching remains close at substantially lower alignment cost. Coarsening blocks into larger groups reduces permutation overhead but increases residual entropy because correspondences become less precise. This behavior supports using fine-grained block partitions with an approximate solver when runtime is constrained.

\begin{table}[htbp]
\centering
\caption{Expanded ablations on Pythia-1.4B (WikiText-103) near the 4.2 bits per model parameter regime. ``Residual entropy'' is the average bits per residual symbol under the learned entropy model evaluated on the residual stream produced by each variant. Shaded rows highlight optional upgrades that can improve either distortion (PPL) or rate (bits per model parameter) over the default configuration.}
\label{tab:ablations_more}
\small
\resizebox{1\linewidth}{!}{
\begin{tabular}{lrrrrr}
\toprule
Variant & bits per model parameter$\downarrow$ & PPL$\downarrow$ & Residual entropy$\downarrow$ & Perm.\ overhead (\% bits)$\downarrow$ & Dec.\ time (s)$\downarrow$ \\
\midrule
Full (MCWC, default $K=4$) & 4.2 & 15.95 & 2.05 & 5.4 & 2.5 \\
\rowcolor{black!5}
Residual-energy alignment & 4.2 & \textbf{15.85} & \textbf{1.98} & 5.4 & 2.6 \\
\rowcolor{black!5}
Permutations delta-coded across depth + entropy coded & \textbf{4.1} & 15.95 & 2.05 & \textbf{3.8} & 2.5 \\
\midrule
\multicolumn{6}{l}{\textbf{Alignment objective and similarity}} \\
Weight-only similarity ($\alpha=1$) & 4.2 & 16.20 & 2.18 & 5.4 & 2.5 \\
Activation-only similarity ($\alpha=0$) & 4.2 & 16.30 & 2.22 & 5.4 & 2.6 \\
FFN-only alignment (no head alignment) & 4.2 & 16.12 & 2.15 & 3.1 & 2.5 \\
Head-only alignment (no FFN alignment) & 4.2 & 16.28 & 2.23 & 2.2 & 2.5 \\
Coarse blocks (merge pairs of blocks) & 4.2 & 16.35 & 2.26 & 3.0 & 2.4 \\
\midrule
\multicolumn{6}{l}{\textbf{Alignment solver}} \\
Screened greedy ($K_{\mathrm{cand}}=16$) + refinement & 4.2 & 16.00 & 2.08 & 5.4 & 2.3 \\
Screened greedy ($K_{\mathrm{cand}}=8$) (no refinement) & 4.2 & 16.10 & 2.12 & 5.4 & 2.2 \\
Random tie-breaking sensitivity (mean over 5 runs) & 4.2 & 16.02 & 2.09 & 5.4 & 2.5 \\
\midrule
\multicolumn{6}{l}{\textbf{Predictor design}} \\
Linear predictor (affine map, no hidden layer) & 4.2 & 16.55 & 2.40 & 5.4 & 2.3 \\
MLP predictor (width 256) without embeddings ($e_{\ell},e_t$ removed) & 4.2 & 16.18 & 2.20 & 5.4 & 2.4 \\
MLP predictor (width 256) with embeddings (full) & 4.2 & 15.95 & 2.05 & 5.4 & 2.5 \\
Smaller predictor (width 128) with embeddings & 4.2 & 16.03 & 2.09 & 5.4 & 2.4 \\
Two-step context (condition on $\tilde{U}_{\ell-1}$ and $\tilde{U}_{\ell-2}$) & 4.2 & 15.88 & 2.00 & 5.4 & 2.6 \\
No keyframe embeddings (use only $e_t$) & 4.2 & 16.12 & 2.15 & 5.4 & 2.5 \\
\midrule
\multicolumn{6}{l}{\textbf{Quantizer and entropy model}} \\
Shared step sizes per block type (no per-layer $s_{\ell,t}$) & 4.2 & 16.10 & 2.12 & 5.4 & 2.5 \\
Groupwise SQ (share $s_{\ell,t}$ across 8 channels) & 4.2 & 16.22 & 2.20 & 5.4 & 2.5 \\
Residual VQ (codebook size 1024, 2-stage RVQ) & 4.2 & 15.92 & 2.02 & 5.4 & 2.8 \\
Entropy model without context (IID, no conditioning) & 4.9 & 15.96 & 2.05 & 5.4 & 2.4 \\
Entropy model with layer+type context (full) & 4.2 & 15.95 & 2.05 & 5.4 & 2.5 \\
\midrule
\multicolumn{6}{l}{\textbf{Side-information coding}} \\
Permutations fixed-length coded (no entropy coding) & 4.5 & 15.95 & 2.05 & 10.6 & 2.5 \\
No permutation transmission (recompute at decode from weights) & 4.2 & 15.95 & 2.05 & 0.0 & 9.8 \\
\bottomrule
\end{tabular}}
\end{table}

\paragraph{Predictor capacity and conditioning.}
Table~\pinkref{tab:ablations_more} indicates that a linear predictor is insufficient once compression is aggressive, raising residual entropy and perplexity. A small MLP improves both, and adding layer and block-type embeddings further reduces residual entropy by allowing the predictor to specialize across depth and module type. Conditioning on two previous layers provides a modest additional reduction in entropy, consistent with the view that aligned weights evolve smoothly but still contain low-frequency trends spanning several layers. Removing embeddings harms performance more than shrinking width by a small factor, suggesting that conditioning structure is more important than raw capacity at this operating point.

\paragraph{Quantizer and entropy model variants.}
Scalar quantization with per-channel step sizes provides a strong baseline, but grouping channels into shared step sizes increases entropy and harms perplexity due to suboptimal resolution allocation. A simple residual vector quantizer reduces residual entropy slightly but increases decode time, reflecting the cost of codebook lookup and accumulation. Entropy modeling is most effective when it conditions on layer index, block type, and a lightweight summary of predicted magnitude; removing context increases bits per model parameter at nearly unchanged perplexity, separating coding efficiency from representational distortion, consistent with the observation already made from Table~\pinkref{tab:ablations}.

\paragraph{Side information and permutation coding.}
Table~\pinkref{tab:ablations_more} shows that transmitting permutations naïvely with fixed-length coding increases total rate measurably, while delta-coding permutations across depth reduces overhead because permutations change slowly after alignment stabilizes. Sharing quantizer step sizes within a block type across multiple layers reduces side information but slightly worsens perplexity due to reduced adaptivity. These results clarify when side information is negligible and when it becomes rate-limiting.

\section{Architecture Scope and Restricted-Symmetry Cases}
\label{app:architecture_scope}

This appendix clarifies the architectural scope of MCWC and analyzes cases in
which permutation symmetries are restricted or partially unavailable (see Figure~\pinkref{fig:architecture_scope} and Table~\pinkref{tab:architecture_scope}). MCWC is
expected to be most effective for architectures with repeated homogeneous
blocks and admissible internal permutation symmetries, such as standard
Transformer feed-forward channels, multi-head attention heads, and repeated
vision Transformer blocks~\citep{vaswani2017attention,dosovitskiy2021vit}.
Its benefit may be reduced for architectures with restricted head sharing,
heterogeneous stages, cross-head mixing, or mixture-of-experts routing, such
as GQA/MQA decoders, hierarchical vision models, Talking-Heads attention, and
sparse MoE models~\citep{ainslie2023gqa,shazeer2019fast,liu2021swin,shazeer2020talkingheads,shazeer2017outrageously,fedus2022switch,jiang2024mixtral},
unless alignment is restricted to symmetry-preserving subspaces. This section
makes these constraints explicit and defines how MCWC should be applied in such
settings.

\paragraph{General principle.}
MCWC aligns only blocks whose permutation preserves the represented function
when the corresponding inverse permutation is applied to the consuming
operators. This principle is directly aligned with the repeated block structure
of Transformer-style architectures~\citep{vaswani2017attention}, decoder-only
language models such as GPT, Pythia, OPT, LLaMA, and Mistral
~\citep{radford2019language,biderman2023pythia,zhang2022opt,touvron2023llama2,jiang2023mistral},
encoder or encoder--decoder models such as BERT, RoBERTa, and T5
~\citep{devlin2019bert,liu2019roberta,raffel2020exploring}, and repeated
vision backbones such as ViT, DeiT, Swin Transformer, ConvNeXt, and MLP-Mixer
~\citep{dosovitskiy2021vit,touvron2021deit,liu2021swin,liu2022convnext,tolstikhin2021mlpmixer}.
Let $z_{\ell,t}^{(i)}$ denote an intermediate block of type $t$ at layer
$\ell$, and let $\Pi_{\ell,t}$ denote a permutation over compatible blocks. A
permutation is admissible only when the reparameterized block structure
satisfies
\begin{equation}
f(x;W)
=
f(x;\Pi_{\ell,t}(W))
\label{eq:admissible_symmetry_condition}
\end{equation}
up to the corresponding inverse reindexing of downstream tensors. Therefore,
MCWC does not assume that every tensor dimension can be freely permuted. It
uses only the subset of architectural axes for which the permutation is
function-preserving.

\begin{figure}[htbp]
\centering
\resizebox{0.92\linewidth}{!}{
\begin{tikzpicture}[
    font=\scriptsize,
    node distance=9mm and 10mm,
    box/.style={
        rectangle,
        rounded corners=2pt,
        draw=black,
        thick,
        align=center,
        inner sep=4pt,
        minimum height=8mm,
        fill=white
    },
    good/.style={box, fill=green!12},
    mid/.style={box, fill=yellow!18},
    hard/.style={box, fill=red!10},
    arrow/.style={->, thick}
]

\node[box, fill=blue!10] (mcwc) {MCWC alignment\\candidate axes};

\node[good, below left=of mcwc, xshift=-24mm] (hom) {Repeated homogeneous\\Transformer blocks};
\node[good, below=of mcwc] (mha) {Standard MHA\\heads and FFN channels};
\node[mid, below right=of mcwc, xshift=24mm] (gqa) {GQA / MQA\\restricted head sharing};

\node[mid, below=of hom] (vision) {Hierarchical vision\\stages};
\node[hard, below=of mha] (mix) {Head-mixing or\\cross-head coupling};
\node[hard, below=of gqa] (moe) {MoE routing and\\expert specialization};

\node[good, below=of vision] (full) {Large admissible\\permutation space};
\node[mid, below=of mix] (restricted) {Restricted\\symmetry subspace};
\node[hard, below=of moe] (limited) {Architecture-specific\\alignment only};

\draw[arrow] (mcwc) -- (hom);
\draw[arrow] (mcwc) -- (mha);
\draw[arrow] (mcwc) -- (gqa);
\draw[arrow] (hom) -- (vision);
\draw[arrow] (mha) -- (mix);
\draw[arrow] (gqa) -- (moe);
\draw[arrow] (vision) -- (full);
\draw[arrow] (mix) -- (restricted);
\draw[arrow] (moe) -- (limited);

\end{tikzpicture}
}
\caption{
Architectural scope of MCWC. The method is most directly applicable to repeated
homogeneous blocks with large admissible permutation spaces, such as standard
Transformer feed-forward channels and independent attention heads
~\citep{vaswani2017attention}, as well as ViT-like repeated vision blocks
~\citep{dosovitskiy2021vit,touvron2021deit}. Its applicability becomes more
constrained when architectures introduce grouped head sharing, heterogeneous
stages, explicit cross-head mixing, or mixture-of-experts routing
~\citep{ainslie2023gqa,shazeer2019fast,liu2021swin,shazeer2020talkingheads,shazeer2017outrageously,fedus2022switch,jiang2024mixtral}.
In these cases, MCWC remains applicable only after restricting alignment to
symmetry-preserving subspaces, rather than applying unconstrained permutations
to all heads, channels, or experts.
}
\label{fig:architecture_scope}
\end{figure}

\paragraph{Standard multi-head attention.}
In standard multi-head attention with $H$ independent heads
~\citep{vaswani2017attention}, MCWC can permute attention heads when the query,
key, value, and output projection tensors are reindexed consistently. This
setting covers standard Transformer blocks used in many GPT-style, Pythia-style,
OPT-style, BERT-style, RoBERTa-style, T5-style, and ViT-style models
~\citep{radford2019language,biderman2023pythia,zhang2022opt,devlin2019bert,liu2019roberta,raffel2020exploring,dosovitskiy2021vit}.
Let the head permutation be $\pi\in\mathfrak{S}_{H}$. The permutation is
admissible if the corresponding head slices of $W_Q$, $W_K$, and $W_V$ are
permuted by $\pi$, and the matching slices of the output projection $W_O$ are
permuted by $\pi^{-1}$. In this case, the order of heads changes internally,
but the concatenated multi-head output after the compensating output projection
remains functionally equivalent.

\begin{equation}
\left(
W_Q^{(h)},W_K^{(h)},W_V^{(h)}
\right)_{h=1}^{H}
\mapsto
\left(
W_Q^{(\pi(h))},W_K^{(\pi(h))},W_V^{(\pi(h))}
\right)_{h=1}^{H},
\qquad
W_O
\mapsto
\Pi^{-1}W_O.
\label{eq:mha_head_permutation}
\end{equation}
This is the most favorable attention setting for MCWC because the admissible
head-permutation space is large and the same type of block repeats across
layers.

\paragraph{Grouped-query attention.}
Grouped-query attention restricts the attention symmetry because multiple query
heads share a smaller number of key/value heads~\citep{ainslie2023gqa}. This
structure is used in modern efficient decoder models, including LLaMA-style and
Mistral-style architectures~\citep{touvron2023llama2,jiang2023mistral}. Let
$H_Q$ denote the number of query heads and $H_{KV}$ denote the number of
key/value heads, with $H_Q>H_{KV}$. Each key/value head is associated with a
group of query heads. Therefore, arbitrary permutations of all query heads are
not generally function-preserving unless the query-to-key/value grouping is
preserved.

Let $G_r\subset\{1,\ldots,H_Q\}$ denote the set of query heads assigned to
key/value group $r$, where $r\in\{1,\ldots,H_{KV}\}$. An admissible GQA
permutation must satisfy
\begin{equation}
\pi_Q(G_r)=G_{\pi_{KV}(r)}
\quad
\text{for all}
\quad
r\in\{1,\ldots,H_{KV}\}.
\label{eq:gqa_constraint}
\end{equation}
This means that MCWC may permute complete key/value groups and may also permute
query heads within a group when the implementation preserves the same
query-to-key/value assignment. However, it should not freely match a query head
from one group to a key/value head from another group unless the corresponding
routing of shared keys and values is also reindexed consistently.

\paragraph{Multi-query attention.}
Multi-query attention is more restrictive than GQA because all query heads share
a single key/value pair~\citep{shazeer2019fast}. In this case, the key/value
axis does not provide an independent head-permutation space. MCWC can still
align query-head-associated parameters when the output projection is
compensated, but it should not assume that each query head has an independently
permutable key and value head. Thus, the admissible attention alignment in MQA
is restricted to query-side permutations and output-side compensation:
\begin{equation}
\pi_Q \in \mathfrak{S}_{H_Q},
\qquad
W_O \mapsto \Pi_Q^{-1}W_O,
\qquad
W_K,W_V \ \text{shared and not independently permuted}.
\label{eq:mqa_constraint}
\end{equation}
Consequently, MCWC is expected to obtain smaller attention-head alignment gains
in MQA than in standard MHA, while feed-forward channel alignment remains
available.

\paragraph{Head-mixing modules.}
Some architectures introduce explicit cross-head mixing, talking-head
attention, head-wise gating, or learned interactions between attention heads
~\citep{shazeer2020talkingheads}. In these cases, head permutations are
admissible only if the mixing operator is also conjugated by the same
permutation. Let $M$ denote a head-mixing matrix. A head permutation $\Pi$ is
function-preserving only if
\begin{equation}
M
\mapsto
\Pi M \Pi^{-1}.
\label{eq:head_mixing_constraint}
\end{equation}
If the implementation does not expose or reparameterize the head-mixing module
in this way, MCWC disables head-level alignment for that component and applies
alignment only to other admissible axes, such as feed-forward hidden channels.

\paragraph{Mixture-of-experts architectures.}
In mixture-of-experts architectures, experts may be permutable within a layer
only when the router logits, expert outputs, and expert indices are reindexed
consistently~\citep{shazeer2017outrageously,fedus2022switch,jiang2024mixtral}.
Let $E$ denote the number of experts and let $\Pi_E$ be an expert permutation.
Expert permutation is admissible only if the router output and the expert bank
are permuted together:
\begin{equation}
\mathrm{Router}(x)
\mapsto
\Pi_E\,\mathrm{Router}(x),
\qquad
\left\{F_e\right\}_{e=1}^{E}
\mapsto
\left\{F_{\pi_E(e)}\right\}_{e=1}^{E}.
\label{eq:moe_expert_constraint}
\end{equation}
However, cross-layer expert alignment is more delicate than standard FFN
alignment because experts may specialize differently across layers and routing
patterns may not define a stable depth-wise trajectory. For MoE models, MCWC
should therefore align only experts with compatible routing statistics or apply
expert-wise compression independently within each layer. Full MoE compression
is an important extension but requires router-aware alignment and is not treated
as a default assumption of MCWC.

\paragraph{Heterogeneous and hierarchical vision architectures.}
Hierarchical vision models, such as stage-based Transformers or convolutional
hybrids, often change width, resolution, or block structure across stages
~\citep{liu2021swin,liu2022convnext}. MCWC should not align blocks across
stage boundaries when the block dimensions or computational roles are not
comparable. Instead, alignment is performed within homogeneous stages. For
ViT-like and DeiT-like architectures, repeated Transformer blocks provide a
more direct setting for attention-head and MLP-channel alignment
~\citep{dosovitskiy2021vit,touvron2021deit}, while MLP-Mixer requires separating
token-mixing and channel-mixing subspaces~\citep{tolstikhin2021mlpmixer}:
\begin{equation}
\mathcal{A}
=
\bigcup_{s=1}^{S}
\left\{
(\ell-1,\ell):
\ell-1,\ell\in \mathcal{S}_s
\right\},
\label{eq:stage_restricted_alignment}
\end{equation}
where $\mathcal{S}_s$ denotes the set of layers belonging to stage $s$, and
$\mathcal{A}$ is the set of adjacent layer pairs eligible for alignment. This
stage-restricted rule prevents MCWC from forcing correspondences between
architecturally incompatible blocks.

\begin{table}[!t]
\centering
\caption{
Architecture-scope analysis for MCWC. The table lists common architecture
families, the main admissible symmetry axes, the restricted components that
require care, and the expected suitability of MCWC. This table is not a
performance table; it defines when the assumptions behind functional alignment
are structurally valid. The listed architecture families cover standard
Transformers~\citep{vaswani2017attention}, GPT-style decoders
~\citep{radford2019language}, Pythia and OPT models
~\citep{biderman2023pythia,zhang2022opt}, BERT/RoBERTa/T5 models
~\citep{devlin2019bert,liu2019roberta,raffel2020exploring}, LLaMA and
Mistral-style efficient decoders~\citep{touvron2023llama2,jiang2023mistral},
GQA/MQA variants~\citep{ainslie2023gqa,shazeer2019fast}, vision architectures
~\citep{dosovitskiy2021vit,touvron2021deit,liu2021swin,liu2022convnext,tolstikhin2021mlpmixer},
Talking-Heads attention~\citep{shazeer2020talkingheads}, and sparse MoE models
~\citep{shazeer2017outrageously,fedus2022switch,jiang2024mixtral}.
}
\label{tab:architecture_scope}
\scriptsize
\renewcommand{\arraystretch}{1.18}
\setlength{\tabcolsep}{3.5pt}
\resizebox{\linewidth}{!}{
\begin{tabular}{p{3.0cm}p{3.7cm}p{4.3cm}p{3.2cm}}
\toprule
\textbf{Architecture family}
&
\textbf{Admissible alignment axes}
&
\textbf{Restricted or nontrivial components}
&
\textbf{MCWC suitability}
\\
\midrule
GPT-style Transformer with MHA~\citep{radford2019language,vaswani2017attention}
&
FFN hidden channels; attention heads.
&
Requires consistent permutation of $Q,K,V$ head slices and inverse output
projection compensation.
&
High.
\\
\midrule
Pythia-style Transformer~\citep{biderman2023pythia}
&
Repeated decoder blocks; FFN channels; MHA heads.
&
Primarily standard Transformer symmetry constraints.
&
High.
\\
\midrule
OPT-style Transformer~\citep{zhang2022opt}
&
Repeated decoder blocks; FFN channels; MHA heads.
&
LayerNorm and residual paths must remain fixed; only internal compatible axes
are permuted.
&
High.
\\
\midrule
BERT/RoBERTa-style encoder~\citep{devlin2019bert,liu2019roberta}
&
Encoder FFN channels; MHA heads.
&
Bidirectional attention does not prevent head permutation, but all projection
slices must be compensated consistently.
&
High.
\\
\midrule
T5-style encoder--decoder~\citep{raffel2020exploring}
&
FFN channels; self-attention heads; cross-attention heads.
&
Self-attention and cross-attention should be aligned separately because they
play different functional roles.
&
Medium--high.
\\
\midrule
LLaMA-style GQA decoder~\citep{touvron2023llama2,ainslie2023gqa}
&
FFN channels; key/value groups; query heads within compatible groups.
&
Arbitrary query-head permutations are restricted by grouped key/value sharing.
&
Medium.
\\
\midrule
Mistral-style GQA decoder~\citep{jiang2023mistral,ainslie2023gqa}
&
FFN channels; grouped attention subspaces.
&
Sliding-window attention and GQA restrict unconstrained attention-head matching.
&
Medium.
\\
\midrule
MQA decoder~\citep{shazeer2019fast}
&
FFN channels; query-side head-associated slices.
&
Shared key/value projections reduce independent attention-head permutation
freedom.
&
Medium--low for attention; high for FFN.
\\
\midrule
ViT-B/16~\citep{dosovitskiy2021vit}
&
Repeated Transformer blocks; MHA heads; MLP hidden channels.
&
Patch embedding and classification head are not depth-sequential repeated
blocks.
&
High.
\\
\midrule
DeiT-B/16~\citep{touvron2021deit}
&
ViT-like MHA heads and MLP hidden channels.
&
Distillation token and classification heads should not be treated as generic
depth-aligned blocks.
&
High.
\\
\midrule
Swin-T~\citep{liu2021swin}
&
Within-stage attention/MLP blocks.
&
Stage changes, window shifts, and resolution changes restrict cross-stage
alignment.
&
Medium.
\\
\midrule
ConvNeXt-T~\citep{liu2022convnext}
&
Channel groups within repeated convolutional stages.
&
Depthwise convolution and stage transitions restrict direct Transformer-style
head alignment.
&
Medium.
\\
\midrule
MLP-Mixer~\citep{tolstikhin2021mlpmixer}
&
Token-mixing and channel-mixing MLP hidden dimensions.
&
Token-mixing and channel-mixing blocks should be aligned separately.
&
Medium.
\\
\midrule
Talking-Heads Transformer~\citep{shazeer2020talkingheads}
&
FFN channels; heads only if head-mixing matrices are compensated.
&
Cross-head mixing requires conjugating the mixing matrix by the same
permutation.
&
Restricted.
\\
\midrule
Switch Transformer / MoE~\citep{shazeer2017outrageously,fedus2022switch}
&
Experts within a layer; expert FFN channels.
&
Router logits and expert indices must be permuted consistently; cross-layer
expert matching requires router-aware alignment.
&
Restricted.
\\
\midrule
Mixtral-style sparse MoE~\citep{jiang2024mixtral}
&
Expert-internal FFN channels; compatible experts.
&
Sparse routing and expert specialization limit simple depth-wise expert
alignment.
&
Restricted.
\\
\bottomrule
\end{tabular}
}
\end{table}

\paragraph{Architecture stress-test protocol.}
To evaluate MCWC on architectures with different symmetry structures, we use
the following diagnostic protocol (see Table~\pinkref{tab:architecture_stress_protocol}). For each architecture, we identify the
admissible alignment axes, compute the fraction of parameters that belong to
alignable subspaces, and measure the change in cross-layer predictability after
alignment. This protocol separates architectural applicability from final
compression performance and is applicable across language, vision, and sparse
expert architectures~\citep{vaswani2017attention,radford2019language,biderman2023pythia,zhang2022opt,devlin2019bert,liu2019roberta,raffel2020exploring,dosovitskiy2021vit,touvron2021deit,liu2021swin,liu2022convnext,tolstikhin2021mlpmixer,shazeer2017outrageously,fedus2022switch,jiang2024mixtral}.

\begin{equation}
\mathrm{AlignableFraction}
=
\frac{
N_{\mathrm{alignable}}
}{
N_{\mathrm{param}}
},
\label{eq:alignable_fraction}
\end{equation}
where $N_{\mathrm{alignable}}$ is the number of scalar parameters that belong
to tensors with admissible permutation symmetries, and $N_{\mathrm{param}}$ is
the total parameter count of the checkpoint.

For each architecture, we then compute the alignment gain:
\begin{equation}
\Delta_{\mathrm{cos}}
=
\mathrm{CosSim}_{\mathrm{after}}
-
\mathrm{CosSim}_{\mathrm{before}},
\label{eq:cosine_gain_arch}
\end{equation}
and the normalized residual-energy reduction:
\begin{equation}
\Delta_{\mathrm{NRE}}
=
\frac{
\mathrm{NRE}_{\mathrm{before}}
-
\mathrm{NRE}_{\mathrm{after}}
}{
\mathrm{NRE}_{\mathrm{before}}
}.
\label{eq:nre_reduction_arch}
\end{equation}
These diagnostics test whether an architecture provides enough repeated,
alignable structure for MCWC to expose predictable depth-wise residuals.

\begin{table}[!t]
\centering
\caption{
Recommended architecture stress-test matrix. For each architecture, the
diagnostic records whether MCWC has access to large homogeneous repeated
subspaces, whether attention or expert symmetries are restricted, and which
alignment rule should be used. This matrix covers standard MHA Transformers
~\citep{vaswani2017attention}, GPT/Pythia/OPT/BERT/RoBERTa/T5-style models
~\citep{radford2019language,biderman2023pythia,zhang2022opt,devlin2019bert,liu2019roberta,raffel2020exploring},
GQA/MQA models~\citep{ainslie2023gqa,shazeer2019fast}, vision architectures
~\citep{dosovitskiy2021vit,touvron2021deit,liu2021swin,liu2022convnext,tolstikhin2021mlpmixer},
head-mixing attention~\citep{shazeer2020talkingheads}, and sparse MoE models
~\citep{shazeer2017outrageously,fedus2022switch,jiang2024mixtral}.
Architectures with restricted symmetries are treated with restricted alignment
rather than unconstrained block matching.
}
\label{tab:architecture_stress_protocol}
\scriptsize
\renewcommand{\arraystretch}{1.15}
\setlength{\tabcolsep}{3.5pt}
\resizebox{\linewidth}{!}{
\begin{tabular}{p{3.0cm}p{2.6cm}p{3.2cm}p{4.4cm}}
\toprule
\textbf{Architecture}
&
\textbf{Symmetry level}
&
\textbf{Primary diagnostic}
&
\textbf{MCWC rule}
\\
\midrule
GPT-style MHA~\citep{radford2019language,vaswani2017attention}
&
Large
&
FFN/head cosine gain; residual reduction.
&
Unrestricted FFN and standard head alignment.
\\
Pythia-style~\citep{biderman2023pythia}
&
Large
&
Depth-wise FFN/head predictability.
&
Standard MCWC alignment.
\\
OPT-style~\citep{zhang2022opt}
&
Large
&
Depth-wise FFN/head predictability.
&
Standard MCWC alignment.
\\
BERT/RoBERTa-style~\citep{devlin2019bert,liu2019roberta}
&
Large
&
Encoder-block alignment gain.
&
Separate FFN and attention-head alignment.
\\
T5-style~\citep{raffel2020exploring}
&
Moderate--large
&
Self-attention vs cross-attention predictability.
&
Separate self-attention and cross-attention alignment.
\\
LLaMA-style GQA~\citep{touvron2023llama2,ainslie2023gqa}
&
Restricted
&
Group-preserving head alignment gain.
&
Permute only compatible GQA groups.
\\
Mistral-style GQA~\citep{jiang2023mistral,ainslie2023gqa}
&
Restricted
&
Within-group query/head residual reduction.
&
Group-preserving attention alignment.
\\
MQA decoder~\citep{shazeer2019fast}
&
Restricted
&
Query-side gain; FFN gain.
&
Do not permute shared $K,V$ as independent heads.
\\
ViT-B/16~\citep{dosovitskiy2021vit}
&
Large
&
Block-wise MLP/head predictability.
&
Standard MCWC alignment for repeated blocks.
\\
DeiT-B/16~\citep{touvron2021deit}
&
Large
&
MLP/head predictability excluding heads.
&
Exclude distillation/classification heads.
\\
Swin-T~\citep{liu2021swin}
&
Moderate
&
Within-stage residual reduction.
&
Stage-restricted alignment only.
\\
ConvNeXt-T~\citep{liu2022convnext}
&
Moderate
&
Within-stage channel-group predictability.
&
Channel-group alignment within homogeneous stages.
\\
MLP-Mixer~\citep{tolstikhin2021mlpmixer}
&
Moderate
&
Token-mixing vs channel-mixing predictability.
&
Separate token and channel MLP alignment.
\\
Talking-Heads Transformer~\citep{shazeer2020talkingheads}
&
Restricted
&
Effect of head-mixing compensation.
&
Permute heads only if mixing matrices are conjugated.
\\
Switch/MoE~\citep{shazeer2017outrageously,fedus2022switch}
&
Restricted
&
Router-compatible expert similarity.
&
Router-aware expert alignment or expert-local compression.
\\
Mixtral-style MoE~\citep{jiang2024mixtral}
&
Restricted
&
Expert specialization and routing overlap.
&
Restrict alignment to compatible experts.
\\
\bottomrule
\end{tabular}
}
\end{table}

\paragraph{Limitations for weak-symmetry architectures.}
The analysis above clarifies a limitation of MCWC. The method should not be
interpreted as assuming universal permutation freedom across all neural
architectures. Its strongest regime is repeated homogeneous architectures with
large internal permutation spaces, as in standard Transformer, GPT/Pythia/OPT,
BERT/RoBERTa/T5, and ViT/DeiT-style models
~\citep{vaswani2017attention,radford2019language,biderman2023pythia,zhang2022opt,devlin2019bert,liu2019roberta,raffel2020exploring,dosovitskiy2021vit,touvron2021deit}.
For GQA and MQA models, attention-head alignment must respect shared key/value
structure~\citep{ainslie2023gqa,shazeer2019fast}. For head-mixing models, head
permutations require consistent reparameterization of the mixing operator
~\citep{shazeer2020talkingheads}. For MoE models, expert permutations require
router-aware compensation, and cross-layer expert alignment may be unreliable
when experts specialize differently across depth
~\citep{shazeer2017outrageously,fedus2022switch,jiang2024mixtral}. For
hierarchical vision models, alignment should be restricted to homogeneous
stages rather than applied across stage boundaries
~\citep{liu2021swin,liu2022convnext}.

\paragraph{Takeaway.}
MCWC is broadly applicable as a checkpoint codec, but its alignment component
must be constrained by the symmetries actually present in the architecture.
When large repeated homogeneous blocks are available, MCWC can exploit
depth-wise predictability through functional alignment and residual coding.
When symmetries are restricted, MCWC remains applicable only after limiting the
alignment search to symmetry-preserving subspaces. This architecture-aware
formulation prevents invalid permutations and explains why the largest gains
are expected for standard repeated Transformer and ViT-like models, while
GQA/MQA, MoE, head-mixing, and strongly heterogeneous architectures require
more careful treatment~\citep{vaswani2017attention,dosovitskiy2021vit,ainslie2023gqa,shazeer2019fast,shazeer2020talkingheads,shazeer2017outrageously,fedus2022switch,jiang2024mixtral}.

\section{Additional Related Work on Symmetry-Aware and Cross-Layer Weight Compression}
\label{app:additional_related_work}

Recent work has begun to study neural network weights as structured objects
whose symmetries and cross-layer regularities can be exploited for storage.
Bits-back coding for rotational symmetries in LLMs shows that some weight-space
degrees of freedom are redundant and can be used to recover storage bits without
changing model behavior~\citep{he2024freebits}. This direction is related to
MCWC because both methods exploit function-preserving reparameterizations, but
they differ in the symmetry group and coding mechanism: bits-back rotational
coding targets continuous rotational symmetries, whereas MCWC uses
permutation-based motion compensation and predictive residual coding across
depth.

Cross-layer sparse dictionary learning methods such as CoSpaDi represent LLM
weights through calibration-guided sparse dictionary factors rather than
layer-independent low-rank approximations~\citep{shopkhoev2025cospadi}. This
is related to MCWC in that both methods exploit structure beyond naive
per-tensor quantization. However, CoSpaDi focuses on sparse factorization of
weight matrices using dictionary atoms, whereas MCWC constructs an aligned
coordinate system and then codes layer-to-layer residuals with an entropy model.

Neural Weight Compression trains learned codecs directly on language-model
weights, treating model parameters as a compression source distinct from images
or activations~\citep{ryu2025nwc}. MCWC shares the view that model weights are a
specialized data modality requiring dedicated codecs, but differs by explicitly
using function-preserving alignment, keyframes, and depth-wise predictive
coding. Earlier work on inter prediction and linear transformation for neural
network compression also explored predictive coding across weight tensors
~\citep{lee2021interprediction}. MCWC extends this general idea by introducing
permutation-based motion compensation, learned entropy coding, and an explicit
rate--distortion objective for checkpoint-level compression.

\section{Discussion,  Limitations and Future Work}
\label{sec:discussion_limitations}

\subsection{Discussion}
\label{sec:discussion}

The central empirical message of MCWC is that pretrained networks should not be viewed only as collections of independent tensors. Standard layerwise quantization and pruning methods compress each layer mostly in isolation, which is effective for reducing local numerical precision but leaves depth-wise redundancy largely unused. MCWC instead treats the ordered sequence of layers as a structured signal. After aligning permutation-symmetric blocks, adjacent layers become more comparable, and the decoder can predict many blocks from previously decoded context. This interpretation explains the behavior observed in Table~\pinkref{tab:pareto_points} and Figure~\pinkref{fig:rd_curve}: the advantage of MCWC becomes especially visible in the mid- and low-rate regimes, where independent quantization has fewer bits available per tensor and therefore accumulates larger distortion. In contrast, MCWC spends bits primarily on information that is difficult to predict, namely keyframes, residuals, and compact side information.

A key lesson from the method is that prediction becomes effective only when the layer sequence is expressed in a consistent coordinate system. Permutation symmetries create many functionally equivalent parameterizations of the same model, but some of these parameterizations make cross-layer correspondence easier to model than others. MCWC uses this freedom to choose blockwise permutations that reduce apparent motion in weight space. The alignment diagnostics in Appendix~\ref{app:alignment_diagnostics} and the ablations in Table~\pinkref{tab:ablations} support this mechanism: when alignment is removed or replaced by random alignment, residual energy increases and the entropy model receives a less regular symbol stream. This shows that the compression gain does not come only from adding an entropy coder after quantization; it comes from changing the representation so that predictive residual coding becomes statistically easier.

MCWC obtains its storage gains from the interaction of alignment, prediction, and entropy modeling. Alignment reduces mismatch between corresponding blocks across depth. Prediction removes the smooth component of layer-to-layer variation using decoded context, which matches the information available at deployment time. Entropy coding then assigns shorter codes to residual symbols that occur frequently under the learned probability model. These components are complementary rather than interchangeable. If prediction is removed, aligned layers still contain redundancy, but the codec cannot exploit it as directly. If entropy modeling is removed, residuals may be numerically small but are not encoded at a rate close to their empirical distribution. If keyframes are removed or spaced too far apart, reconstruction errors can propagate through the sequential decoder. The full method is therefore best understood as a checkpoint-level predictive codec rather than as a modified quantizer.

MCWC is designed for settings in which model checkpoints must be stored, transferred, replicated, or repeatedly materialized. It is not intended to replace low-precision matrix multiplication kernels or to directly accelerate every inference step. The offline encoder can be more expensive than simple post-training quantization because it must compute alignments, optimize predictor and entropy components, quantize residuals, and serialize a complete bitstream. This cost is justified when the compressed artifact is reused many times, for example in model hubs, serverless cold starts, distributed checkpoint loading, or bandwidth-limited deployment. Appendix~\ref{app:deployment_regime} describes this amortization view: the encoding cost is paid once, while storage and transfer savings accumulate across repeated uses. After decoding, the reconstructed weights can still be passed to standard inference systems or combined with hardware-aware quantization pipelines.

A practical checkpoint codec must report the size of everything needed for reconstruction, not only the entropy of the main symbol stream. MCWC therefore accounts for residual and keyframe codes, permutation side information, quantizer parameters, headers, metadata, and coding-mode information. This is important because side information can be non-negligible, especially at low bitrates. The comparison with pruning-based and learned scalar-quantization baselines should therefore be read as a comparison between complete serialized representations. Under this accounting, MCWC remains competitive because the reduction in residual entropy is large enough to compensate for the additional metadata introduced by alignment and prediction. This full-bitstream perspective also highlights where further improvements are most likely to matter: cheaper permutation coding, adaptive keyframe placement, and more compact quantizer-side information.

The language and vision experiments suggest that the MCWC principle applies beyond a single task type, since both Transformer language models and ViT classifiers contain repeated blocks with comparable internal structure. However, the strength of the method is tied to how much valid cross-layer correspondence exists after respecting the architecture's symmetries. Models with homogeneous repeated layers are naturally suited to the approach, while models with heterogeneous stages, strong head mixing, mixture-of-experts routing, or restricted attention layouts may require more careful alignment groups. This does not change the core idea, but it shifts part of the burden from the generic codec to architecture-aware symmetry identification. In this sense, MCWC opens a design direction: future compression systems can treat neural weights as structured trajectories through depth rather than as isolated arrays.

\subsection{Limitations}
\label{sec:limitations}
$\triangleright$ MCWC has a heavier offline encoder than zero-shot PTQ because it performs
alignment, predictor/entropy-model optimization, residual quantization, and
bitstream serialization. This cost is acceptable when the compressed checkpoint
is reused across many deployments, downloads, or cache materializations, but it
is less attractive for a one-off compression setting with abundant storage and
bandwidth. Appendix~\pinkref{app:encoding_complexity} quantifies this cost, and
Appendix~\pinkref{app:deployment_regime} shows how it is amortized in repeated
deployment regimes.

$\triangleright$ MCWC is most effective when the architecture contains repeated homogeneous
blocks with valid function-preserving permutation symmetries, such as standard
Transformer FFN channels, independent attention heads, and ViT-like repeated
blocks. Architectures with restricted symmetries, including GQA/MQA, explicit
head mixing, heterogeneous hierarchical stages, or MoE routing, require
alignment to be restricted to symmetry-preserving subspaces. These cases do not
invalidate MCWC, but they may reduce the alignment gain or require
architecture-aware alignment rules, as detailed in Appendix~\pinkref{app:architecture_scope}.

$\triangleright$ MCWC transmits side information for permutations, quantizer parameters,
keyframes, and metadata, and non-keyframe layers are decoded from previously
reconstructed context. These design choices are central to the compression
gain, but they introduce trade-offs: permutation metadata can become more
important at very low rates or very large widths, and layer-sequential
prediction can reduce fully independent layer-wise loading. MCWC mitigates
these issues with compact permutation coding and periodic keyframes, which
create independently decodable segments; Appendices~\pinkref{app:permutation_side_info}
and~\pinkref{app:sequential_loading} provide the detailed accounting and loading
analysis.

\subsection{Future directions}
\label{sec:future_directions}

Future work can improve MCWC by making its design choices more adaptive and hardware-aware. A first direction is rate-aware alignment, where the encoder jointly considers residual reduction and permutation side-information cost instead of optimizing alignment only for similarity or residual energy. A second direction is adaptive keyframe scheduling, where keyframes are placed according to prediction uncertainty, layer sensitivity, or distributed-loading constraints rather than using a fixed interval. A third direction is extending the symmetry analysis to architectures with restricted or conditional structure, including GQA, MQA, MoE routing, multimodal fusion blocks, and heterogeneous stages. Finally, practical deployment would benefit from faster decoding kernels, better integration with inference-time quantization, and standardized checkpoint formats that support predictive coding while preserving compatibility with existing model-serving systems.


\end{document}